\documentclass[11pt]{article}

\usepackage[sort,numbers,compress]{natbib}

\usepackage[margin=1in]{geometry}

\usepackage[utf8]{inputenc} 
\usepackage[T1]{fontenc}    
\usepackage{hyperref}       
\usepackage{url}            
\usepackage{booktabs}       
\usepackage{amsfonts}       
\usepackage{nicefrac}       
\usepackage{microtype}      
\usepackage{xcolor}         

\usepackage{wrapfig}  
\usepackage{graphicx}
\usepackage{subcaption}

\usepackage[ruled, noend, linesnumbered]{algorithm2e}
\SetKwRepeat{Do}{do}{while}
\usepackage{amsmath}
\usepackage{amssymb}
\usepackage{amsthm}
\usepackage{hyperref}
\usepackage{enumitem}
\usepackage[capitalise]{cleveref}
\usepackage{csquotes}

\newtheorem{lemma}{Lemma}[section]
\newtheorem{theorem}[lemma]{Theorem}

\newtheorem{definition}[lemma]{Definition}
\newtheorem{corollary}[lemma]{Corollary}
\newtheorem{proposition}[lemma]{Proposition}

\newtheorem{observation}{Observation}

\newcommand{\cA}{\mathcal{A}}
\newcommand{\cB}{\mathcal{B}}
\newcommand{\cC}{\mathcal{C}}
\newcommand{\cD}{\mathcal{D}}
\newcommand{\cE}{\mathcal{E}}
\newcommand{\cF}{\mathcal{F}}
\newcommand{\cG}{\mathcal{G}}
\newcommand{\cH}{\mathcal{H}}
\newcommand{\cI}{\mathcal{I}}

\newcommand{\cL}{\mathcal{L}}
\newcommand{\cM}{\mathcal{M}}
\newcommand{\cN}{\mathcal{N}}

\newcommand{\cS}{\mathcal{S}}
\newcommand{\cT}{\mathcal{T}}

\newcommand{\cX}{\mathcal{X}}

\newcommand{\cZ}{\mathcal{Z}}

\newcommand{\opt}{\mathrm{OPT}}

\newcommand{\E}{\mathbb{E}}
\newcommand{\R}{\mathbb{R}}
\newcommand{\N}{\mathbb{N}}

\newcommand{\bfo}{\mathbf{1}}

\DeclareMathOperator{\poly}{poly}
\DeclareMathOperator{\err}{error}

\DeclareMathOperator{\cost}{{\sf Cost}}

\DeclareMathOperator{\regret}{{\sf Regret}}
\DeclareMathOperator{\last}{{\sf Last}}
\DeclareMathOperator{\new}{{\sf New}}
\DeclareMathOperator{\true}{{\sf True}}
\DeclareMathOperator{\false}{{\sf False}}

\newcommand*{\tl}[1]{\textcolor{blue}{[{\sf TL:} #1}]}
\newcommand*{\gb}[1]{\textcolor{brown}{[{\sf GB:} #1]}}
\newcommand*{\rd}[1]{\textcolor{red}{[{\sf RD:} #1}]}
\renewcommand{\tl}[1]{}
\renewcommand{\gb}[1]{}
\renewcommand{\rd}[1]{}

\newcommand{\B}{\mathcal{B}}
\newcommand{\I}{\mathcal{I}}
\renewcommand{\cS}{\mathcal{S}}
\newcommand{\bone}{\textsf{1}}

\title{Offline Local Search for Online Stochastic Bandits}

\author{Gerdus Benad\`{e}\thanks{Boston University, MA, USA. \texttt{benade@bu.edu}.} 
\and Rathish Das\thanks{University of Houston, TX, USA. \texttt{rathish@uh.edu}.} 
\and Thomas Lavastida\thanks{University of Texas at Dallas, TX, USA. \texttt{thomas.lavastida@utdallas.edu}.}}

\date{}

\begin{document}

\maketitle

\begin{abstract} 
    Combinatorial multi-armed bandits provide a fundamental online decision-making environment where a decision-maker interacts with an environment across $T$ time steps, each time selecting an action and learning the cost of that action. 
    The goal is to minimize regret, defined as the loss compared to the optimal fixed action in hindsight under full-information. 
    There has been substantial interest in leveraging what is known about offline algorithm design in this online setting. 
    Offline greedy and  linear optimization algorithms (both exact and approximate) have been shown to provide useful guarantees when deployed online. 
    We investigate local search methods, a broad class of  algorithms used widely in both theory and practice, which have thus far been under-explored in this context. 
    We focus on problems where offline local search terminates in an approximately optimal solution and give a generic method for converting such an offline algorithm into an online   stochastic combinatorial bandit algorithm with  $O(\log^3 T)$ (approximate) regret.  
    In contrast, existing  offline-to-online frameworks yield regret (and approximate regret) which depend sub-linearly, but polynomially on $T$. 
    We demonstrate the flexibility of our framework by applying it to three  online stochastic combinatorial optimization problems: scheduling to minimize total completion time, finding a minimum cost base of a matroid and uncertain clustering.
\end{abstract} 


\section{Introduction} \label{sec:intro}

Online learning is a cornerstone of modern algorithm design. 
The bandit setting, in which the learner only receives cost feedback,  epitomizes the trade-off between exploration and exploitation \citep{DBLP:journals/ml/AuerCF02, DBLP:journals/siamcomp/AuerCFS02, LaiRobbins85}.
Here,  an algorithm makes a sequence of decisions $x_1, x_2, \ldots, x_T$ over a series of $T$ discrete time periods. 
In response to each decision $x_t$, the environment generates a cost $c_t(x_t)$ (either adversarially~\citep{DBLP:journals/siamcomp/AuerCFS02} or stochastically~\citep{DBLP:journals/ml/AuerCF02}), and the algorithm's objective is to minimize its cumulative cost across all $T$ periods. 
As feedback to improve future decisions, the algorithm observes each cost $c_t(x_t)$, which is known as \emph{bandit feedback}.
Notably, the algorithm does not observe what the cost would have been if, instead, a different action was chosen at each period, necessitating a trade-off between exploring the action space to find promising solutions and exploiting previously discovered low-cost   solutions.
\emph{Regret}  measures  the difference between the cumulative cost achieved by an algorithm and that of the best fixed solution under full information. 
A common objective is to construct algorithms with regret that is sub-linear in $T$, i.e., $o(T)$. 

In the combinatorial bandit setting \citep{DBLP:journals/mor/AudibertBL14, DBLP:conf/icml/ChenWY13, DBLP:conf/nips/CombesSPL15},  
the learner must additionally navigate an exponentially large space of actions, rendering some approaches computationally infeasible.
There is substantial interest in developing general frameworks for designing online  algorithms, 
especially frameworks that  convert offline algorithms (in either a black-  or white-box fashion) into   effective bandit algorithms; see, for example, \citet{DBLP:journals/jcss/KalaiV05, DBLP:journals/siamcomp/KakadeKL09, DBLP:journals/jacm/DuditeHLSSV20}, \citet{NiazadehGWSB_MS23}, and \citet{DBLP:conf/focs/AgarwalGN24}.
Such frameworks allow us to leverage the wealth  of knowledge about offline algorithms  
to construct online learning algorithms.  
Notable techniques for designing offline algorithms that have proven to be successful in  `offline-to-online' frameworks  include greedy algorithms \citep{FouratiQAA24, NiazadehGWSB_MS23, PerraultPV19} and offline linear optimization in both exact~\citep{DBLP:journals/jcss/KalaiV05} and approximate~\citep{DBLP:journals/jcss/KalaiV05, DBLP:journals/siamcomp/KakadeKL09} settings.
These all utilize sophisticated algorithm design and analysis techniques to convert the offline algorithm to an effective bandit learning algorithm with regret (or $\gamma$-regret~\citep{DBLP:journals/siamcomp/KakadeKL09,glasgow2023tight, NiazadehGWSB_MS23}) which scales sub-linearly in $T$, more specifically, with regret which scales  as $T^\rho$  for  some $\rho \in (0,1)$ (usually $\rho = 1/2$ or $\rho = 2/3$). 

The focus of our paper is local search ---  another technique  widely used in the design of offline algorithms.  
A local search algorithm consists of three components: a set of feasible solutions $\cX$, a neighborhood map $\cN: \cX \to 2^{\cX}$, and a cost function $\cost:\cX \to \R$.
Starting from an initial feasible solution $x_0 \in \cX$, local search methods iteratively set $x_{t+1} = \arg\min_{x' \in \cN(x_t)} \cost(x')$, for $t = 1,2,\ldots$ (for a minimization problem), continuing until a locally optimal solution, i.e., a solution $x \in \cX$ such that $\cost(x) \leq \cost(x')$ for all $x' \in \cN(x)$, is found.
Often, to increase efficiency, a local move is accepted only if it improves upon the current solution by a $\beta$-factor, for some $\beta \in (0,1)$,  
and we instead converge to an approximate locally optimal solution.
This method is more formally described in \cref{alg:approx_local_search}.
By setting $\beta$ sufficiently close to 1, the resulting local search method can often be shown to converge in a polynomial number of iterations.   
 
\begin{algorithm}[t] 
    \caption{Generic Local Search\label{alg:approx_local_search}}
    \SetKwProg{proc}{Procedure}{:}{}
    \SetKwComment{Comment}{//}{}
    \SetKwFunction{local}{LocalSearch}
    \SetKw{Break}{break}
    \DontPrintSemicolon
    \KwData{Feasible set $\cX$, neighborhood map $\cN$, cost function $\cost$, and parameter $\beta \in (0,1)$}
    \KwResult{Approximate locally optimal solution}
    \proc{\local{$\cX$, $\cN$, $\cost$}}{
        $x_0 \gets$ any feasible solution from $\cX$\;
        $t \gets 0$\;
        \While{$\true$}{
            $x_{t+1} \gets \arg\min_{x' \in \cN(x_t)} \cost(x')$\;
            \eIf{$\cost(x_{t+1}) \leq \beta \cost(x_t)$}{
                $t \gets t+1$ \Comment{Move to the next iteration}
            }{
            \Return $x_t$ \Comment{Approximate local optimum found}
            }
        }        
    }
\end{algorithm}

In practice, local search methods are appealing due to their ease of implementation, efficiency, and ability to find good solutions quickly~\citep{benoist2011randomized,musliu2004local,ceschia2013local}.
Theoretical approximation guarantees  have also been established for local search methods for a variety of problems \cite{GharanT14,IwamaMY07, BogdanSS013, Chen08, AhmadianFS13, Cohen-AddadGHOS22, arya2001local}.
Despite this widespread use in offline algorithm design, local search has been under-explored as a framework for designing combinatorial bandit algorithms. 
In fact, as far as we are aware, the connection between local search and bandit algorithms has been in the opposite direction ---
several papers use multi-armed bandit algorithms to improve the performance of local search algorithms in practical settings~\citep{DBLP:journals/eor/LagosP24, DBLP:conf/pkdd/YuKM17, DBLP:conf/ecai/Mengshoel0Z20, DBLP:conf/ijcai/Zheng0Z0LM22}.
Thus, the main question we study in this paper is:  

\begin{displayquote}
    \emph{
    Can we convert a good offline local search algorithm into an effective online bandit learning algorithm?  
    If so, what regret guarantees can be achieved using this framework?
    }
\end{displayquote}
 
\subsection{Our Approach and Results} \label{sec:contrib}
Our main contribution is the development of a general framework for converting an offline local search algorithm to an online stochastic bandit algorithm.   
It requires that the local search neighborhood must admit $(\beta, \gamma)$-improving moves which,  informally, means that the neighborhood of a solution $x$ with $\cost(x) > \gamma \cost(x^*)$,  where $x^*$ is an optimal solution, must contain a solution $x'$ which is a $\beta$-factor improvement over $x$, in other words, $\cost(x') \leq \beta \cost(x)$. In particular, it is straightforward to verify that any problem on which \Cref{alg:approx_local_search} terminates in an approximately optimal solution permits $(\beta, \gamma)$-improving moves for appropriate choices of $\beta$ and $\gamma$.
Our main result is that local search algorithms with $(\beta, \gamma)$-improving neighbourhoods can be used online to guarantee $\gamma$-regret with $O(\log^3{T})$ dependence on $T$.

\begin{theorem}[Informal version of \Cref{thm:gamma_regret_main_result}] \label{thm:informal}
Suppose a problem  
admits $(\beta, \gamma)$-improving moves. Then we can construct an  algorithm for the stochastic online variant with bandit feedback with  
    \[
    O\left( {M\cdot C_{\max}\cdot \poly(\beta^{-1}) \cdot \log^3 T} \right)  \ \ \gamma\text{-regret after $T$ rounds,}  
    \] 
    where  
    $C_{\max} = \max_{x \in \cX, z \in \cZ} \cost(x, z)$ is an upper bound on solution costs and $M = \max_{x \in \cX} |\cN(x)|$ is the largest neighborhood of a feasible solution. 
\end{theorem}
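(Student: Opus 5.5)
The plan is to simulate the offline local search online, in epochs. Each epoch has a current incumbent $x$. Within an epoch we estimate the expected costs $\mu(x') = \E_z[\cost(x',z)]$ for every $x'$ in $\cN(x)\cup\{x\}$ by repeatedly playing these solutions. Once the estimates are accurate enough, we either move to a neighbor that certifies an improvement of roughly a $\beta$ factor, or we conclude that no such improvement exists. In the second case, the $(\beta,\gamma)$-improving property gives $\mu(x) \le \gamma\,\mu(x^*)$, and from then on we mostly exploit $x$.

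Because the number of epochs cannot be known in advance (it depends on $\mu(x_0)/\mu(x^*)$, which could be large), I would use a doubling or restart schedule. Phases have lengths $2^k$. Within each phase the algorithm runs local search with accuracy targets that shrink with $k$, so the total is at most $O(\log T)$ phases.

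\textbf{Step 1 (estimation).} Sample each of the at most $M+1$ candidates $n = O(C_{\max}^2\epsilon^{-2}\log T)$ times. By Hoeffding plus a union bound over rounds and candidates, every estimate is then $\epsilon$-accurate with probability $1-T^{-2}$. I will use a multiplicative accuracy target: $\epsilon$ proportional to $(1-\beta)\mu(x)$, obtained adaptively by halving $\epsilon$ until the confidence interval is a constant fraction of the estimate. A move is accepted only if $\hat\mu(x') + \epsilon \le \beta' (\hat\mu(x)-\epsilon)$ for some slightly relaxed $\beta' \in (\beta,1)$. This ensures that every accepted move is a true improvement by a factor of $\beta'$. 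It also ensures that if a true $\beta$-improvement exists, it is detected.

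\textbf{Step 2 (bounding the number of moves).} Each accepted move multiplies $\mu$ by at most $\beta'$. Costs lie in $[\mu_{\min}, C_{\max}]$, and after truncating the relevant range at $C_{\max}/T$ this gives $O(\log_{1/\beta'} T) = O(\poly(\beta^{-1})\log T)$ moves per phase.

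\textbf{Step 3 (regret charging).} In an exploration round inside an epoch, the $\gamma$-regret per play is at most $C_{\max}$. The number of such rounds per epoch is $O(M\,C_{\max}\,\poly(\beta^{-1})\log T)$ after normalizing. The key point is that the multiplicative accuracy makes $\epsilon^{-2}$ scale like $\mu(x)^{-2}(1-\beta)^{-2}$, while the per-play regret is at most about $\mu(x)$ plus neighbor costs. The product therefore has to be handled with the normalization by $C_{\max}$; I expect to pay one $C_{\max}$ overall.

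Multiplying the three factors gives the bound: $O(\log T)$ moves per phase, times $O(M \log T)$ samples per move, times $O(\log T)$ phases, which is $O(M C_{\max}\poly(\beta^{-1})\log^3 T)$. Failure events contribute $T\cdot C_{\max}\cdot T^{-2}$, which is negligible. Exploitation rounds at a certified approximate local optimum contribute nonpositive $\gamma$-regret, since $\mu(x)\le\gamma\mu(x^*)$ up to the $\epsilon$ slack. That slack is absorbed by using $\beta'$ in the improving-moves condition.

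\textbf{Main obstacle.} The hardest step is the certification when no improving move exists but $x$ is only marginally worse than $\gamma\mu(x^*)$. Neighbors near the threshold $\beta\mu(x)$ could require unboundedly many samples to distinguish. I would handle this with a gap argument: stop estimating at accuracy $\epsilon \approx (1-\beta)\mu(x)/4$. If no neighbor clears the relaxed test, then no neighbor is a $\beta''$-improvement for some $\beta''$ with $\poly(\beta^{-1})$ loss. I then invoke the improving-moves property at this slightly weaker parameter. This is where I expect the precise definition in \Cref{thm:gamma_regret_main_result} to matter, and it is the likely source of the $\poly(\beta^{-1})$ factor.
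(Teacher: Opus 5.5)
There is a genuine gap, and it is in Step 3, which is where the entire difficulty of the theorem lies. Your claims that an epoch uses $O(M\,C_{\max}\poly(\beta^{-1})\log T)$ exploration rounds, and that you ``pay one $C_{\max}$ overall'', do not follow from Step 1. They are in fact false for the scheme you describe. With Hoeffding and target $\epsilon\asymp(1-\beta)\mu(x)$, each of the $M+1$ candidates is played $n\asymp C_{\max}^2\log T/((1-\beta)^2\mu(x)^2)$ times. This grows like $\mu(x)^{-2}$ and is not bounded by any function of $M$, $C_{\max}$, $\beta$ and $\log T$. Your final tally of ``$O(M\log T)$ samples per move'' simply drops this factor. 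Concretely, suppose $\opt$ is negligible and the incumbent reaches $\mu(x)\approx C_{\max}T^{-1/3}$. This happens within $T$ rounds on a chain of solutions with costs $C_{\max}\beta^{i}$, each also adjacent to a solution of cost $C_{\max}$. Then the plays of $x$ alone cost about $C_{\max}T^{1/3}\log T/(1-\beta)^2$ in $\gamma$-regret. The cost-$C_{\max}$ neighbor, played equally often, costs $\Theta(C_{\max}T^{2/3}\log T/(1-\beta)^2)$.

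Two ideas are missing, and the paper's algorithm is built around them.

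\emph{Variance-sensitive concentration.} Since $\cost(x,z)\in[0,C_{\max}]$, the multiplicative Chernoff bound (\cref{lem:concentration}) separates mean $\le\alpha^2\theta$ from mean $\ge\alpha\theta$ with only $N\asymp C_{\max}\log(MT)/(\delta^2\alpha^2\theta)$ samples. A block of plays at level $\theta$ therefore costs $\theta N=O(C_{\max}\log(MT)/(\delta^2\alpha^2))$, independent of $\theta$. This, not a normalization, is where the single $C_{\max}$ comes from.

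\emph{Successive elimination of neighbors.} Even with that sample size, playing every neighbor $N_\ell$ times would cost up to $C_{\max}N_\ell\asymp C_{\max}^2\log(MT)/(\delta^2\alpha^2\theta_\ell)$. That is of order $C_{\max}\sqrt{T}$ once $\theta_\ell\approx C_{\max}/\sqrt{T}$. The paper instead tests each neighbor in sub-phases $\ell'=1,\dots,\ell$, against the decreasing thresholds $\theta_{\ell'}$ with $N_{\ell'}$ samples. A neighbor with cost in $[\alpha\theta_{\ell'},\theta_{\ell'}]$ is then discarded at sub-phase $\ell'$, except with total probability $T^{-4}$ (\cref{lem:bad_neigbor_elim}). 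Every sub-phase $\ell''$ in which it is played costs at most $\theta_{\ell''}N_{\ell''}=O(C_{\max}\log(MT)/(\delta^2\alpha^2))$. Phase $\ell$ then costs $O(M\ell\,C_{\max}\log(MT)/(\delta^2\alpha^2))$.

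The number of phases is $L\le\log T/\log(1/\alpha)$, simply because $\sum_\ell N_\ell\le T$ and $N_\ell\propto\alpha^{-\ell}$ (\cref{lem:phase_bound}). So no doubling or restarts are needed, and $\sum_{\ell\le L}\ell\asymp L^2$ supplies two of the three logarithms.

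Finally, the step you flag as the main obstacle is not one. With a gap between the two thresholds (the paper's $\alpha^2\theta_\ell$ versus $\alpha\theta_\ell$ with $\alpha=\sqrt\beta$, or your $\beta$ versus $\beta'$), a ``no move'' outcome already certifies that no neighbor has cost at most $\beta\cost(x_\ell)$. The improving-moves property then applies at exactly the given $(\beta,\gamma)$. The $\poly(\beta^{-1})$ factor comes from $\delta^{-2}\alpha^{-2}\log^{-2}(1/\alpha)$ in the sample sizes and phase count, not from reparametrizing the property.
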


We remark that $M$, $C_{\max}$, and $\beta^{-1}$ all can be bounded polynomially in problem specific parameters for each of the applications we consider, and are thus independent of $T$. 
Note that $\beta$ may depend on $\gamma$, so the regret bound may deteriorate as we get closer to exact ($\gamma=1$) regret.
Intuitively, more samples are required to distinguish between solutions with objective function value $\opt$ and $\gamma\cdot\opt$. 
The poly-logarithmic dependence on $T$ is perhaps most surprising. 
While direct comparisons between offline-to-online frameworks is difficult, this is 
a substantially better dependence on $T$ than   
the ${O}(T^{2/3})$ $\gamma$-regret guarantee obtained by \citet{NiazadehGWSB_MS23} via offline greedy algorithms, at the cost of  multiplicative factors that depend on the problem and local search neighborhood.

The resulting algorithm, in essence, attempts to mimic \Cref{alg:approx_local_search} while handling the complications that sprout from the fact that only the stochastically realized cost of the solution submitted in each period is observed. 
First, the cost of a solution must be estimated by sampling for several time periods. 
The algorithm attempts to maintain, with high probability, a solution with expected cost below a threshold, which decreases geometrically over time. 
Whenever the estimated cost of the current solution exceeds this threshold, the solutions in its neighborhood is explored. 
Neighboring solutions must be carefully sampled to ensure that even neighbors with very high costs do not incur large regret, at the same time, we need enough samples to identify $\beta$-improving moves with high probability whenever the current solution is not a $\gamma$-approximation of the optimal solution.
This is done by using a successive-elimination style algorithm \citep{DBLP:conf/colt/Even-DarMM02, DBLP:journals/jmlr/Even-DarMM06} on the neighborhood.
When a $\beta$-improving move is found during exploration it becomes the new current solution. This process repeats until time $T$ or a local optimum is found.  

To round out our results, we apply our framework to the repeated online bandit feedback versions of three problems: scheduling to minimize total completion time under stochastic job sizes, finding a minimum cost base in a matroid under stochastic element costs, and uncertain $k$-median clustering.  

\paragraph{Scheduling to Minimize Total Completion Time:}
When scheduling to minimize total completion time, a feasible solution is a permutation $\pi$ of $n$ jobs which schedules job $j\in [n]$ in position $\pi(j)$. In each time step the size $P_j$ of each job $j$ is drawn from a distribution. Given job sizes $P_1, \ldots, P_n$, the cost of solution $\pi$ is $\cost(\pi, P)  = \sum_{j=1}^n (n - \pi(j) + 1) P_j$, the total makespan. 
We show the neighborhood consisting of swapping a single pair of jobs is $(1-\epsilon/n^2, 1+\epsilon)$-improving. As a result, we have an algorithm for minimizing online total completion time with  $(1+\epsilon)$-regret in the order of $O\left({n^{12} \log^3 T}/{\epsilon^4  }  \right)$. 

\paragraph{Minimum Cost Base in a Matroid:}
For the problem of finding a minimum cost base for a matroid, a feasible solution is a base, or maximal independent set, $B$. Given stochastic realizations of element costs $Z$, the cost of base $B$ is $\cost(B) = \sum_{s\in B} Z_s$. We show the neigborhood which consist of bases constructed by adding an arbitrary element to the current solution and removing an element from the resulting circuit is $(1 - \epsilon/(2r), 1+\epsilon)$-improving, where $r$ is the rank of the matroid. This implies an online algorithm with $(1+\epsilon)$-regret in the order of $O\left( { n r^6 \log^3 T}/{\epsilon^4}  \right)$, where $n$ is the size of the ground set. 
A special case of this result establishes a similar regret bound for the problem of finding  minimum cost spanning trees in a graph.  

\paragraph{Uncertain $k$-Median Clustering:}
In the uncertain $k$-median clustering problem the locations of $n$ points are sampled in a metric space with diameter 1  and the task is to select $k$ cluster  centers from a set of $m$ potential centers to minimize the cost, defined as the sum of distances from each point to its closest cluster center. We use a local search algorithm analysed by \citet{arya2001local} and \citet{cormode2008approximation} to show the single swap neighbourhood is $(1 - 1/n^2, 5(1-1/n))$-improving, implying $5(1-1/n))$-regret which is $O\left({n^{9}m^2\log^3 T}\right)$.   

\subsection{Additional Related Work} \label{sec:rel_work}
 As discussed above, we study a combinatorial bandit setting \citep{DBLP:journals/mor/AudibertBL14, DBLP:conf/icml/ChenWY13, DBLP:conf/nips/CombesSPL15}, and our work is closest to other `offline-to-online' frameworks including \citet{DBLP:journals/jcss/KalaiV05} and  \citet{DBLP:journals/jacm/DuditeHLSSV20}, where it is assumed that the offline version of the problem can be easily solved,  and \citet{NiazadehGWSB_MS23}, which considers problems with robust greedy approximation algorithms satisfying a property called \emph{Blackwell reducibility}. 
We also highlight the recent work of \citet{DBLP:conf/focs/AgarwalGN24} which gives a general framework for online learning in monotone stochastic optimization problems achieving $O(\sqrt{T \log T})$ regret against the best approximation algorithm for the offline problem under known distributions.
Notably, their results hold in the \emph{semi-bandit} setting in which the algorithm observes realizations of some (but not all) of the underlying random variables in addition to the realized cost.
Our results hold for the pure bandit setting where only the cost information is observed.
Next, we briefly touch on other related topics.

\paragraph{Logarithmic Regret in Online Learning:}
Due to the existence of strong lower bounds in many online learning settings \citep{LaiRobbins85, DBLP:journals/ml/AuerCF02},  it is  typically necessary to make additional assumptions concerning the environment, or relax the benchmark, in order to achieve  regret bounds improving beyond the typical $O(T^\rho)$ for some $\rho \in (0,1)$.
For example, in online convex optimization, logarithmic regret is achievable if the sequence of convex functions satisfies \emph{strong convexity} \citep{DBLP:journals/ml/HazanAK07, DBLP:conf/nips/Shalev-ShwartzK08}.
Similarly, logarithmic regret may be possible if certain problem-dependent parameters are bounded appropriately. 
For example, \citet{DBLP:conf/nips/XuW21} achieve logarithmic regret in feature-based dynamic pricing whenever the minimum eigenvalue of a problem-dependent matrix is bounded from below,
and \citet{DBLP:journals/ior/VeraBG21} achieve logarithmic regret for contextual bandits with knapsacks whenever the weight of an item is bounded from below. 

In contrast to assuming additional structure, we  give poly-logarithmic regret guarantees against a relaxed benchmark,   so-called $\gamma$-regret.
Our bound is not problem-dependent, and holds for a wide class of instances as long as the local search neighborhood admits $(\beta, \gamma)$-improving moves.
For settings where the corresponding offline problem is NP-hard, it is necessary to allow a multiplicative approximation factor.
our result is especially interesting in this context,  since we achieve $\gamma$-regret that scales as $O(\log^3 T)$ while prior frameworks including those due to \citet{DBLP:journals/siamcomp/KakadeKL09} and \citet{NiazadehGWSB_MS23}  have $\gamma$-regret which scales as $T^{2/3}$.

\paragraph{Efficient Bandit Algorithms:} 
The exploration strategies specified by many standard approaches to bandit problems can be computationally intractable for large action spaces \citep{lattimore2020bandit}. As a result, there is significant interest in developing algorithms with more efficient implementations \citep{NeuB16, WenKA15, DBLP:conf/icml/YangRS0DS22, PerraultPV19, DBLP:journals/pomacs/CuvelierCG21,DBLP:conf/spaa/BenadeDL25}.
Since our algorithm is based on local search and performs exploration locally, it naturally lends itself to a computationally efficient implementation.

\subsection{Roadmap}

We organize the rest of this paper as follows.
\cref{sec:prelim} formally sets up our model and recalls preliminary results we need for our analysis.
Then we describe our `offline-to-online' algorithm utilizing local search and provide a high-level overview of its analysis in \cref{sec:algorithm}.
Following this, \cref{sec:applications} demonstrates the applicability of our framework on the problems we discussed above. 
\cref{sec:conclusion} concludes the paper and discusses potential directions for future work.

\section{Preliminaries}  \label{sec:prelim}

We now formally define the general problem setting we consider --- stochastic combinatorial bandits with local search.
First, each instance is associated with a set $\cX$ of feasible solutions\footnote{We are mainly interested in the case where $\cX$ is  finite (but potentially large), so minimizing a function over $\cX$ is well-defined.} and for each $x \in \cX$ there is a random non-negative cost associated with it.
We model the randomness as a collection of latent variables taking values in some space $\cZ$, for which there is an unknown distribution $\cD \in \Delta(\cZ)$ over the possible values the latent variables can take.
Thus we can model the cost as a function $\cost: \cX \times \cZ \to \R_+$, which induces a distribution on the realized cost for a fixed $x \in \cX$ as $\cost(x, Z)$, where $Z \sim \cD$.
To streamline our notation, we let $\cost(x) := \E_{Z \sim \cD}[\cost(x, Z)]$ be the expected cost of playing $x \in \cX$ and set $\opt := \min_{x \in \cX} \cost(x)$ to be the minimum expected cost of any feasible solution.

In the online bandit setting, we consider making a sequence of decisions $\{x_t\}_{t=1}^T$, where each $x_t \in \cX$, in order to optimize an objective over a time horizon of $T$ periods in the presence of bandit (cost) feedback.
An algorithm (or policy) $\cA$ is a sequence of maps $\{\cA_t: \cH_{t-1} \to \cX\}_{t=1}^T$ which each takes the observed history $H^{t-1}$ up to period $t-1$ and returns a new solution $x_t$ to be used in period $t$
\footnote{We focus on deterministic algorithms. We can extend to randomized algorithms by letting $A_t$ be a map to $\Delta(\cX)$ and taking an additional expectation over the realized distribution of actions.}.
In the bandit setting, a history up to period $t$ is a sequence $\{X_s, Y_s\}_{s=1}^t$, where $Y_s = \cost(X_s, Z_s)$ is the observed random cost at step $s$ for some independent sample $Z_s \sim \cD$. 
Notice $\{Z_s\}_{s=1}^T$, the realizations of randomness that underlying the cost or reward, is unobserved. 
Formally, histories are random variables over $(\cX \times \R_+)^t$.
We let $\cH^t = (\cX \times \R_+)^t$ be the set of all length $t$ sequences over solution-cost pairs.
For simplicity, we let $\cA_1$ be a constant function since no history has been observed at period 1.
Our main interest is giving algorithms with low $\gamma$-regret, which measures the difference in cumulative expected cost between the algorithm and using a $\gamma$-approximate solution for all $T$ periods, which we can now define formally.

\begin{definition}[$\gamma$-Regret] \label{def:gamma_regret_min}
    For an algorithm $\cA$, latent distribution $\cD$, time horizon $T$, and $\gamma \geq 1$, we define 
    \begin{align} \label{eqn:gamma_regret_min}
        \regret_\gamma(\cA, \cD, T) := \sum_{t=1}^T \E\left[ \cost(X_t, Z_t) \right] - \gamma \cdot T \cdot \opt
    \end{align} 
   to be the $T$-period $\gamma$-regret of algorithm $\cA$ on distribution $\cD$, where the expectation is taken with respect to the independent samples $(Z_1,Z_2, \ldots, Z_T) \sim \cD^T$ and any internal randomness utilized by $\cA$.
   We say that algorithm $\cA$ has $\gamma$-regret $R(T)$ if for all distributions $\cD$, $\regret_\gamma(\cA, \cD, T) \leq R(T)$.
\end{definition}

We remark that if $\gamma=1$, then we recover the standard definition of regret used in stochastic multi-armed bandits.
As discussed in \cref{sec:contrib}, our goal is to find algorithms with $\gamma$-regret whose dependence on $T$ is not just $o(T)$  but is $O(\poly(\log T))$, which is $o(T^{\rho})$ for any $\rho \in (0,1)$.

Local search will allow us to avoid some of  the difficulties that arise with exploring the large action spaces that can arise in stochastic combinatorial bandit problems.  
For a feasible solution space $\cX$,   let $\cN: \cX \to 2^\cX$ be its neighborhood map so that for each $x \in \cX$, $\cN(x)$ is a set of neighboring solutions to $x$.
We assume that $M := \max_{x \in \cX} |\cN(x)|$ is a priori known to the algorithm, which holds in all  the examples we consider.
The key property of a neighborhood map that we require to achieve strong $\gamma$-regret bounds is that any solution which has expected cost worse than $\gamma \opt$ has a neighboring solution which has multiplicative improvement to its cost. 
 
\begin{definition}[$(\beta, \gamma)$-improving moves] \label{def:improving_moves}
    Consider the problem specified by feasible set $\cX$, neighborhood map $\cN$, and expected cost function $\cost:\cX \to \R_+$ as defined above.
    We say the problem admits $(\beta, \gamma)$-improving moves if, for any $x \in \cX$ with $\cost(x) > \gamma \opt$, there exists $x' \in \cN(x)$ with $\cost(x') \leq \beta \cost(x)$.
\end{definition}

The parameters $\beta$ and $\gamma$ can depend on the structure of the problem (e.g., the number of jobs in completion time scheduling) and $\beta$ can depend on $\gamma$ (e.g., if $\gamma = 1+\epsilon$, then $\beta$ can depend on $\epsilon$). A priori, it is perhaps unclear how to establish whether a problem permits $(\beta, \gamma)$-improving moves. Fortunately, there is a straightforward correspondence showing that all  problems where \Cref{alg:approx_local_search} terminates in an approximate solution must permit $(\beta, \gamma)$-improving moves.

 \begin{observation} 
      Given instance $(\mathcal{X}, \mathcal{N}, \cost)$, whenever \Cref{alg:approx_local_search} is guaranteed to terminate in a $\gamma$-approximation  for some choice of $\beta$, then $(\mathcal{X}, \mathcal{N}, \cost)$ permits $(\beta, \gamma)$-improving moves. 
 \end{observation}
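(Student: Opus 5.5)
The plan is a direct contrapositive argument, reading the hypothesis as: for every admissible starting solution $x_0 \in \cX$ (equivalently, whatever feasible solution the first line of \Cref{alg:approx_local_search} picks, and however ties in the $\arg\min$ are broken), the returned solution $x_t$ satisfies $\cost(x_t) \le \gamma \opt$. Suppose, for contradiction, that $(\cX,\cN,\cost)$ does not permit $(\beta,\gamma)$-improving moves. Then by \Cref{def:improving_moves} there is some $\bar x \in \cX$ with $\cost(\bar x) > \gamma\opt$ such that every $x' \in \cN(\bar x)$ satisfies $\cost(x') > \beta\cost(\bar x)$.

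Next, run \Cref{alg:approx_local_search} with $x_0 = \bar x$. In the first iteration the algorithm computes $x_1 \in \arg\min_{x'\in\cN(\bar x)}\cost(x')$. Since $\cN(\bar x)$ is finite (its size is at most $M$), this minimum is attained by some neighbor, so $\cost(x_1) > \beta\cost(\bar x)$. Hence the acceptance test $\cost(x_1)\le\beta\cost(x_0)$ fails, and the procedure returns $x_0=\bar x$. If $\cN(\bar x)$ is empty, I will treat it as immediate termination, which also returns $\bar x$. But $\cost(\bar x) > \gamma\opt$, so the output is not a $\gamma$-approximation. This contradicts the termination guarantee, and therefore the problem admits $(\beta,\gamma)$-improving moves.

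I do not expect any real obstacle; the only delicate point is interpretive. The guarantee must hold for an arbitrary starting solution, since \Cref{alg:approx_local_search} says ``any feasible solution'', and $\cost$ must be the expected cost, the same function used in \Cref{def:improving_moves}. If the guarantee only applied to some specific $x_0$, the statement could fail, because a bad solution unreachable from that $x_0$ might be a trap. So in the write-up I will state explicitly that ``guaranteed'' means for every initialization, and then give the two-line argument above.
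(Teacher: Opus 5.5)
Your proposal is correct and follows the paper's own proof. Both argue by contradiction: take a solution with $\cost(\bar x) > \gamma\opt$ and no $\beta$-improving neighbor, start \cref{alg:approx_local_search} there, and note that it must return $\bar x$ itself. You make two points more explicit than the paper does: that the very first acceptance test fails, and that the guarantee must hold for every initialization, which the paper assumes only implicitly.
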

 \begin{proof}
   We proceed by contradiction. Suppose that \Cref{alg:approx_local_search} with parameter $\beta$ terminates in a $\gamma$-approximation on a problem defined by $(\mathcal{X}, \mathcal{N}, \cost)$,  but the problem does not permit $(\beta, \gamma)$ improving moves. Then there exists solution $x\in \mathcal{X}$ such that $\cost(x) > \gamma \opt$ and $\cost(x') > \beta \cost(x)$ for all  $x'\in \mathcal{N}(x)$. However, when using $x$ as the starting $x_0$ in \cref{alg:approx_local_search}, the algorithm converges in some solution $x^\circ$ with $\cost(x^\circ) \leq \gamma \opt$. 
   Now either $x = x^\circ$, or the algorithm proceeds to a subsequent solution which yields a $\beta$-factor improvement en route to $x^\circ$. In either case, this contradicts $(\mathcal{X}, \mathcal{N}, \cost)$ not permitting $(\beta, \gamma)$-improving moves.
 \end{proof}

Finally, we require the following version of the Chernoff bounds, see, e.g., \citep{Dubhashi_Panconesi_2009} for a reference.

\begin{theorem} \label{thm:gen_chernoff}
Suppose that $X_1,X_2,\ldots, X_N$ are independent random variables in the interval $[0,1]$.  Let $\mu = \frac{1}{N} \sum_i\E[X_i]$ and $\bar{X} = \frac{1}{N} \sum_s X_s$, then for all $\delta \in (0,1)$ and any $\mu_H, \mu_L$ such that $\mu \in [\mu_L, \mu_H]$ we have:
\[
\Pr \left[  \bar{X}  > (1+\delta) \mu_H \right] \leq \exp \left( - \frac{\delta^2 N \mu_H}{3} \right) \quad \text{and} \quad
\Pr \left[  \bar{X}  < (1-\delta) \mu_L \right] \leq \exp \left( - \frac{\delta^2 N \mu_L}{3} \right).
\]
\end{theorem}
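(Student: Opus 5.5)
The plan is to reduce the statement to the standard multiplicative Chernoff bound for the sum $S = \sum_{i=1}^N X_i$. The key device is a monotone coupling: replace $\mu$ by $\mu_H$ (resp.\ $\mu_L$) by enlarging (resp.\ shrinking) the summands. Throughout I take as known the textbook multiplicative bounds for independent $[0,1]$-valued variables with $\E[S] = N\mu$ and $\delta \in (0,1)$, as in \citep{Dubhashi_Panconesi_2009}:
\[
\Pr[S > (1+\delta) N\mu] \leq \exp\left(-\tfrac{\delta^2 N\mu}{3}\right), \qquad \Pr[S < (1-\delta) N\mu] \leq \exp\left(-\tfrac{\delta^2 N\mu}{2}\right).
\]
Both are derived by the usual MGF argument. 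Convexity gives $\E[e^{\lambda X_i}] \leq 1 + \E[X_i](e^\lambda - 1) \leq \exp(\E[X_i](e^\lambda-1))$. One then applies Markov to $e^{\lambda S}$, optimizes $\lambda$, and uses the elementary inequalities $(1+\delta)\ln(1+\delta) - \delta \geq \delta^2/3$ and $(1-\delta)\ln(1-\delta) + \delta \geq \delta^2/2$ on $(0,1)$.

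\textbf{Upper tail.} I would avoid coupling arguments with new random variables and instead redo the MGF step directly with $\mu_H$. The bound $\E[e^{\lambda S}] \leq \exp(N\mu(e^\lambda-1))$ holds for $\lambda > 0$. Since $e^\lambda - 1 > 0$ and $\mu \leq \mu_H$, it follows that $\E[e^{\lambda S}] \leq \exp(N\mu_H(e^\lambda - 1))$. From here the standard derivation goes through verbatim with $\mu_H$ in place of $\mu$. Choosing $\lambda = \ln(1+\delta)$ gives
\[
\Pr[S > (1+\delta)N\mu_H] \leq \left(\tfrac{e^\delta}{(1+\delta)^{1+\delta}}\right)^{N\mu_H} \leq \exp\left(-\tfrac{\delta^2 N \mu_H}{3}\right).
\]
Dividing by $N$ yields the first claim.

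\textbf{Lower tail.} Here I take $\lambda < 0$. Now $e^\lambda - 1 < 0$ and $\mu \geq \mu_L$, so $\E[e^{\lambda S}] \leq \exp(N\mu_L(e^\lambda - 1))$. Markov applied to $e^{\lambda S} \geq e^{\lambda(1-\delta)N\mu_L}$ on the event $S < (1-\delta)N\mu_L$, with $\lambda = \ln(1-\delta)$, gives the bound $\left(e^{-\delta}/(1-\delta)^{1-\delta}\right)^{N\mu_L} \leq \exp(-\delta^2 N\mu_L/2)$. This is at most $\exp(-\delta^2 N\mu_L/3)$, as claimed.

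The only delicate points are the sign of $e^\lambda - 1$, which is exactly what makes the replacement $\mu \to \mu_H$ valid for the upper tail and $\mu \to \mu_L$ valid for the lower tail, and the scalar inequality for $(1+\delta)\ln(1+\delta)-\delta$. For that inequality, set $f(\delta) = (1+\delta)\ln(1+\delta)-\delta-\delta^2/3$. Then $f(0)=f'(0)=0$ and $f''(\delta) = \frac{1}{1+\delta} - \frac{2}{3} \geq 0$ for $\delta \leq 1/2$. For $\delta \in [1/2,1]$ one checks that $f'$ stays positive, since $f'(\delta) = \ln(1+\delta) - 2\delta/3$ is concave and positive at both $\delta = 1/2$ and $\delta = 1$. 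This is routine and is the step I would verify most carefully.
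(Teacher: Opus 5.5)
Your proof is correct. The paper gives no proof of its own, only a citation to Dubhashi--Panconesi, and your argument is the standard MGF derivation found there. The key point, that the sign of $e^\lambda-1$ lets you replace $\mu$ by $\mu_H$ in the upper tail and by $\mu_L$ in the lower tail, is handled correctly. One small omission: your last lower-tail steps silently assume $\mu_L\ge 0$, since for $\mu_L<0$ those inequalities reverse. That case is trivial, because $\bar{X}\ge 0$ makes the event $\{\bar{X}<(1-\delta)\mu_L\}$ empty.
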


\section{Stochastic Combinatorial Bandits from Offline Local Search}
\label{sec:algorithm}

Our algorithm is formally described in \cref{alg:eps_regret,alg:tester}.
The algorithm operates in a sequence of phases $\ell = 1,2,\ldots$ which are managed by the ``for'' loop in \cref{alg:eps_regret}. 
Each phase keeps track of a solution $x_\ell$ and a cost threshold $\theta_\ell := \alpha^{\ell-1} C_{\max} = \beta^{(\ell-1)/2} C_{\max}$.
The invariant which, due to stochasticity,  we wish to maintain with high probability is that $\cost(x_\ell) \leq \theta_\ell$.
Thus, assuming $\cost(x_\ell) \leq \theta_\ell$, in phase $\ell$ we wish to either determine that $\cost(x_\ell) \leq \alpha \theta_{\ell} = \theta_{\ell+1}$, find some $x' \in \cN(x_\ell)$ with $\cost(x') \leq \alpha \cost(x_\ell) \leq \alpha \theta_\ell = \theta_{\ell+1}$, or determine that no such $x'$ exists.
In the first case we may set $x_{\ell+1} = x_{\ell}$ to maintain the invariant, while in the second we may set $x_{\ell+1} = x'$ to maintain the invariant.
In the last case, we will use the fact that $\cN$ satisfies \cref{def:improving_moves} and no $x'$ was found to conclude that $\cost(x_{\ell}) \leq \gamma \opt$, and thus we use the current solution $x_{\ell}$ for all remaining periods.

To achieve this, \cref{alg:eps_regret} first uses solution $x_\ell$ for $N_\ell$ time steps to get a rough estimate of its cost, then compares it to a threshold.
The threshold is set so that if $\cost(x_\ell) \leq \alpha^2 \theta_\ell$ then the estimated cost is highly likely to be smaller than the threshold.
As discussed above, we then move to phase $\ell+1$ with $x_{\ell+1} = x_\ell$.
If instead, the estimated cost is larger than the threshold, we run the subroutine   in \cref{alg:tester} which explores the neighborhood of $x_\ell$.
This subroutine returns a solution $x_{\new}$, which is either some $x' \in \cN(x)$ or $x_\ell$. 
If $x_{\new}$ is some $x' \in \cN(x)$, then we set $x_{\ell+1} = x_{\new}$ and move to phase $\ell+1$.
Critically, we will show that with high probability $\cost(x_{\new}) \leq \alpha \theta_\ell = \theta_{\ell+1}$ to maintain the invariant in this case.
Finally if $x_{\new} = x_\ell$, this indicates that nothing significantly better was found and so we break out of the ``for'' loop over phase $\ell$, setting $x_{\last} = x_\ell$ and using $x_{\last}$ until period $T$ since, by \cref{def:improving_moves}, $x_{\last}$ will satisfy $\cost(x_{\last}) \leq \gamma \opt$ with high probability.

As discussed, we want \cref{alg:tester} to either find a neighboring solution which improves upon $x_{\ell}$ by at least an $\alpha$-factor or indicate that no such improvement exists in $\cN(x_\ell)$, and for this to hold with high probability.
It is thus inevitable that a significant number of time steps are spent exploring the neighbourhood of $x_\ell$. Doing so without incurring high regret requires some care. 
A first attempt might involve sampling each neighboring solution $N_\ell$ times, as we did for $x_\ell$ in \cref{alg:eps_regret}, and comparing to an appropriate threshold.
This may potentially incur linear regret in later phases when a neighboring solution could be  significantly worse than the current solution $x_\ell$.
To avoid this, we  sample neighboring solutions cautiously over a series of ``sub-phases'' of increasing length (\cref{alg:tester}, line 4), where we progressively increase the amount of sampling only while the neighbouring solution has the potential to improve over $x_\ell$.
As a result, poor neighbours are eliminated in early subphases,  
reminiscent of the successive-elimination algorithm for stochastic multi-armed bandits~\citep{DBLP:conf/colt/Even-DarMM02}, and sampled geometrically fewer times than the best solutions (and $x_\ell$).

\begin{algorithm}[t]
    \caption{Local Search for Stochastic Combinatorial Bandits\label{alg:eps_regret}}
    \SetKwProg{proc}{Procedure}{:}{}
    \SetKwComment{Comment}{//}{}
    \SetKwFunction{local}{BanditLocalSearch}
    \SetKwFunction{test}{TestNeighborhood}
    \SetKw{Break}{break}
    \DontPrintSemicolon
    \KwData{Feasible set $\cX$, Max cost $C_{\max}$, $M = \sup_{x \in \cX} |\cN(x)|$, and Parameters $\beta$, $\gamma$ from \cref{def:improving_moves} }
    \KwResult{Sequence of solutions with $O(\log^3 T)$ $\gamma$-regret}
    \proc{\local{$\cX$, $C_{\max}$, $\beta$, $\gamma$}}{
        $x_1 \gets$ any solution in $\cX$ \;
        $\alpha \gets \sqrt{\beta}$ \;
        $\delta \gets (1- \alpha) / (1+\alpha)$ \;
        $\theta_1 \gets C_{\max}$\;
        \For{{\rm phase} $\ell = 1,2, \ldots$}{
            \Comment{Test the solution $x_\ell$ for the current phase $\ell$}
            $N_\ell \gets 3 C_{\max}\left( 4 \log T + \log M \right) / \left( \delta^2 \alpha^2 \theta_\ell \right)$\; 
            Use solution $x_\ell$ for $N_\ell$ periods\;
            $\widehat{\cost}(x_\ell) \gets $ average cost of using $x$ in these periods \;
            \eIf{$\widehat{\cost}(x_\ell) > \left( \frac{2\alpha^2}{1+\alpha}\right) \theta_\ell$}{
                $x_{\new} \gets $ \test{$x_\ell$, $\ell$, $\cN$} \;
                \eIf{$x_{\new} = x_\ell$}{
                    \Comment{$x_\ell$ is locally optimal with high probability}
                    $x_{\last} \gets x_\ell$ \;
                    \Break \;
                }{
                    \Comment{$x_{\new}$ is better than $x_\ell$ with high probability}
                    $x_{\ell+1} \gets x_{\new}$ \;
                }
            }{
            $x_{\ell+1} \gets x_\ell$\;
            }
            
            $\theta_{\ell+1} \gets \alpha \theta_\ell$
        }
        Use solution $x_{\last}$ until period $T$ \;
    
    }
\end{algorithm}

\begin{algorithm}[t]
    \caption{Neighboring Solution Tester \label{alg:tester}} 
    \SetKwProg{proc}{Procedure}{:}{}
    \SetKwComment{Comment}{//}{}
    \SetKwFunction{local}{LocalSearch}
    \SetKwFunction{test}{TestNeighborhood}
    \SetKw{Break}{break}
    \DontPrintSemicolon
    \KwData{Current solution $x_\ell$,  Phase number $\ell$, Neighborhood $\cN$}
    \KwResult{Some solution $x' \in \cN(x_\ell)$ or $x_\ell$}

    \proc{\test{$x_\ell$, $\ell$, $\cN$}}{
        \For{$x' \in \cN(x_\ell)$}{
            \Comment{Test the solution $x'$}
            ${\sf Better} \gets {\sf True}$ \;
            \For{{\rm sub-phase} $\ell' = 1,2, \ldots, \ell$}{ 
                $N_{\ell'} \gets 3 C_{\max}\left( 4 \log T + \log M \right) / \left( \delta^2 \alpha^2  \theta_{\ell'} \right)$\; 
                Use solution $x'$ for $N_{\ell'}$ periods\;
                $\widehat{\cost}(x') \gets $ average cost of using $x'$ in these periods \;
                \Comment{If $x'$ is not significantly better than $x$, move on from $x'$}
                \If{$\widehat{\cost}(x') > \left( \frac{2\alpha^2}{1+\alpha}\right) \theta_{\ell'}$}{
                    ${\sf Better} \gets {\sf False}$ \;
                    \Break \;
                }
            }
            \Comment{If $x'$ passes all tests, update $x_{\ell+1}$ to be $x'$}
            \If{${\sf Better} = {\sf True}$} {
                \Return $x'$ \;
            }
        }
        \Comment{Nothing significantly better found}
        \Return $x_\ell$\;
    }
\end{algorithm}

Let $\cA_{\sf Local}$ denote the algorithm described  by \cref{alg:eps_regret,alg:tester}.
Our main result is that  the regret of $\cA_{\sf Local}$ scales poly-logarithmically with $T$ as long as the underlying problem admits $(\beta, \gamma)$-improving moves.

\begin{theorem} \label{thm:gamma_regret_main_result}
    Suppose that $\cX$, $\cN$, and the cost function $\cost(\cdot) = \E_{z \sim \cD}[\cost(\cdot, z)]$ induced by $\cD$ admit $(\beta, \gamma)$-improving moves. 
    Then for all distributions $\cD$ and all sufficiently large $T$, we have 
    \[
    \regret_\gamma(\cA_{\sf Local}, \cD, T) = O\left( \frac{M C_{\max} \log^2 T}{\delta^2 \alpha^2 \log^2 \frac{1}{\alpha}} \left( \log T + \log M \right)\right),
    \]
    where $\cA_{\sf Local}$ is given by \cref{alg:eps_regret,alg:tester}, $\alpha = \sqrt{\beta}$, $C_{\max} = \max_{x \in \cX, z \in \cZ} \cost(x, z)$, and $\delta = \frac{1-\alpha}{1+\alpha}$.
\end{theorem}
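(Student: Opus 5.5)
We would argue via a single good event $\cG$ on which every empirical average computed by $\cA_{\sf Local}$ is accurate, and bound the regret on $\cG$ deterministically; the complement contributes at most $T C_{\max}\Pr[\bar\cG]$.

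\textbf{Step 1: concentration.} For every test (either the initial test of $x_\ell$ in phase $\ell$ or a sub-phase $\ell'$ test of a neighbor $x'$), we apply \cref{thm:gen_chernoff} to the normalized costs $\cost(\cdot,Z_s)/C_{\max}\in[0,1]$. The threshold $\tau_{\ell'} = \frac{2\alpha^2}{1+\alpha}\theta_{\ell'}$ lies a multiplicative $(1\pm\delta)$ factor from $\alpha^2\theta_{\ell'}$ and from $\alpha\theta_{\ell'}$: indeed $(1+\delta)\alpha^2 = \frac{2\alpha^2}{1+\alpha}$ and $(1-\delta)\alpha = \frac{2\alpha^2}{1+\alpha}$. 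We take $\mu_H=\alpha^2\theta_{\ell'}/C_{\max}$ when the true mean is at most $\alpha^2\theta_{\ell'}$, and $\mu_L=\alpha\theta_{\ell'}/C_{\max}\ge \alpha^2\theta_{\ell'}/C_{\max}$ when the mean is at least $\alpha\theta_{\ell'}$. With $N_{\ell'}$ as chosen, each failure probability is at most $\exp(-(4\log T+\log M)) = 1/(MT^4)$. The number of phases is at most $L=O(\log(T)/\log\frac1\alpha)$, because $N_\ell$ grows like $\alpha^{-\ell}$ and must stay at most $T$. Each phase performs at most $1+M\ell$ tests, so a union bound gives $\Pr[\bar\cG]=O(L^2/T^4)$, and the complement contributes $o(1)$ regret.

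\textbf{Step 2: invariant on $\cG$.} We prove by induction that $\cost(x_\ell)\le\theta_\ell$. The base case holds since $\theta_1=C_{\max}$. If the initial test passes, then $\cost(x_\ell)<\alpha\theta_\ell=\theta_{\ell+1}$, and $x_{\ell+1}=x_\ell$ satisfies the invariant. If a neighbor $x'$ passes all sub-phases, then its last test (at $\ell'=\ell$) gives $\cost(x')<\alpha\theta_\ell$. If the tester returns $x_\ell$, we argue that $\cost(x_\ell)\le\gamma\opt$. Otherwise, by \cref{def:improving_moves}, some $x'$ has $\cost(x')\le\beta\cost(x_\ell)\le\alpha^2\theta_\ell\le\alpha^2\theta_{\ell'}$ for all $\ell'\le\ell$, so on $\cG$ it passes every sub-phase and would have been returned, a contradiction. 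Hence on $\cG$ the final solution $x_{\last}$ incurs no $\gamma$-regret after the break. If the loop instead runs until $T$ ends, the time horizon itself is the cutoff.

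\textbf{Step 3: accounting.} This step is the main obstacle. During phase $\ell$, playing $x_\ell$ costs at most $\theta_\ell$ per period over $N_\ell$ periods, giving $\theta_\ell N_\ell = 3C_{\max}(4\log T+\log M)/(\delta^2\alpha^2)=:K$. A neighbor $x'$ tested in sub-phase $\ell'$ was either untested before ($\ell'=1$, cost at most $C_{\max}=\theta_1$) or passed sub-phase $\ell'-1$. In the latter case $\cost(x')\le\alpha\theta_{\ell'-1}=\theta_{\ell'}$ on $\cG$, so the sub-phase costs at most $\theta_{\ell'}N_{\ell'}=K$. Each phase therefore costs at most $K(1+M\ell)$ in total, and summing over $\ell\le L$ gives $O(KML^2)$. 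Since regret is bounded by total cost, this yields
\[
O\!\left(\frac{MC_{\max}\log^2T}{\delta^2\alpha^2\log^2\frac1\alpha}(\log T+\log M)\right).
\]
The delicate points are bounding $L$: we sum only phases with $\sum N_\ell\le T$, so $\alpha^{-L}\lesssim T$, which requires $T$ sufficiently large. The other delicate point is ensuring that the sub-phase bootstrapping $\cost(x')\le\theta_{\ell'}$ is justified at the moment each test begins, which the conditioning on $\cG$ handles.
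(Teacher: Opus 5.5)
Your proof is correct, but it is organized differently from the paper's.

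\textbf{How the paper argues.} The paper never forms a single clean event. Instead it:
\begin{itemize}
\item chains per-phase bad events $\cB_\ell$ (reaching phase $\ell$ with $\cost(x_\ell)>\theta_\ell$) through the inductive bound $\Pr[\cB_{\ell+1}\mid\cG_\ell]\le T^{-3}$;
\item handles the tester with a separate event $\cB'_\ell$, namely that some bad neighbor survives past the sub-phase $\ell'(x')$ matching its cost scale;
\item splits into three cases on where $\cost(x_\ell)$ sits relative to $\theta_\ell$, and three cases $\cF_{\ell,1},\cF_{\ell,2},\cF_{\ell,3}$ on the composition of the neighborhood;
\item bounds regret phase by phase in expectation, conditioning on $\cB_\ell$ and $\cB'_\ell$ separately, with a further split on whether the loop broke.
\end{itemize}

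\textbf{What your approach changes.} Your single event over the deterministically bounded set of $O(ML^2)$ test slots makes all of this deterministic:
\begin{itemize}
\item The invariant follows by induction on the clean event.
\item The correctness of the break uses the invariant explicitly: $\cost(x_\ell)\le\theta_\ell$ puts the improving neighbor below $\alpha^2\theta_{\ell'}$ for every sub-phase $\ell'\le\ell$. The paper's post-$L$ argument needs this too, but only implicitly, when it invokes $\cF_{L,2}$.
\item The per-sub-phase cost comes from the bootstrap ``passed sub-phase $\ell'-1$ implies $\cost(x')<\theta_{\ell'}$''. This replaces the $\ell'(x')$ bookkeeping and the event $\cB'_\ell$.
\end{itemize}
This also sidesteps the paper's somewhat delicate conditional-probability manipulations. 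The only price is a global failure probability $O(L^2T^{-4})$ rather than per-phase ones. That is irrelevant once it is multiplied by $TC_{\max}$.

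\textbf{One point to write out.} The adaptive union bound deserves an explicit line. Index tests by (phase, neighbor slot, sub-phase), and bound each slot's failure probability by conditioning on the history at the moment that test starts. Then the tested solution is fixed and its $N_{\ell'}$ samples are fresh and i.i.d.
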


\subsection{Analysis} 

We now perform the analysis which will yield \cref{thm:gamma_regret_main_result}.
First, we encapsulate our applications of concentration inequalities into the following lemma which we will apply several times.

\begin{lemma} \label{lem:concentration}
Fix a solution $x \in \cX$ and let $z_1, z_2, \ldots, z_N$ be  
$N := 3 C_{\max} (4 \log T + \log M ) / (\delta^2 \alpha^2 \theta )$ 
independent samples from $\cD$, where $\ell, M \in \N$, $\theta > 0$, $\alpha \in (0,1)$, and $\delta = \frac{1-\alpha}{1+\alpha} \in (0,1)$.
Let $\widehat{\cost}(x) = \frac{1}{N} \sum_{s=1}^N \cost(x,z_s)$. Then
\begin{enumerate}[label=(\alph*)]
    \item $\cost(x) \leq \alpha^2 \theta \implies \Pr_{z_1,z_2,\ldots,z_N \sim \cD^N}\left[ \widehat{\cost}(x) >  \frac{2\alpha^2}{1+\alpha} \theta \right] \leq 
    M^{-1}T^{-4}$, and
    \item $\cost(x) \geq \alpha \theta \implies \Pr_{z_1,z_2,\ldots,z_N \sim \cD^N}\left[ \widehat{\cost}(x) \leq  \frac{2\alpha^2}{1+\alpha} \theta \right] \leq 
    M^{-1}T^{-4}$.
\end{enumerate}
\end{lemma}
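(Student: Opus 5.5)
We normalize the samples and apply \cref{thm:gen_chernoff}. Set $X_s := \cost(x, z_s)/C_{\max} \in [0,1]$. These are independent with mean $\mu = \cost(x)/C_{\max}$, and $\bar X = \widehat{\cost}(x)/C_{\max}$. The threshold $\tau := \frac{2\alpha^2}{1+\alpha}\theta$ is chosen to sit between $\alpha^2\theta$ and $\alpha\theta$. Using $\delta = \frac{1-\alpha}{1+\alpha}$, one checks directly that
\[
(1+\delta)\alpha^2\theta = \frac{2\alpha^2}{1+\alpha}\theta = \tau = (1-\delta)\alpha\theta ,
\]
since $1+\delta = \frac{2}{1+\alpha}$ and $1-\delta = \frac{2\alpha}{1+\alpha}$. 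This identity is the crux: the threshold is exactly a $(1\pm\delta)$ multiplicative deviation from both endpoints.

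For (a), assume $\cost(x) \le \alpha^2\theta$. Apply the upper-tail bound of \cref{thm:gen_chernoff} with $\mu_H = \alpha^2\theta/C_{\max} \ge \mu$ and $\mu_L = 0$ (any admissible lower value works). The event $\widehat{\cost}(x) > \tau$ is exactly $\bar X > (1+\delta)\mu_H$, so its probability is at most
\[
\exp\!\left(-\frac{\delta^2 N \alpha^2\theta}{3C_{\max}}\right) = \exp(-(4\log T + \log M)) = M^{-1}T^{-4},
\]
by the choice of $N$.

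For (b), assume $\cost(x) \ge \alpha\theta$. Apply the lower-tail bound with $\mu_L = \alpha\theta/C_{\max} \le \mu$ and $\mu_H = 1$ (valid since $\mu \le 1$). The event $\widehat{\cost}(x) \le \tau$ is $\bar X \le (1-\delta)\mu_L$. The theorem is stated with a strict inequality; I would handle this either by noting that the standard Chernoff proof gives the same bound for the non-strict event, or by a continuity/limiting argument in $\delta$. The resulting bound is
\[
\exp\!\left(-\frac{\delta^2 N \alpha\theta}{3C_{\max}}\right) \le \exp\!\left(-\frac{\delta^2 N \alpha^2\theta}{3C_{\max}}\right) = M^{-1}T^{-4},
\]
where the inequality uses $\alpha < 1$.

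The main things to watch are the algebraic identity for the threshold and the strict-versus-non-strict inequality in part (b); everything else is substitution. I would also remark that $N$ should be rounded up to an integer, which only strengthens the bounds. The parameter $\ell$ appearing in the statement plays no role.
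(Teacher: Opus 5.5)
Your proposal is correct and matches the paper's proof step for step. Like the paper, you normalize by $C_{\max}$, use the identity $(1+\delta)\alpha^2 = \frac{2\alpha^2}{1+\alpha} = (1-\delta)\alpha$, apply the upper and lower Chernoff tails with $\mu_H = \alpha^2\theta/C_{\max}$ and $\mu_L = \alpha\theta/C_{\max}$ respectively, and use $\alpha < 1$ to finish part (b). Your remark on the strict versus non-strict inequality in part (b) addresses a point the paper passes over silently, and either of your two fixes resolves it.
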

\begin{proof}
For brevity, we write $\Pr[\cdot] := \Pr_{z_1,z_2,\ldots,z_N \sim \cD^N}[\cdot]$.
Setting $X_s = \cost(x, z_s) / C_{\max}$ for all $s \in [N]$, we have that $X_s \in [0,1]$ and $\E[X_s] = \cost(x) / C_{\max}$.
Observe that $\frac{2\alpha^2}{1+\alpha} = (1+\delta) \alpha^2 = (1-\delta) \alpha$ for our choice of $\delta$.
For part (a), when  $\cost(x) \leq \alpha^2 \theta$,  observe that
\begin{align*}
     \Pr\left[ \widehat{\cost}(x) >  \frac{2\alpha^2}{1+\alpha} \theta \right] =  \Pr\left[ \frac{\widehat{\cost}(x)}{C_{\max}} >  (1+\delta) \frac{\alpha^2\theta}{C_{\max}} \right] = \Pr\left[ \frac{1}{N} \sum_{s=1}^N X_s > (1+\delta) \frac{\alpha^2\theta}{C_{\max}} \right].
\end{align*}
Since $X_s \in [0,1]$ and $\E[X_s] \leq \alpha^2 \theta / C_{\max}$, we may bound the right hand side using \cref{thm:gen_chernoff} as
\begin{align*}
     \Pr\left[ \frac{1}{N} \sum_{s=1}^N X_s > (1+\delta) \frac{\alpha^2\theta}{C_{\max}} \right] \leq \exp \left(-  \frac{\delta^2 N \alpha^2 \theta}{3 C_{\max}} \right) = 
     M^{-1}T^{-4},
\end{align*}
where the right hand side follows from our choice of $N$.

For part (b), when $\cost(x) \geq \alpha \theta$, it follows that
\begin{align*}
     \Pr\left[ \widehat{\cost}(x) \leq \frac{2\alpha^2}{1+\alpha} \theta \right] =  \Pr\left[ \frac{\widehat{\cost}(x)}{C_{\max}} \leq  (1-\delta) \frac{\alpha \theta}{C_{\max}} \right] = \Pr\left[ \frac{1}{N} \sum_{s=1}^N X_s \leq (1-\delta) \frac{\alpha\theta}{C_{\max}} \right].
\end{align*}
Again, since $X_s \in [0,1]$ and $\E[X_s] \geq \alpha \theta / C_{\max}$, we   bound the right hand side using \cref{thm:gen_chernoff} as
\begin{align*}
     \Pr\left[ \frac{1}{N} \sum_{s=1}^N X_s \leq (1-\delta) \frac{\alpha\theta}{C_{\max}} \right] \leq \exp \left(-  \frac{\delta^2 N \alpha \theta}{3 C_{\max}} \right) \leq  
     M^{-1}T^{-4},
\end{align*} 
where the right hand side follows from our choice of $N$ and  $\alpha \leq 1$. This completes the proof.
\end{proof}

\subsubsection{Handling Bad Events} \label{sec:bad_events_defs}

Let $L$ be the index of the (random) last phase that is encountered in \cref{alg:eps_regret}.
To bound the overall regret, we will bound the total regret in phases with index $\ell < L$ and also show that the regret in the last phase  $L$ is negligible.
Before we can do that, we need to define an appropriate sequence of ``bad events'' which are exceedingly unlikely, for which will be able to bound the regret when conditioning on their negation, as is standard.

\begin{definition} \label{def:bad_events}
For each phase $\ell$, let $\cM_\ell$ be the event that \cref{alg:eps_regret} makes it to phase $\ell$ and let $\cC_\ell$ be the event that $\cost(x_\ell) > \theta_\ell$.
Then we define the bad event in phase $\ell$ as  $\cB_\ell := \cM_\ell \wedge \cC_\ell$ and $\cG_\ell = \neg \cB_\ell$.
\end{definition}

We will show that these events are unlikely.
Intuitively, this will follow inductively by assuming that $\cB_{\ell}$ is unlikely, and then we can use the definition of our algorithm to bound $\Pr[\cB_{\ell+1} \mid \cG_{\ell}]$ and show that $\cB_{\ell+1}$ is unlikely via a standard decomposition.
The key step involves bounding $\Pr[\cB_{\ell+1} \mid \cG_{\ell}]$.  
\begin{lemma} \label{lem:bad_phase_prob_given_previous_good_phase}
    For each $\ell$, we have $\Pr[\cB_{\ell+1} \mid \cG_\ell] \leq T^{-3}$
\end{lemma}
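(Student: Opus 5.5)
We bound $\Pr[\cB_{\ell+1} \mid \cG_\ell]$ by analyzing phase $\ell$ on the event $\cG_\ell$. Note $\cB_{\ell+1}$ requires $\cM_{\ell+1}$, so we may assume phase $\ell$ is reached, completes, and does not break. On $\cG_\ell \wedge \cM_\ell$ we have $\cost(x_\ell) \le \theta_\ell$, and we want $\cost(x_{\ell+1}) \le \theta_{\ell+1} = \alpha\theta_\ell$. There are two routes to $x_{\ell+1}$, and we bound the failure probability of each using \cref{lem:concentration}. The samples used in each test are fresh and independent of the history (conditioned on $x_\ell$ and the identity of the neighbor being tested), so the lemma applies conditionally.

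\textbf{Case 1: $x_{\ell+1} = x_\ell$.} This happens when the initial test passes, i.e.\ $\widehat{\cost}(x_\ell) \le \frac{2\alpha^2}{1+\alpha}\theta_\ell$. The bad outcome is $\cost(x_\ell) > \alpha\theta_\ell$. We apply \cref{lem:concentration}(b) with $\theta = \theta_\ell$ and $N = N_\ell$. If $\cost(x_\ell) \ge \alpha\theta_\ell$, the probability of passing is at most $M^{-1}T^{-4}$.

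\textbf{Case 2: $x_{\ell+1} = x'$ for some $x' \in \cN(x_\ell)$ returned by \cref{alg:tester}.} Then $x'$ passed every sub-phase test, in particular the final one, $\ell' = \ell$, which uses threshold $\frac{2\alpha^2}{1+\alpha}\theta_\ell$ and $N_\ell$ samples. The bad outcome is $\cost(x') > \alpha\theta_\ell$. For each fixed neighbor $x'$ with $\cost(x') \ge \alpha\theta_\ell$, \cref{lem:concentration}(b) says it passes sub-phase $\ell$ with probability at most $M^{-1}T^{-4}$. A union bound over the at most $M$ neighbors gives probability at most $T^{-4}$ that any bad neighbor is returned. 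Since the neighbors are enumerated deterministically given $x_\ell$, we can define the event ``$x'$ passes its sub-phase-$\ell$ test'' for each neighbor, regardless of whether the loop actually reaches it, and union bound over these events.

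Summing the two cases gives $\Pr[\cB_{\ell+1}\mid\cG_\ell] \le M^{-1}T^{-4} + T^{-4} \le 2T^{-4} \le T^{-3}$ for $T \ge 2$. The break branch does not matter, since breaking means $\cM_{\ell+1}$ fails.

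The main obstacle is the conditioning. $\cG_\ell$ also includes the event that phase $\ell$ is never reached, in which case $\cM_{\ell+1}$ fails trivially. So the argument is written as conditioning on the history up to the start of phase $\ell$, on each realization in which $\cost(x_\ell) \le \theta_\ell$, and then averaging. Within a realization, the estimates are averages of fresh independent samples from $\cD$ for a fixed solution, which justifies applying \cref{lem:concentration}. Notably, only part (b) is needed here. The hypothesis $\cost(x_\ell) \le \theta_\ell$ is not even used for this bound; it matters only for the later regret accounting.
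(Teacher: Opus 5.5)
Your proof is correct, and it takes a more economical route than the paper's.

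\textbf{The paper's route.} It first reduces to conditioning on $\neg\cC_\ell$. It then splits into three cases by where $\cost(x_\ell)$ lies relative to $\alpha^2\theta_\ell$ and $\alpha\theta_\ell$. When $\cost(x_\ell)\le\alpha^2\theta_\ell$, it uses part (a) of \cref{lem:concentration} to show the tester is unlikely even to be called. When $\cost(x_\ell)\ge\alpha\theta_\ell$, it uses part (b) together with \cref{cor:test_alg_cor}, which is built on \cref{lem:bad_neigbor_elim} and \cref{lem:test_alg_guarantee}. The resulting bound is roughly $(\ell+2)T^{-4}$, which is absorbed into $T^{-3}$ using $\ell\ll T$.

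\textbf{Your route.} You decompose by how $x_{\ell+1}$ is produced. In both routes the surviving solution has just passed a fresh test with $N_\ell$ samples at threshold $\frac{2\alpha^2}{1+\alpha}\theta_\ell=(1-\delta)\alpha\theta_\ell<\theta_{\ell+1}$. So part (b) plus a union bound over the at most $M$ neighbors at sub-phase $\ell$ suffices, giving $2T^{-4}$ with no dependence on $\ell$.

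This also avoids an over-count in the paper's Case 2. There, \cref{cor:test_alg_cor} charges the event $x_{\new}=x_\ell$ with $\cost(x_\ell)>\alpha\theta_\ell$, and this is where the factor $\ell+1$ comes from. That event triggers the break and cannot contribute to $\cB_{\ell+1}$.

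As you observe, your bound holds conditional on every history at the start of phase $\ell$, so it never uses $\cost(x_\ell)\le\theta_\ell$. In fact it gives $\Pr[\cB_{\ell+1}]\le 2T^{-4}$ unconditionally, which would make the induction in \cref{cor:bad_probability_bound} unnecessary. Your history-conditioning also handles the $\neg\cM_\ell$ part of $\cG_\ell$ more cleanly than the paper's ratio manipulation, in which $\cC_\ell$ is not really well defined when phase $\ell$ is never reached.

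\textbf{What the paper's approach buys.} It mostly aligns the proof with the algorithm's intended behavior and reuses machinery that is needed elsewhere anyway. The refined event $\cB'_\ell$ with sub-phase indices $\ell'(x')$ is used for the sub-phase regret $R_{\ell,2}$. The event $\cF_{\ell,2}$ with part (a) is used in the final-phase argument. There the invariant $\cost(x_L)\le\theta_L$ is essential: it is what turns a $\beta$-improving neighbor into one of cost at most $\alpha^2\theta_L$.
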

Proving this lemma requires careful analysis of the \verb|TestNeighborhood| subroutine (\cref{alg:tester}), which we postpone momentarily.
For now, we show \Cref{lem:bad_phase_prob_given_previous_good_phase} implies a bound on $\Pr[\cB_\ell]$.

\begin{corollary} \label{cor:bad_probability_bound}
    For each $\ell$, we have $\Pr[\cB_\ell] \leq \ell T^{-3}$.
\end{corollary}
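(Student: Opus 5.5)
We argue by induction on $\ell$, using \cref{lem:bad_phase_prob_given_previous_good_phase} as the inductive step and the standard decomposition over the conditioning event $\cG_\ell$.

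\emph{Base case.} For $\ell = 1$ we have $\theta_1 = C_{\max}$. Since $\cost(x_1) = \E[\cost(x_1,Z)] \leq C_{\max}$ by the definition of $C_{\max}$, the event $\cC_1$ never occurs. Hence $\Pr[\cB_1] = 0 \leq T^{-3}$.

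\emph{Inductive step.} Suppose $\Pr[\cB_\ell] \leq \ell T^{-3}$. By the law of total probability,
\[
\Pr[\cB_{\ell+1}] = \Pr[\cB_{\ell+1} \mid \cG_\ell]\Pr[\cG_\ell] + \Pr[\cB_{\ell+1} \wedge \cB_\ell].
\]
We bound the two terms separately:
\begin{itemize}
\item Using $\Pr[\cG_\ell] \leq 1$ and \cref{lem:bad_phase_prob_given_previous_good_phase}, the first term is at most $T^{-3}$.
\item The second term is at most $\Pr[\cB_\ell] \leq \ell T^{-3}$.
\end{itemize}
Together these give $\Pr[\cB_{\ell+1}] \leq (\ell+1)T^{-3}$.

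The only degenerate case is $\Pr[\cG_\ell] = 0$, where the conditional probability is undefined. Then the first term is simply absent, and $\Pr[\cB_{\ell+1}] \leq \Pr[\cB_\ell]$ still closes the induction.

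All the real work sits in \cref{lem:bad_phase_prob_given_previous_good_phase}, which we may assume here. The only point to watch is that $\cG_\ell$ includes the case where phase $\ell$ is never reached. This does not affect the decomposition: the conditional bound is exactly what the lemma supplies, and no further structure is needed.
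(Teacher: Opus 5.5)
Your proof is correct and follows the paper's argument: induction on $\ell$, with the base case from $\cost(x_1) \le C_{\max} = \theta_1$. The inductive step splits $\Pr[\cB_{\ell+1}]$ over $\cB_\ell$ and $\cG_\ell$, bounding one piece by $\Pr[\cB_\ell]$ and the other by \cref{lem:bad_phase_prob_given_previous_good_phase}. Writing that piece as the joint probability $\Pr[\cB_{\ell+1} \wedge \cB_\ell]$ and treating $\Pr[\cG_\ell]=0$ separately handles degenerate conditioning a little more cleanly than the paper, but the argument is the same.
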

\begin{proof}
    We proceed by induction on $\ell$.
    For the base case when $\ell=1$, we have that $\cost(x_1) \leq C_{\max} \leq \theta_1$, and so $\Pr[\cB_1] = 0 \leq T^{-3}$, proving the base case.

    Inductively, we assume that $\Pr[\cB_{\ell}] \leq \ell T^{-3}$ and aim to bound $\Pr[\cB_{\ell+1}]$.
    By the law of total probability we have
    \begin{align*}
        \Pr[\cB_{\ell+1}] & = \Pr[\cB_{\ell+1} \mid \cB_{\ell}] \Pr[\cB_{\ell}] + \Pr[\cB_{\ell+1} \mid \cG_{\ell}] \Pr[\cG_{\ell}] \\
        & \leq \Pr[\cB_\ell] + \Pr[\cB_{\ell+1} \mid \cG_\ell] \\
        & \leq \ell T^{-3} + T^{-3} = (\ell+1)T^{-3}
    \end{align*}
    completing the proof of the inductive case and giving the corollary.
\end{proof}

\subsubsection{Analysis of the \texttt{TestNeighborhood} Subroutine} \label{sec:test_analysis}

We now turn to analyzing \cref{alg:tester}.
Consider a phase $\ell$ of \cref{alg:eps_regret} in which \cref{alg:tester} is called on solution $x_\ell$ and fix an iteration of the outer `for' loop in \cref{alg:tester} in which solution $x' \in \cN(x_\ell)$ is considered.
We call each iteration of the inner `for' loop in \cref{alg:tester} a \emph{sub-phase}, of which there are at most $\ell$ for each neighboring solution $x'\in \cN(x)$.
Ideally, if $\cost(x') > \alpha \theta_\ell$, then we do not make it past sub-phase $\ell$ when considering solution $x'$ (and actually, we need to be slightly more careful than this).
The tools we develop here are needed to bound the regret incurred by using each neighboring solution $x'$ across each sub-phase.
To start, we introduce the following definitions.

\begin{definition} \label{def:bad_neighbors}
    For each $x' \in \cN(x)$, we say that $x'$ is a bad neighbor of $x$ if $\cost(x') \geq \alpha \theta_\ell$.
    Additionally, let $\cN_B(x) \subseteq \cN(x)$ be the set of bad neighbors of $x$ and let $\ell'(x') \in [\ell]$ be the sub-phase index such that $\cost(x') \in [\alpha \theta_{\ell'(x')}, \theta_{\ell'(x')}]$ for each $x' \in \cN_B(x)$.
   
\end{definition}

\begin{definition} \label{def:bad_event_for_tester}
    Let $\cB'_\ell$ be the event that in phase $\ell$ some bad neighbor $x' \in \cN_B(x_\ell)$ makes it past sub-phase $\ell'(x')$ when \cref{alg:tester} is called.
\end{definition}

Note that if any bad neighbor $x'$ makes it past sub-phase $\ell$, then \cref{alg:tester} returns $x'$.  
Further, if any bad neighbor $x'$ makes it past sub-phase $\ell'(x')$, then we may incur too much regret from testing $x'$ in later sub-phases since $\cost(x') \geq \alpha \theta_{\ell'(x')}$.
We show that $\cB'_\ell$ is unlikely, and conclude that  bad neighbors are highly likely to be removed from contention early enough to avoid incurring high regret. 

\begin{lemma} \label{lem:bad_neigbor_elim}
     $\Pr[\cB'_\ell] \leq T^{-4}$.
\end{lemma}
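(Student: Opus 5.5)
The plan is a union bound over the bad neighbors of $x_\ell$, combined with part (b) of \cref{lem:concentration}. Fix phase $\ell$ and condition on the history up to the moment \cref{alg:tester} is called on $x_\ell$. The set $\cN(x_\ell)$ and the bad set $\cN_B(x_\ell)$ are then determined, and every subsequent sample used by the tester is a fresh independent draw from $\cD$.

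First I will check that $\ell'(x')$ is well defined for every $x' \in \cN_B(x_\ell)$. Since $\cost(x') \leq C_{\max} = \theta_1$, $\cost(x') \geq \alpha\theta_\ell$, and the intervals $[\alpha\theta_{k},\theta_{k}]=[\theta_{k+1},\theta_k]$ for $k=1,\ldots,\ell$ cover $[\alpha\theta_\ell,\theta_1]$, there is such an index in $[\ell]$; if $\cost(x')$ lies on a boundary, I take either index. In particular $\cost(x') \geq \alpha\theta_{\ell'(x')}$.

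Next, for a fixed bad neighbor $x'$, the event that $x'$ makes it past sub-phase $\ell'(x')$ requires in particular that $x'$ reaches sub-phase $\ell'(x')$ and is not eliminated there. That is, the average of the $N_{\ell'(x')}$ fresh samples satisfies $\widehat{\cost}(x') \leq \frac{2\alpha^2}{1+\alpha}\theta_{\ell'(x')}$. The number of samples $N_{\ell'(x')}$ matches the $N$ in \cref{lem:concentration} with $\theta=\theta_{\ell'(x')}$, and $\cost(x') \geq \alpha\theta$. So part (b) bounds this probability, even without using the event of reaching the sub-phase, by $M^{-1}T^{-4}$. The samples in that sub-phase are independent of everything earlier, so conditioning on reaching the sub-phase does not affect the bound.

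Finally, I take a union bound over at most $|\cN_B(x_\ell)| \leq M$ bad neighbors, which gives $M \cdot M^{-1}T^{-4} = T^{-4}$. Removing the conditioning on the history, and noting that the event is empty if the tester is not called, yields $\Pr[\cB'_\ell] \leq T^{-4}$.

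The only delicate point is justifying the use of \cref{lem:concentration}, which assumes i.i.d.\ samples for a fixed $x$, inside an adaptive procedure. I will handle it as above: the identity of $x'$ and the index $\ell'(x')$ are determined before the sub-phase's samples are drawn, and those samples are fresh. The tester may also stop early by returning a different neighbor, but that only shrinks the event.
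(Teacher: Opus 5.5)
Your proof is correct and follows essentially the same route as the paper. The paper also fixes a bad neighbor $x'$, applies part (b) of \cref{lem:concentration} at sub-phase $\ell'(x')$ with $\theta=\theta_{\ell'(x')}$, takes a union bound over at most $M$ bad neighbors, and handles separately the case where the tester is never called. You add two steps the paper leaves implicit: you condition on the history at the time of the call, so that $\cN_B(x_\ell)$ and $\ell'(x')$ are fixed before the fresh samples are drawn, and you check that $\ell'(x')$ is well defined.
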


\begin{proof}
    In the case that \cref{alg:tester} is not called in phase $\ell$, then the probability is at most 0 since $\cB'_\ell$ cannot happen.
    Formalizing this, let $\cT_\ell$ be the event that \cref{alg:tester} is called in phase $\ell$, then we have $\Pr[\cB'_\ell \mid \neg \cT_\ell] = 0$.
    In the other case where \cref{alg:tester} is called in phase $\ell$, then we need to show that it is unlikely for any bad neighbor $x'$ to make it past sub-phase $\ell'(x')$.
    To this end, fix a bad neighbor $x'$.
    If $x'$ doesn't make it to sub-phase $\ell'(x') \leq \ell$, then it clearly doesn't make it past sub-phase $\ell$.
    Thus in order for event $\cB'_\ell$ to happen due to $x'$, we must have that $x'$ makes it to sub-phase $\ell'(x')$ and makes it past this sub-phase.
    Now, $x'$ only makes it past sub-phase $\ell'(x')$ when $\widehat{\cost}(x') \leq (2\alpha^2/(1+\alpha)) \theta_{\ell'(x')}$, where $\widehat{\cost}(x')$ is the average of $N_{\ell'(x')}$ independent samples with distribution $\cost(x', Z)$, where $Z \sim \cD$.
    By part(b) of \cref{lem:concentration},   the probability of this is at most 
    $M^{-1}T^{-4}$. It follows that
    \begin{align*}
    \Pr[\cB'_\ell \mid \cT_\ell] & \leq \Pr\left[ \bigcup_{x' \in \cN_B(x)} \left(\widehat{\cost}(x') \leq \frac{2\alpha^2}{1+\alpha} \theta_{\ell'(x')} \right) \right] \\
    & \leq \sum_{x' \in \cN_B(x)} \Pr \left[\widehat{\cost}(x') \leq \frac{2\alpha^2}{1+\alpha} \theta_{\ell'(x')} \right] \leq T^{-4}
    \end{align*}
    Finally, observe that
    \[
    \Pr[\cB'_\ell] = \Pr[\cB'_\ell \mid \cT_\ell] \Pr[\cT_\ell] + \Pr[\cB'_\ell \mid \neg \cT_\ell]\Pr[\neg\cT_\ell] \leq T^{-4} (\Pr[\cT_\ell] + \Pr[\neg \cT_\ell]) = T^{-4}  
    \]
    which completes the proof.
\end{proof}

\Cref{lem:bad_neigbor_elim} establishes that \cref{alg:tester} filters out bad neighboring solutions with high probability.
In order to prove \cref{lem:bad_phase_prob_given_previous_good_phase}, we also need to show that good neighboring solutions are unlikely to be filtered out.
More specifically, there are three situations to consider (all under the condition that $\cost(x_\ell) \leq \theta_\ell$): (1) every solution in $\cN(x_\ell)$ is a bad neighbor, (2) some solution $x'\in\cN(x_\ell)$ satisfies $\cost(x') \leq \alpha^2 \theta_\ell$, and (3) not all neighboring solutions are bad, but all solutions that are not bad have $\cost(x') \in [\alpha^2 \theta_\ell, \alpha \theta_\ell]$.

In  situation (1), it is a simple corollary of \cref{lem:bad_neigbor_elim} that \cref{alg:tester} will output $x_\ell$ with high probability, indicating that we did not find an improved solution (and therefore $\cost(x_\ell) \leq \gamma \opt$).
In situation (2),  where there is a neighboring solution $x'$ with $\cost(x') \leq \alpha^2 \theta_\ell$, it is possible that \cref{alg:tester} considers and returns a different neighboring solution $x''$ with $\cost(x'') \in [\alpha^2 \theta_\ell, \alpha \theta_\ell]$ (again it is unlikely to return a bad neighbor due to \cref{lem:bad_neigbor_elim}). However, if $x'$ is ever considered  it will survive past sub-phase $\ell$ with high probability (\Cref{lem:concentration,lem:bad_neigbor_elim}) and   thus be returned by \cref{alg:tester}.
Either outcome is acceptable and guarantees  at least an $\alpha$-factor improvement.
In situation (3),  \cref{alg:tester} may output $x_\ell$, indicating no improved solution.   
This follows since, under the event that $\cost(x_{\ell}) \leq \theta_\ell$, each neighboring solution has $\cost(x') \geq \alpha \theta_\ell \geq \beta \cost(x_\ell)$, which by the $(\beta, \gamma)$-improving moves condition implies that $\cost(x_\ell) \leq \gamma \opt$. 
It is also acceptable for \cref{alg:tester} to output a solution with cost in the interval $[\alpha^2 \theta_\ell, \alpha \theta_\ell]$, since that   guarantees an $\alpha$-factor improvement.
The following lemma formalizes this.

\begin{lemma} \label{lem:test_alg_guarantee}
    Suppose that \cref{alg:tester} is run in phase $\ell$ with solution $x_\ell$ and let $x_{\new}$ be the random solution which it outputs.
    Define the events $\cF_{\ell,1}, \cF_{\ell,2}, \cF_{\ell,3}$ as follows:
    \begin{itemize}
        \item $\cF_{\ell,1} = \{ \cN_B(x_\ell) = \cN(x_\ell) \}$ (all neighbors of $x_\ell$ are bad)
        \item $\cF_{\ell,2} = \{ \exists x' \in \cN(x_\ell), \cost(x') \leq \alpha^2 \theta_\ell \}$ (there is a neighbor with a $\beta=\alpha^2$-factor decrease)
        \item $\cF_{\ell,3} = \{ \forall x' \in \cN(x_\ell) \setminus \cN_B(x_\ell) \neq \emptyset, \cost(x') \in [\alpha^2 \theta_\ell, \alpha \theta_\ell] \}$ (no neighbor has an $\beta=\alpha^2$-factor decrease, but there is one with an $\alpha$-factor decrease).
    \end{itemize}
    Then  
    \begin{enumerate}[label=(\alph*)]         
        \item $\Pr[x_{\new} \neq x_\ell \mid  \wedge \cF_{\ell,1}] \leq T^{-4}$
        \item $\Pr[\cost(x_{\new}) > \alpha \theta_\ell \mid \cF_{\ell,2}] \leq (\ell +1)T^{-4}$
        \item $\Pr[(\cost(x_{\new}) > \alpha \theta_\ell) \wedge (x_{\new} \neq x_\ell) \mid \cF_{\ell,3}] \leq T^{-4}$
    \end{enumerate}
\end{lemma}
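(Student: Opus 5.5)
The plan is to reduce each part to events already controlled by \cref{lem:bad_neigbor_elim} and part (a) of \cref{lem:concentration}, using that \cref{alg:tester} returns either $x_\ell$ or a neighbor that passed every sub-phase $1,\ldots,\ell$. Throughout, we condition on the neighborhood structure in phase $\ell$ (i.e.\ on $x_\ell$), so that $\cN_B(x_\ell)$ and the indices $\ell'(x')$ are fixed. The samples drawn inside \cref{alg:tester} are fresh and independent of this conditioning.

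\emph{Part (a).} Under $\cF_{\ell,1}$ every $x'\in\cN(x_\ell)$ is bad. If $x_{\new}\neq x_\ell$, then some $x'$ passed all sub-phases $1,\dots,\ell$, and in particular passed sub-phase $\ell'(x')\le \ell$. This is exactly the event $\cB'_\ell$. Repeating the union bound in the proof of \cref{lem:bad_neigbor_elim}, now with the conditioning on $\cF_{\ell,1}$ (the per-neighbor bound from \cref{lem:concentration}(b) holds for any fixed bad $x'$), gives probability at most $M\cdot M^{-1}T^{-4}=T^{-4}$.

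\emph{Part (c).} The event $(\cost(x_{\new})>\alpha\theta_\ell)\wedge(x_{\new}\ne x_\ell)$ means that a neighbor with $\cost(x')>\alpha\theta_\ell$, hence a bad neighbor, was returned. Such a neighbor passed sub-phase $\ell'(x')$, so again $\cB'_\ell$ occurs. The same union bound gives $T^{-4}$. In fact this argument does not use $\cF_{\ell,3}$ at all; it shows that, in any situation, the tester returns a bad neighbor with probability at most $T^{-4}$.

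\emph{Part (b).} Fix a witness $x^\star\in\cN(x_\ell)$ with $\cost(x^\star)\le\alpha^2\theta_\ell$. There are two failure modes. The first is that $x_{\new}$ is a bad neighbor; by the argument of part (c) this has probability at most $T^{-4}$. The second is that $x_{\new}=x_\ell$. Here I would need $\cost(x_\ell)>\alpha\theta_\ell$ for this to count as a failure. More simply, I would bound the probability that \cref{alg:tester} returns $x_\ell$ at all. Returning $x_\ell$ requires that $x^\star$ was reached in the outer loop and then eliminated in some sub-phase $\ell''\in[\ell]$. Since $\theta_{\ell''}\ge\theta_\ell$, we have $\cost(x^\star)\le\alpha^2\theta_\ell\le\alpha^2\theta_{\ell''}$. \cref{lem:concentration}(a), applied with $\theta=\theta_{\ell''}$ and $N=N_{\ell''}$, shows that elimination at sub-phase $\ell''$ has probability at most $M^{-1}T^{-4}\le T^{-4}$. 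A union bound over the $\ell$ sub-phases gives at most $\ell T^{-4}$. Adding the bad-neighbor term gives $(\ell+1)T^{-4}$.

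The remaining outcome is that $x_{\new}$ is a non-bad neighbor, which satisfies $\cost(x_{\new})<\alpha\theta_\ell$ by definition and so is not a failure.

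The main obstacle is the bookkeeping of the conditioning. The samples used in the sub-phases must be fresh and independent of the event defining $x_\ell$ and of $\cF_{\ell,i}$. I would handle this by conditioning on the entire history up to the call of \cref{alg:tester}, under which the $\cF_{\ell,i}$ are determined. Each concentration bound then applies to a fixed solution. The union bounds run over deterministic index sets (the neighbors and the sub-phases), so the fact that the algorithm may stop early only shrinks the failure events.
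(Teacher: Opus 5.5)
Your proposal is correct and follows the paper's proof: parts (a) and (c) reduce to the event $\cB'_\ell$ controlled by \cref{lem:bad_neigbor_elim}, and part (b) adds a union bound over the $\ell$ sub-phases in which the witness $x^\star$ could be eliminated, using \cref{lem:concentration}(a). Your argument is slightly more careful than the paper's in two places: you use $\theta_{\ell''}\ge\theta_\ell$ explicitly, and you condition on the history before the call, whereas the paper bounds a conditional probability by the unconditional $\Pr[\cB'_\ell]$.
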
 
\begin{proof} 
    Part (a):
    Conditioned on $\cF_{\ell,1}$, the only way that $x_{\new} \neq x_\ell$ is if a bad  neighboring solution makes it past sub-phase $\ell$. This implies that   event $\cB'_\ell$ occurs, and it follows by \cref{lem:bad_neigbor_elim} that
    \[
    \Pr[x_{\new} \neq x_\ell \mid  \cF_{\ell,1}] \leq \Pr[\cB'_\ell] \leq T^{-4}.
    \] 

    Part (b): There are two ways we can have $\cost(x_{\new}) > \alpha \theta_\ell$ under condition $\cF_{\ell,2}$.
    One way is that a bad neighbor has been output as $x_{\new}$, and thus $\cB'_\ell$ has occurred as discussed previously.
    The other way is if $x_{\new} = x_\ell$ and $\cost(x_\ell) > \alpha \theta_\ell$, in which case it must be that the solution $x'$ with $\cost(x_\ell) \leq \alpha^2 \theta_\ell$ was considered but did not make it past sub-phase $\ell$.
    Let $\cB'_\ell(x')$ denote this latter event.
    Thus by a union bound we have
    \[
    \Pr[\cost(x_{\new}) > \alpha \theta_\ell \mid  \cF_{\ell,2}] \leq \Pr[\cB'_\ell] + \Pr[\cB'_\ell(x')].
    \]
    Again, from \cref{lem:bad_neigbor_elim}, $\Pr[\cB'_\ell] \leq T^{-4}$. It remains to   bound $\Pr[\cB'_\ell(x')]$.
    This event happens if $x'$ fails at least one of the checks that occur when $x'$ is considered in the inner `for' loop of \cref{alg:tester}.
    By part (a) of  \cref{lem:concentration}  and a union bound, we have
    \[
    \Pr[\cB'_\ell(x')] \leq \ell M^{-1} T^{-4} \leq \ell T^{-4}.
    \]
    Combining these bounds completes part (b).

    Part (c): We can only have $\cost(x_\ell) > \alpha \theta_\ell$ and $x_{\new} \neq x_\ell$ if some bad neighbor makes it past sub-phase $\ell$, which can only occur if $\cB'_\ell$ has occurred.
    Thus we have 
    \[
    \Pr[(\cost(x_{\new}) > \alpha \theta_\ell) \wedge (x_{\new} \neq x_\ell) \mid \cF_{\ell,3}] \leq \Pr[\cB'_\ell] \leq T^{-4}
    \]
    by \cref{lem:bad_neigbor_elim}.
    This yields the claim for the last case and completes the proof of the lemma.
\end{proof}

As a direct corollary of \cref{lem:test_alg_guarantee}, we get that \cref{alg:tester} is unlikely to output a solution with cost more than $\alpha \theta_\ell$ when run in phase $\ell$.

\begin{corollary} \label{cor:test_alg_cor}
    Suppose that \cref{alg:tester} is run in phase $\ell$ with solution $x_\ell$ and let $x_{\new}$ be the random solution which it outputs.  Then we have that $\Pr[\cost(x_{\new}) > \alpha \theta_\ell] \leq (\ell+1) T^{-4}$.
\end{corollary}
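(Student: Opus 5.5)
The plan is to obtain \cref{cor:test_alg_cor} from \cref{lem:test_alg_guarantee} by the law of total probability over the three events $\cF_{\ell,1},\cF_{\ell,2},\cF_{\ell,3}$. First I will condition on the realized current solution $x_\ell$ at the moment \cref{alg:tester} is invoked. Given $x_\ell$, the sets $\cN(x_\ell)$ and $\cN_B(x_\ell)$ and all neighbor costs are fixed, so exactly one of the three events holds. Indeed, either every neighbor is bad ($\cF_{\ell,1}$); or some neighbor has cost at most $\alpha^2\theta_\ell$ ($\cF_{\ell,2}$); or there are non-bad neighbors but all of them have cost in $[\alpha^2\theta_\ell,\alpha\theta_\ell]$ ($\cF_{\ell,3}$). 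Moreover, the samples used inside \cref{alg:tester} are fresh and independent of how $x_\ell$ was reached. Hence the bounds of \cref{lem:test_alg_guarantee}, whose proofs use only \cref{lem:concentration} on these fresh samples, hold conditionally on any fixed $x_\ell$.

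Next I will bound the conditional probability in each case.
\begin{itemize}
\item On $\cF_{\ell,2}$, part (b) of \cref{lem:test_alg_guarantee} gives $\Pr[\cost(x_{\new})>\alpha\theta_\ell\mid \cF_{\ell,2}]\le(\ell+1)T^{-4}$ directly. This case already covers both sub-outcomes: a bad neighbor is returned, or $x_\ell$ is returned after the good neighbor was wrongly rejected.
\item On $\cF_{\ell,1}$ and on $\cF_{\ell,3}$, any \emph{neighbor} output with cost above $\alpha\theta_\ell$ must be a bad neighbor that survived past sub-phase $\ell\ge\ell'(x')$. This is contained in $\cB'_\ell$, so parts (a) and (c) together with \cref{lem:bad_neigbor_elim} bound this probability by $T^{-4}\le(\ell+1)T^{-4}$.
\end{itemize}
Averaging these conditional bounds over $x_\ell$, and over which event holds, then gives the stated $(\ell+1)T^{-4}$.

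The step I expect to be the real obstacle is the remaining sub-case of $\cF_{\ell,1}$ and $\cF_{\ell,3}$ in which $x_{\new}=x_\ell$. Here $\cost(x_{\new})=\cost(x_\ell)$ is deterministic given $x_\ell$, and it need not be at most $\alpha\theta_\ell$; under $\cG_\ell$ we only know $\cost(x_\ell)\le\theta_\ell$. So no concentration argument can make this event small, and the literal event $\{\cost(x_{\new})>\alpha\theta_\ell\}$ cannot be bounded this way.

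To handle it, I would first try to show that this sub-case is harmless for how the corollary is used. When $x_{\new}=x_\ell$, \cref{alg:eps_regret} breaks out of the phase loop and never sets $x_{\ell+1}=x_{\new}$. Moreover, in $\cF_{\ell,1}$ and $\cF_{\ell,3}$ every neighbor has cost at least $\alpha^2\theta_\ell\ge\beta\cost(x_\ell)$. By \cref{def:improving_moves}, this already certifies $\cost(x_\ell)\le\gamma\opt$, which is exactly the guarantee needed for $x_{\last}$.

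I would therefore prove the corollary in the form in which it is applied: the probability that \cref{alg:tester} outputs a solution that is adopted as the next phase's solution, i.e.\ $x_{\new}\neq x_\ell$ or the $\cF_{\ell,2}$ case, with $\cost(x_{\new})>\alpha\theta_\ell$, is at most $(\ell+1)T^{-4}$. I would state explicitly that the literal statement needs this reading. If the surrounding argument does need the literal event, the fallback is to check that in $\cF_{\ell,1}$ and $\cF_{\ell,3}$ this event is simply excluded from what \cref{lem:bad_phase_prob_given_previous_good_phase} requires. That reduces the claim to the adopted-solution version above.
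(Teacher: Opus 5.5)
Your route is the paper's route: it also applies the law of total probability over $\cF_{\ell,1},\cF_{\ell,2},\cF_{\ell,3}$. It uses part (b) of \cref{lem:test_alg_guarantee} for the $(\ell+1)T^{-4}$ term and parts (a) and (c) for the other two.

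Your diagnosis of the obstacle is also right, and in fact the literal statement is false. Suppose every solution has deterministic cost $C_{\max}$. In phase $1$ the estimate of $x_1$ equals $\theta_1>\frac{2\alpha^2}{1+\alpha}\theta_1$, so \cref{alg:tester} is called. Every neighbor fails sub-phase $1$, and \cref{alg:tester} returns $x_1$ with probability $1$. Hence $\Pr[\cost(x_{\new})>\alpha\theta_1]=1$, which exceeds $2T^{-4}$.

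The paper's proof does not escape this. Its first display is written as an equality. But on $\cF_{\ell,1}$ and $\cF_{\ell,3}$ it silently replaces $\{\cost(x_{\new})>\alpha\theta_\ell\}$ by events that require $x_{\new}\neq x_\ell$. That is the content of its remark that under $\cF_{\ell,1}$ a bad output must differ from $x_\ell$. So the paper, too, is really bounding the restricted event you propose.

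As you say, that restricted event is all \cref{lem:bad_phase_prob_given_previous_good_phase} uses. In Cases 2 and 3 there, an output $x_{\new}=x_\ell$ triggers the break in \cref{alg:eps_regret}. Then $\cM_{\ell+1}$, and hence $\cB_{\ell+1}$, cannot occur. Your explicit conditioning on the realized $x_\ell$, with fresh samples inside \cref{alg:tester}, is a step the paper leaves implicit and is worth keeping.

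Two minor points.

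First, your gloss ``adopted as the next phase's solution'' does not quite match the event you write down. On $\cF_{\ell,2}$, an output $x_{\new}=x_\ell$ also triggers the break, so it is not adopted. This is harmless, because part (b) bounds that piece anyway.

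Second, there is a simpler argument. If $x_{\new}\neq x_\ell$ and $\cost(x_{\new})>\alpha\theta_\ell$, then $x_{\new}$ is a bad neighbor that passed every sub-phase. In particular it passed sub-phase $\ell'(x_{\new})\le\ell$. So this event lies in $\cB'_\ell$ and has probability at most $T^{-4}$ by \cref{lem:bad_neigbor_elim}, with no case split. The extra $\ell T^{-4}$ in part (b) comes from wrongly rejecting a good neighbor. That matters for the analysis of $x_{\last}$, not for adoption.
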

 
\begin{proof}
    Under event $\cF_{\ell, 1}$, the only way for \cref{alg:tester} to output a bad solution is to output something other than $x_\ell$.
    Similarly, the only for \cref{alg:tester} to output a bad solution under event $\cF_{\ell, 3}$ is to output something other than $x_{\ell}$ which happens to be bad. 
    Then it follows from the law of total probability and \Cref{lem:test_alg_guarantee} that
    \begin{align*}
    \Pr  [\cost(x_{\new}) > \alpha \theta_\ell] = & \Pr[x_{\new} \neq x_\ell \mid \cF_{\ell,1}]\Pr[\cF_{\ell, 1}] \\
     & + \Pr[\cost(x_{\new}) > \alpha \theta_\ell \mid \cF_{\ell,2}] \Pr[\cF_{\ell,2}]  \\  &+  \Pr[(\cost(x_{\new}) > \alpha \theta_\ell) \wedge (x_{\new} \neq x_\ell) \mid \cF_{\ell,3}] \Pr[\cF_{\ell,3}] \\
     \leq & (\ell+1)T^{-4}(\Pr[\cF_{\ell, 1}] + \Pr[\cF_{\ell, 2}] + \Pr[\cF_{\ell, 3}]) \\
     = & (\ell+1)T^{-4}.
    \end{align*} 
\end{proof}

\subsubsection{Proof of \texorpdfstring{\cref{lem:bad_phase_prob_given_previous_good_phase}}{Bad Event Lemma}} \label{sec:proof_of_bad_phase_given_previous_good_phase} 

Now we return to the proof of \cref{lem:bad_phase_prob_given_previous_good_phase} which will utilize the tools we developed from the analysis of \cref{alg:tester}.

\begin{proof}[Proof of \cref{lem:bad_phase_prob_given_previous_good_phase}]
    Recall that $\cG_\ell = \neg  \cC_\ell \vee \neg \cM_\ell$, i.e., the good event for phase $\ell$ happens if either \cref{alg:eps_regret} doesn't make it to phase $\ell$ or we have $\cost(x_\ell) \leq \theta_\ell$. 
    First, we claim that $\Pr[\cB_{\ell+1} \mid \cG_\ell] \leq \Pr[\cB_{\ell+1} \mid \neg \cC_\ell]$.
    To see this, we have:
    \begin{align*}
        \Pr[\cB_{\ell+1} \mid \cG_\ell] & = \frac{\Pr[\cB_{\ell+1} \wedge (\neg \cC_\ell \vee \neg \cM_\ell)]}{ \Pr[\neg \cC_\ell \vee \neg \cM_\ell ]} \\
        & = \frac{\Pr[(\cB_{\ell+1} \wedge \neg \cC_\ell) \vee (\cB_{\ell+1} \wedge \neg \cM_{\ell})]}{\Pr[\neg \cC_\ell \vee \neg \cM_\ell ]} \\
        & \leq \frac{\Pr[\cB_{\ell+1} \wedge \neg \cC_\ell]}{{\Pr[\neg \cC_\ell \vee \neg \cM_\ell ]}} + \frac{\Pr[\cB_{\ell+1} \wedge \neg \cM_{\ell}]}{\Pr[\neg \cC_\ell \vee \neg \cM_\ell ]} \\
        & \leq \frac{\Pr[\cB_{\ell+1} \wedge \neg \cC_\ell]}{{\Pr[\neg \cC_\ell  ]}} + \frac{\Pr[\cB_{\ell+1} \wedge \neg \cM_{\ell}]}{\Pr[ \neg \cM_\ell ]} \\
        & = \Pr[\cB_{\ell+1} \mid \neg \cC_\ell] + \Pr[\cB_{\ell+1} \mid \neg \cM_{\ell}]\\
        & = \Pr[\cB_{\ell+1} \mid \neg \cC_\ell]
    \end{align*}
    The first line follows from the definition of $\cG_\ell$ and conditional probability while the second line follows from the distributive law for $\wedge$ and $\vee$.
    We use a union bound in the third line and the observation that $\cG_\ell$ contains both $\neg \cC_\ell$ and $\neg \cM_\ell$ in the fourth line.
    To finish, the fifth line is again the definition of conditional probability and the last step uses the observation that $\Pr[\cB_{\ell+1} \mid \neg \cM_\ell] = 0$ since it is impossible for the algorithm to make it to phase $\ell+1$ if it has not made it to phase $\ell$.

    Thus we may focus on bounding $\Pr[ \cB_{\ell+1} \mid \neg \cC_\ell]$.
    Under   condition $\neg \cC_\ell$,   $\cost(x_\ell) \leq \theta_\ell$ by definition.
    We analyze three cases depending on how $\cost(x_\ell)$ relates to $\theta_\ell$.
    \begin{description}
        \item[Case 1: $\cost(x_\ell) \leq \alpha^2 \theta_\ell$.]
        In this case, we have $\cost(x_\ell) \leq \alpha \theta_{\ell+1}$ and the current cost is small enough to move to phase $\ell+1$.
        Denote the event that $\cost(x_\ell) \leq \alpha^2 \theta_{\ell}$ by $\cE_{\ell, 1}$.
        We will show that $\Pr[\cB_{\ell+1} \mid \neg \cC_\ell \wedge \cE_{\ell, 1}]$ is small.
        If \cref{alg:eps_regret} moves to phase $\ell+1$ with $x_{\ell+1} = x_\ell$, then $\cost(x_{\ell+1}) < \theta_{\ell+1}$ so $\cB_{\ell+1}$ doesn't happen.
        Thus the only way for $\cB_{\ell+1}$ to happen is if the ``if'' statement on line 10 evaluates to true, which occurs when $\widehat{\cost}(x_\ell) > 2\alpha^2\theta_\ell/(1+\alpha)$.
        We will show that this happens with small probability.
        Since $\widehat{\cost}(x_\ell)$ is the average of $N_\ell$ independent samples each with mean $\cost(x_\ell)$, by the first part of \cref{lem:concentration} and the discussion above, we have
        \begin{align} \label{eq:bad_prob_lemma_case_one}
        \Pr[\cB_{\ell+1} \mid \neg \cC_{\ell} \wedge \cE_{\ell, 1}] \leq \Pr\left[ \widehat{\cost}(x_\ell) > \frac{2\alpha^2}{1+\alpha} \theta_\ell \right] \leq M^{-1}T^{-4} \leq T^{-4},
        \end{align}
        completing the argument for this case.
    
        \item[Case 2: $\alpha \theta_\ell \leq \cost(x_\ell) \leq \theta_\ell$.]  
        For this case, we want \cref{alg:tester} to be called with high probability and that it succeeds with high probability.
        Denote the event that $\alpha \theta_\ell \leq \cost(x_\ell) \leq \theta_\ell$ by $\cE_{\ell, 2}$.
        In this case, there are two ways that we could fail to satisfy $\cost(x_{\ell+1}) \leq \theta_{\ell+1}$. 
        The first  is by skipping the step which explores the neighborhood and moving to the next phase with $x_{\ell+1} = x_\ell$.
        This occurs when the ``if'' statement in line 10 of \cref{alg:eps_regret} evaluates to $\false$, which happens with probability at most $M^{-1}T^{-4}$ by part (b) of  \cref{lem:concentration}.
        Assuming   \cref{alg:tester} is called, the second way that we could fail to satisfy $\cost(x_{\ell+1}) \leq \theta_{\ell+1}$ is if \cref{alg:tester} returns a bad neighboring solution.
        \cref{cor:test_alg_cor} shows that this occurs with probability at most $(\ell+1)T^{-4} \leq T^{-3}$.
        Combining the bounds completes the analysis of  this case. 
        
        \item[Case 3: $\alpha^2 \theta_\ell < \cost(x_\ell) < \alpha \theta_\ell$.]
        This last case is more flexible; we use $\cE_{\ell,3}$ to denote the event that $\cost(x_\ell) \in (\alpha^2 \theta_\ell, \alpha \theta_\ell)$.
        Since $\cost(x_\ell) \leq \alpha \theta_\ell$, we will be safe in either the situation that $\widehat{\cost}(x_\ell) > 2\alpha^2 \theta_\ell / (1+\alpha)$ (and so we set $x_{\ell+1} = x_\ell$) or not (and so we use \cref{alg:tester} to determine $x_{\ell+1}$).
        In the first situation, we clearly have $\cost(x_\ell) \leq \alpha \theta_\ell = \theta_{\ell+1}$.
        In the other situation, we use the same argument as in the previous case to argue that the probability of setting $x_{\ell+1}$ to a bad neighbor is small.
        Thus we conclude that $\Pr[\cB_{\ell+1} \mid \neg \cC_{\ell} \wedge \cE_{\ell, 3}] \leq T^{-4}$
    \end{description}
    Since the three cases above are exhaustive, i.e., $\cE_{\ell, 1} \vee \cE_{\ell, 2} \vee \cE_{\ell, 1} \equiv \true$ when conditioned on $\cC_{\ell}$, we conclude by the law of total probability that
    \[
    \Pr[\cB_{\ell+1} \mid \cC_\ell] = \sum_{i=1}^3 \Pr[\cB_{\ell+1} \mid \cC_\ell \wedge \cE_{\ell,i}] \Pr[ \cE_{\ell,i} \mid \cC_\ell]\leq T^{-3}  \sum_{i=1}^3 \Pr[ \cE_{\ell,i} \mid \cC_\ell] = T^{-3},
    \]
    completing the proof of this lemma.
\end{proof}

\subsubsection{Bounding the Number of Phases} \label{sec:number_of_phases}

Let $L$ be the index of the last complete phase, i.e., \cref{alg:eps_regret} either finds a local solution $x_{\last}$ in phase $L$ (which is then played until period $T$) or period $T$ is reached in phase $L+1$.
We will show that $L$ is logarithmic in $T$, which will be helpful to establish our regret bound later.

\begin{lemma} \label{lem:phase_bound}
    $L \leq \log(T) / \log ( \frac{1}{\alpha})$
\end{lemma}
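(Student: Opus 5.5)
The plan is a deterministic counting argument. I will not use the cost estimates at all. The only fact needed is that every phase uses its own solution for a number of periods inversely proportional to its threshold, and that the threshold shrinks geometrically.

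\emph{Step 1: each completed phase consumes many periods.} Fix any phase $\ell \le L$. By definition of $L$ this phase is complete, so line 8 of \cref{alg:eps_regret} plays $x_\ell$ for the full $N_\ell$ periods. Since $\theta_\ell = \alpha^{\ell-1} C_{\max}$, we get
\[
N_\ell = \frac{3 C_{\max}(4\log T + \log M)}{\delta^2\alpha^2 \theta_\ell} = \frac{3(4\log T + \log M)}{\delta^2 \alpha^{\ell+1}} \ \ge\ \alpha^{-(\ell+1)} ,
\]
where the last inequality uses $\delta, \alpha \in (0,1)$ and $4\log T \ge 1$ for $T \ge 2$. These phase-$\ell$ periods are disjoint from those of every other phase, including the sub-phase periods of \cref{alg:tester}. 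So I only need this single term and can ignore the rest.

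\emph{Step 2: compare with the horizon.} The algorithm runs for only $T$ periods. Completing phase $L$ therefore requires $N_L \le T$; summing the $N_\ell$ would only strengthen this. Step 1 then gives $\alpha^{-(L+1)} \le T$, so $(L+1)\log(1/\alpha) \le \log T$. This yields $L \le \log T/\log(1/\alpha) - 1$, which is stronger than the claim.

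\emph{Step 3: the definitional edge case.} The main subtlety is the convention for $L$. In one scenario, $L$ is the last complete phase and phase $L+1$ is cut off at period $T$. In the other, phase $L$ finds $x_{\last}$, which is then played for the remaining periods. In both scenarios phase $L$ ran in full, so $N_L \le T$ holds. If one prefers a convention in which phase $L$ might be truncated, the slack of $-1$ from Step 2 absorbs it. Indeed, phase $L-1$ is complete, so $\alpha^{-L} \le T$, which still gives $L \le \log T/\log(1/\alpha)$. No probabilistic argument is needed, and the bound holds surely.
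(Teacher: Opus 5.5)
Your argument is correct and is essentially the paper's: both bound the number of periods spent on line 8 of \cref{alg:eps_regret} by $T$ and use that $N_\ell = 3(4\log T + \log M)/(\delta^2\alpha^{\ell+1})$ grows geometrically in $\ell$. The only difference is that the paper sums the geometric series $\sum_{\ell \le L} N_\ell$ and then rearranges, whereas you keep just the last term via $N_L \le T$. Your version is simpler and already gives the slightly stronger bound $L \le \log T/\log(1/\alpha) - 1$.
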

\begin{proof}  
    By definition of $L$ we know that $\sum_{\ell=1}^L N_\ell \leq T$, since this is just counting the number of plays due to line 7 in \cref{alg:eps_regret}.  
    From the definition of $N_\ell$, we have:
    \begin{align*}
        T &\geq \sum_{\ell=1}^L N_\ell \\
        & = \sum_{\ell=1}^L \left( \frac{3 C_{\max}}{\delta^2 \alpha^2 \theta_\ell} \left(  4 \log  T + \log M \right)  \right) \\
        & \geq \left( \frac{12 \log T}{\delta^2\alpha^2} \right) \sum_{\ell=1}^L \left(\frac{1}{\alpha} \right)^{\ell-1} \\
        & = \left( \frac{12 \log T}{\delta^2\alpha^2} \right) \left(  \frac{\left(\frac{1}{\alpha} \right)^{L} - 1}{\frac{1}{\alpha} -1} \right).
    \end{align*}
    Rearranging this inequality to solve for $L$ and simplifying yields result.
\end{proof} 

\subsubsection{Final Regret Analysis} \label{sec:final_regret_bound}  

To complete the analysis, we separately analyze the $\gamma$-regret incurred by the algorithm   during phases 1 to $L$ and the $\gamma$-regret after phase $L$.

\paragraph{Regret from Phases $1$ to $L$:} 
Consider phase $\ell$.
We incur regret from playing the solution $x_\ell$ and also from playing   neighboring solutions when  \cref{alg:tester} is called.
Let $R_{\ell,1}$ and $R_{\ell, 2}$ represent each of these quantities, respectively.
We start by bounding $R_{\ell, 1}$.
Since $\opt \geq 0$, by the law of total expectation we have:
\begin{align*}
    \E[R_{\ell, 1}] & = \E[R_{\ell, 1} \mid \neg \cB_{\ell}] \Pr[ \neg \cB_\ell] + \E[R_{\ell, 1} \mid  \cB_{\ell}] \Pr[ \cB_\ell] \\
    & \leq \theta_\ell N_\ell + C_{\max} N_{\ell} \ell T^{-3} \\
    & \leq      \frac{12 C_{\max}}{\delta^2 \alpha^2}\left( \log T + \log M \right) + C_{\max} T^{-1}.
\end{align*}
The first inequality follows since $\cost(x_\ell) \leq \theta_{\ell}$ under $\neg \cB_{\ell}$ (giving the first term) and \cref{cor:bad_probability_bound} (which bounds $\Pr[\cB_\ell])$.
The second inequality follows from the definition of $N_\ell$ and the fact that $N_\ell$ and $\ell$ are both smaller than $T$.

Bounding $R_{\ell, 2}$ is similar, but we have to account for each sub-phase.
We see that:
\begin{align*}
    \E[R_{\ell, 2}] & = \E[R_{\ell, 2} \mid \neg \cB'_{\ell}] \Pr[ \neg \cB'_\ell] + \E[R_{\ell, 2} \mid  \cB'_{\ell}] \Pr[ \cB'_\ell] \\
    & \leq M\sum_{\ell'=1}^\ell \theta_{\ell'} N_{\ell'} + C_{\max}\cdot T \cdot  T^{-4} \\
    & \leq  \frac{12 M \ell C_{\max} }{\delta^2 \alpha^2} \left( \log T + \log M \right) + C_{\max} T^{-3}.
\end{align*}
The first term in the first inequality follows since in the worst case we play each neighboring solution in all sub-phases, but under event $\neg\cB'_\ell$ we never play a solution with cost more than $\theta_{\ell'}$ in each sub-phase $\ell'$.
The second term follows since, under $\cB'_\ell$, we can't play a solution of cost more than $C_{\max}$ for more than $T$ steps and $\Pr[\cB'_\ell] \leq T^{-4}$ by \cref{lem:bad_neigbor_elim}.

\paragraph{Regret Incurred after Phase $L$:}
Let $R_{>L}$ denote the regret incurred after phase $L$.
We need to consider both the case where \cref{alg:eps_regret} terminated with a solution $x_{\last}$ and when it did not.
Denote the former event by $\cL$.
Then we have
\begin{align*}
    \E[R_{>L}] =\E[R_{>L} \mid \cL]\Pr[\cL] + \E[R_{>L} \mid \neg \cL]\Pr[\neg\cL].
\end{align*}
We now focus on each conditional expectation and show that the regret is small in each.

Consider the first case where we condition on $\cL$, and so we have that \cref{alg:eps_regret} terminated with some solution $x_{\last}$ which is used for all remaining rounds.
This happens when \cref{alg:tester} is run in phase $L$ and returns the solution $x_L$.
We want to show that it is unlikely for $x_L$ to satisfy $\cost(x_L) > \gamma \opt$.
Suppose  $\cost(x_L) > \gamma \opt$, then by \cref{def:improving_moves}, we have that $\cN(x_{\ell})$ contains a solution $x'$ with $\cost(x') \leq \beta \cost(x_L) = \alpha^2 \cost(x_L)$.
This means that event $\cF_{\ell, 2}$ has occurred, and by \cref{lem:test_alg_guarantee} the probability of \cref{alg:tester} yielding $x_L$ as output is at most $(\ell+1)T^{-4} \leq T^{-3}$.
If, instead,  $\cost(x_L) \leq \gamma \opt$, we incur zero $\gamma$-regret.
Together,  $\E[R_{>L} \mid \cL] \leq C_{\max} T \cdot T^{-3}$.

Now suppose $\cL$ does not occur.
In this case,   period $T$ was reached before finding an approximate local optimum, midway through executing an incomplete phase $L+1$.
We can  use similar techniques to bound $R_{>L}$ for this incomplete phase as we for     $R_\ell$ above, in fact, we can simply consider $R_{>L}$ to be  $R_{L+1}$. 
It follows from the same arguments as before that\[
\E[R_{>L} \mid \neg\cL] = \E[R_{L+1}] \leq \frac{12 (M+1) (L+1) C_{\max} }{\delta^2 \alpha^2} \left( \log T + \log M \right) + 2C_{\max} T^{-1}.
\]
Putting everything together, 
\[
\E[R_{>L}] \leq \frac{12 (M+1) (L+1) C_{\max} }{\delta^2 \alpha^2} \left( \log T + \log M \right) + 3C_{\max} T^{-1}.
\]

\paragraph{Combining the Bounds:}
Summing all the terms together, we have:
\begin{align*}
\regret_{\gamma}(\cA, \cD, T) &\leq \sum_{\ell=1}^L \left( \E[R_{\ell, 1}] +\E[R_{\ell, 2}] \right) + \E[R_{>L}] \\
& \leq O \left(  \frac{M C_{\max}}{\delta^2 \alpha^2} \left( \log T + \log M \right) \right) \sum_{\ell=1}^L \ell + O \left( \frac{M C_{\max} L}{\delta^2 \alpha^2} \left( \log T + \log M \right) \right) \\
& \leq O \left( \frac{M C_{\max} L^2}{\delta^2 \alpha^2} \left( \log T + \log M \right)\right) \\
& \leq O\left( \frac{M C_{\max} \log^2 T}{\delta^2 \alpha^2 \log^2 \frac{1}{\alpha}} \left( \log T + \log M \right)\right),
\end{align*}
where in the last step we used the bound on $L$ from \cref{lem:phase_bound}.
This establishes \cref{thm:gamma_regret_main_result}.

\section{Applications of Our Framework} \label{sec:applications}

\subsection{Stochastic Completion Time Scheduling on a Single Machine}\label{ssec:completiontime1}
We consider the problem of scheduling stochastic jobs to minimize total completion time on a single machine.  
Let $\cD$ be a distribution over job sizes which is  unknown to the scheduler.
There are $n$ jobs with stochastic sizes $P = (P_1, P_2, \ldots, P_n) \sim \cD$. 
Let $\mu_j := \E[P_j]$   denote the expected size of job $j$. We only require independence across time periods; a pair of jobs $j$ and $j'$ may have correlated size distributions.
We also make a standard assumption that the job size distribution is bounded and normalized to be in $[0,1]$. 

A schedule is given by a permutation $\pi:[n] \to [n]$ specifying that job $j$ is scheduled in position $\pi(j)$. Let $\Pi_n$ be the permutations on $[n]$, the feasible schedules. 
Given a schedule $\pi$,   let $C_j(\pi, P):= P_j + \sum_{j' : \pi(j') < \pi(j)} P_{j'}$ be the completion time of job $j$ under schedule $\pi$ and sizes $P$.
After reordering the summation, the cost of a schedule $\pi$ is
\begin{align} \label{eqn:weighted_completion_time}
    \cost(\pi, P) = \sum_{j=1}^n  C_j(\pi, P)  = \sum_{j=1}^n (n - \pi(j) + 1) P_j.
\end{align}
 
Let $\opt := \min_\pi \E_{P \sim \cD}[\cost(\pi, P )]$. 
We focus on learning from limited feedback using  fixed, non-preemptive schedules.
In the single machine case full-information case,   it is well known that the optimal schedule is given by applying Smith's rule to the expected job sizes: $\pi^*$ orders the jobs in non-decreasing order of $\mu_j$, or shortest expected processing time first~\citep{Rothkopf1966,Smith1956}.   

When learning to schedule with bandit feedback,   
the scheduler gains information by interacting  with unknown $\cD$ through the following process which occurs in each of  $T$ discrete time periods.
In period $t \in [T]$, the scheduler commits to a schedule $\pi^t$ before job sizes are realized according to   an independent sample $P^t \sim \cD$. Then the scheduler observes $\cost(\pi^t, P^t)$,  the realized total completion time of schedule $\pi^t$,  but does not observe the realized job sizes $P^t$.  

For a schedule $\pi$, define $\err(\pi):= \E[\cost(\pi, P)] - \opt$ to measure the expected regret we incur by using $\pi$ for one step. 
Recall that in this case an optimal schedule $\pi^*$ must schedule $j$ after $j'$ if and only if $\mu_j > \mu_{j'}$. 
We say that $\pi$ \emph{inverts} a pair of jobs $j, j'$ with $\mu_j > \mu_{j'}$ if instead it has  $\pi(j) < \pi(j')$. The following proposition shows that we can write $\err(\pi)$ in terms of inversions in $\pi$ from an optimal schedule.

\begin{proposition}[\citet{DBLP:conf/spaa/LindermayrM22}] \label{prop:error_decomposition}
    For any schedule $\pi$,  
    \[
    \err(\pi) = \sum_{j,j' : \mu_j > \mu_{j'}} \left( \mu_j - \mu_{j'} \right) \bfo\{ \pi(j) < \pi(j') \}.
    \]
\end{proposition}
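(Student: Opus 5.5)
The plan is to write both $\E[\cost(\pi,P)]$ and $\opt$ as sums over unordered pairs of jobs and then compare them pair by pair.

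By linearity of expectation and \eqref{eqn:weighted_completion_time}, $\E[\cost(\pi,P)] = \sum_j (n-\pi(j)+1)\mu_j$. Correlations between job sizes play no role here, since only the marginal means $\mu_j$ enter. Next I will re-express this through completion times: $\E[C_j(\pi,P)] = \mu_j + \sum_{j':\pi(j')<\pi(j)} \mu_{j'}$. Summing over $j$ gives
\[
\E[\cost(\pi,P)] = \sum_j \mu_j + \sum_{\{j,j'\}} \mu_{\text{first}_\pi(j,j')},
\]
where the second sum runs over unordered pairs and $\mu_{\text{first}_\pi(j,j')}$ is the mean of whichever job of the pair $\pi$ places first. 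Indeed, each unordered pair contributes the earlier job's mean exactly once, namely to the completion time of the later job.

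The first term $\sum_j \mu_j$ does not depend on $\pi$. So everything reduces to the pair terms: each pair contributes $\min(\mu_j,\mu_{j'})$ when it is ordered shortest-expected-first, and $\max(\mu_j,\mu_{j'})$ when it is inverted. By Smith's rule, already cited, the schedule $\pi^*$ sorting jobs by nondecreasing $\mu$ attains $\min$ on every pair simultaneously. Hence
\[
\opt = \sum_j \mu_j + \sum_{\{j,j'\}} \min(\mu_j,\mu_{j'}).
\]
If one prefers not to rely on the citation, the same pairwise formula shows directly that no schedule can do better, because every pair term is at least the $\min$.

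Subtracting the two expressions, each pair with $\mu_j>\mu_{j'}$ contributes $(\mu_j-\mu_{j'})$ exactly when $\pi(j)<\pi(j')$, and $0$ otherwise. Pairs with equal means contribute $0$ in either order. This is exactly the claimed formula for $\err(\pi)$.

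There is no real obstacle. The only point that needs care is the bookkeeping step that rewrites $\sum_j(n-\pi(j)+1)\mu_j$ as a sum over pairs. That step is handled by the double-counting of completion times described above: each job $j$ appears in its own completion time and in the completion times of the $n-\pi(j)$ jobs scheduled after it.
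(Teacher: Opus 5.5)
Your proof is correct and complete. The paper gives no proof of this proposition; it imports it from Lindermayr and Megow, so your argument serves as a self-contained justification rather than a reconstruction of an in-paper proof.

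Your route has three steps:
\begin{enumerate}
\item Rewrite $\E[\cost(\pi,P)] = \sum_j (n-\pi(j)+1)\mu_j$ as $\sum_j \mu_j$ plus, for each unordered pair, the mean of whichever job $\pi$ schedules first.
\item Compare each pair term with $\min(\mu_j,\mu_{j'})$.
\item Subtract.
\end{enumerate}
This is the standard argument. It also proves optimality of the nondecreasing-$\mu$ order along the way: every pair term is at least the minimum, and the sorted order attains all minima simultaneously. So, as you note, the appeal to Smith's rule is not actually needed.

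Your handling of ties is right: such pairs contribute $0$ in either order and are excluded from the strict sum over $\mu_j > \mu_{j'}$. Your handling of correlated job sizes is also right, since only marginal means enter by linearity.

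A common alternative is an exchange argument. Swapping an adjacent inverted pair lowers the expected cost by exactly $\mu_j - \mu_{j'}$ and removes exactly that one inversion, so bubble-sorting $\pi$ into sorted order telescopes to the formula. That route, however, needs optimality of the sorted order as a separate input, which your pairwise decomposition supplies for free.
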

Let $\cN(\pi) = \{ \pi'\in \Pi_n \mid \exists i\neq j, \pi(i) = \pi'(j), \pi(j) = \pi'(i), \text{ and }\pi(k) = \pi'(k) \text{ for all } k\not\in [n]\setminus\{i,j\}\}$ be the the neighborhood of schedules that consists of swapping a single pair of jobs. We show this admits $(\beta, \gamma)$-improving moves, with $\gamma = 1 + \epsilon$ for any $\epsilon > 0$ and $\beta = 1 - \epsilon/n^2.$

\begin{lemma} \label{lem:swap_lemma}
    Let $\pi$ be any schedule with $\cost(\pi) > (1+\epsilon)\opt$ with $\epsilon \in (0,1)$, then there exists a schedule $\pi'$ obtainable from $\pi$ by swapping a single pair of jobs such that
    \[
    \cost(\pi') \leq \left( 1 - \frac{\epsilon}{n^2} \right)\cost(\pi).
    \]
\end{lemma}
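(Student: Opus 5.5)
Write $\cost(\pi) = \opt + \err(\pi)$ and expand $\err(\pi)$ over inversions using \cref{prop:error_decomposition}. The hypothesis $\cost(\pi) > (1+\epsilon)\opt$ means $\opt < \cost(\pi)/(1+\epsilon)$, and therefore
\[
\err(\pi) = \cost(\pi) - \opt > \frac{\epsilon}{1+\epsilon}\cost(\pi) \geq \frac{\epsilon}{2}\cost(\pi),
\]
using $\epsilon<1$. The goal is then to find one swap that removes at least a $2/n^2$ fraction of $\err(\pi)$. That would give $\cost(\pi') \leq \cost(\pi) - \epsilon\cost(\pi)/n^2$, which is the claim.

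The swap I would use is an \emph{adjacent} swap. Let $\sigma$ be the sequence of jobs in order of position. Take consecutive jobs $a,b$ with $\pi(b)=\pi(a)+1$ and $\mu_a>\mu_b$; such a pair is an adjacent inversion. Swapping them changes only the relative order of $a$ and $b$, since every other job keeps its position relative to both. By \cref{prop:error_decomposition}, the swap therefore reduces $\err$ by exactly $\mu_a-\mu_b$. (Equivalently, via the cost formula, the change is $-(\mu_a-\mu_b)$, since the two coefficients $n-\pi(\cdot)+1$ differ by one.)

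It remains to show that some adjacent inversion has gap $\mu_a-\mu_b \geq \err(\pi)/\binom{n}{2}$. The natural tool is a telescoping argument. For any inverted pair $(j,j')$, with $j$ placed before $j'$ and $\mu_j>\mu_{j'}$, walk along $\sigma$ from $j$ to $j'$. The consecutive differences $\mu_{\sigma(k)}-\mu_{\sigma(k+1)}$ sum to $\mu_j-\mu_{j'}$. Only the positive terms can help, and these are exactly the gaps of adjacent inversions, so
\[
\mu_j-\mu_{j'} \leq \sum_{\text{adjacent inversions in between}} (\text{gap}) \leq (n-1)\, g,
\]
where $g$ is the largest adjacent-inversion gap. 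Summing over at most $\binom{n}{2}$ inversions gives $\err(\pi)\leq \binom{n}{2}(n-1)g$. That bound is too weak by a factor of $n$.

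A cleaner route is to take the inversion $(j,j')$ with the largest gap $\mu_j-\mu_{j'}$ overall. It satisfies $\mu_j-\mu_{j'} \geq \err(\pi)/\binom{n}{2}$, and I would swap $j$ and $j'$ directly rather than adjacently. Swapping them puts $j'$ at $\pi(j)$ and $j$ at $\pi(j')$. Using the cost formula, the change in expected cost is
\[
\bigl(\pi(j')-\pi(j)\bigr)\,\bigl(\mu_{j'}-\mu_j\bigr) \leq -(\mu_j-\mu_{j'}),
\]
because $\pi(j')-\pi(j)\geq 1$.

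So the cost drops by at least $\err(\pi)/\binom{n}{2} \geq 2\err(\pi)/n^2 > \epsilon\cost(\pi)/n^2$. This avoids tracking inversions among the jobs in between, since the direct computation from the cost formula is exact. The main obstacle is only this bookkeeping: checking the exact cost change of a non-adjacent swap, which the linear form $\sum_j (n-\pi(j)+1)\mu_j$ makes immediate, and the constant-factor inequalities $\epsilon/(1+\epsilon)\geq\epsilon/2$ and $\binom n2\leq n^2/2$.
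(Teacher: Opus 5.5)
Your proposal is correct and, once you set aside the adjacent-swap detour, it is essentially the paper's proof. You lower-bound $\err(\pi)$ by $\epsilon\cost(\pi)/2$, pick the inverted pair with the largest gap (at least $\err(\pi)/\binom{n}{2}$), and swap that pair directly, using $\pi(j')-\pi(j)\geq 1$ in the linear cost formula to get a cost drop of at least $\mu_j-\mu_{j'}$.
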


\begin{proof}
First we lower bound $\cost(\pi) - \opt$ by $\epsilon  \cdot\cost(\pi)/ 2$ since
\begin{align*}
    \cost(\pi) - \opt & \geq \cost(\pi) - \frac{\cost(\pi)}{1+\epsilon}  = \frac{\epsilon \cdot\cost(\pi)}{1+\epsilon} \geq \frac{\epsilon \cdot \cost(\pi)}{2}.
\end{align*}
Next, we upper bound $\cost(\pi) - \opt = \err(\pi)$ as
\begin{align*}
\err(\pi) = \sum_{j, j': \mu_j > \mu_{j'}} (\mu_j - \mu_{j'}) \bone\{\pi(j) < \pi(j')\} \leq \binom{n}{2} \cdot \max_{j, j': \mu_j > \mu_{j'}} (\mu_j - \mu_{j'}) \bone\{\pi(j) < \pi(j')\}
\end{align*}
where we use the fact that there are at most $\binom{n}{2}$ terms in the sum.
Combining these bounds, we conclude that there exists a pair $j,j'$ with $\mu_j > \mu_j'$ and $\pi(j) < \pi(j')$ such that 
\[
\mu_j - \mu_{j'} \geq \frac{\epsilon \cdot \cost(\pi)}{2 \binom{n}{2}} \geq \frac{\epsilon \cdot \cost(\pi)} {n^2}.\]
For the  new schedule $\pi'$ created by swapping the positions of $j$ and $j'$ in $\pi$, we observe that
\[
\cost(\pi) - \cost(\pi') = (\pi(j') - \pi(j))(\mu_j - \mu_{j'}) \geq(\mu_j - \mu_{j'}) \geq \frac{\epsilon  \cdot\cost(\pi)}{n^2},\]
which uses the fact that  $\pi(j') - \pi(j) \geq 1$ by construction.  
The lemma follows after rearranging.
\end{proof}

\begin{corollary} \label{corr_scheduling}
    Consider the problem of scheduling to minimize total completion time with stochastic job sizes drawn from distribution $\cD$. For any $\epsilon > 0$  there is an algorithm $\cA$ (\cref{alg:eps_regret}) with
    \[
    \regret_{1+\epsilon}(\cA_, \cD, T)  
    = O\left(\frac{n^{12} \log^3 T}{\epsilon^4  }  \right).
    \] 
\end{corollary}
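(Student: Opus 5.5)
The plan is to instantiate \cref{thm:gamma_regret_main_result} with the swap neighborhood $\cN$ and read off each parameter. By \cref{lem:swap_lemma}, this neighborhood admits $(\beta,\gamma)$-improving moves with $\gamma = 1+\epsilon$ and $\beta = 1-\epsilon/n^2$, at least for $\epsilon\in(0,1)$. For larger $\epsilon$ I would run the algorithm with $\epsilon$ replaced by $\min(\epsilon,1/2)$; a smaller approximation factor only makes the regret smaller, and the bound is monotone in the right direction. So the work reduces to bounding $M$, $C_{\max}$, and the factor $\delta^{-2}\alpha^{-2}\log^{-2}(1/\alpha)$.

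First, the neighborhood size is $M = \binom{n}{2} \le n^2$, so $\log M = O(\log n)$. Since the $\log^3 T$ form is stated for sufficiently large $T$, the term $\log T + \log M$ is $O(\log T)$.

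Second, $C_{\max}$: because each $P_j\in[0,1]$, we have $\cost(\pi,P)=\sum_j (n-\pi(j)+1)P_j \le \sum_{k=1}^n k \le n^2$.

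Third, the $\beta$-dependent factors. Set $\alpha=\sqrt{\beta}=\sqrt{1-\epsilon/n^2}$, so $\alpha\ge 1/\sqrt2$ and $\alpha^{-2}=O(1)$. Next, $1-\alpha = (1-\beta)/(1+\alpha) \ge \epsilon/(2n^2)$, and $1+\alpha\le 2$, so $\delta \ge \epsilon/(4n^2)$ and $\delta^{-2}=O(n^4/\epsilon^2)$. Finally, $\log(1/\alpha) = \tfrac12\log(1/\beta) \ge \tfrac12(1-\beta) = \epsilon/(2n^2)$, using $-\log(1-x)\ge x$. This gives $\log^{-2}(1/\alpha) = O(n^4/\epsilon^2)$.

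Combining these, the theorem yields
\[
O\left(n^2\cdot n^2\cdot \frac{n^4}{\epsilon^2}\cdot\frac{n^4}{\epsilon^2}\cdot \log^3 T\right)=O\left(\frac{n^{12}\log^3 T}{\epsilon^4}\right),
\]
which matches the claim.

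There is no real obstacle here; it is all bookkeeping. The step needing the most care is the asymptotics of $\delta$ and $\log(1/\alpha)$ as $\beta\to1$: both scale like $1-\beta$, so together they contribute $(1-\beta)^{-4}=n^8/\epsilon^4$. This step is also where the restriction $\epsilon<1$ from \cref{lem:swap_lemma} must be checked to be harmless.
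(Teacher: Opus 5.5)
Your proposal is correct and follows the paper's proof essentially step for step: you instantiate \cref{thm:gamma_regret_main_result} with $M \le n^2$, $C_{\max} \le n^2$, $\log(1/\alpha) \ge \epsilon/(2n^2)$ and $\delta = \Omega(\epsilon/n^2)$, and absorb $\log M$ into $\log T$. Your estimate $\delta \ge \epsilon/(4n^2)$ is in fact the correct form of the paper's bound on $\delta^2\beta$, where the factor $16$ is inverted. One small caveat on your $\epsilon \ge 1$ remark: running with $\min(\epsilon,1/2)$ gives only $O(n^{12}\log^3 T)$ for large $\epsilon$, which is not monotone ``in the right direction'' relative to $\epsilon^{-4}$, so, like the paper (whose \cref{lem:swap_lemma} is stated only for $\epsilon\in(0,1)$), you establish the displayed bound only for bounded $\epsilon$.
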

\begin{proof}
    We apply \Cref{thm:gamma_regret_main_result} and simplify with problem-specific parameters. For the swap neighborhood, the maximum neighborhood size is $M\leq n^2$. Job sizes are at most 1, so $C_{max} \leq n^2.$ We can bound 
    \[
    \log(1/\alpha) = -\frac{1}{2}\log(1 - \epsilon/n^2) \geq \frac{1}{2}\frac{\epsilon}{n^2}
    \] since $\log(1 - x) \leq -x$ for $x> -1$. We can also bound \[
    \delta^2 \beta = \frac{(1 - \epsilon/n^2)(\epsilon/n^2)^2}{(1 + \sqrt{1 - \epsilon/n^2})^4} \geq 16 (1 - \epsilon/n^2)(\epsilon/n^2)^2 \geq 8\epsilon^2/n^4
    \] since $1 - \epsilon/n^2 < 1,$ implying $(1 + \sqrt{1 - \epsilon/n^2})^4 \leq 16$. The result follows after substitution and observing $\log n \leq \log T$.
\end{proof}
 
\subsection{Minimum Cost Base of a Matroid}\label{mst} 
We now apply our framework to the online stochastic bandit setting for finding a minimum cost base in a matroid. 
Let $M = (\cS, \I)$ be a matroid on ground set $\cS$ of cardinality $|\cS|=n$ with family of independent sets $\I$. Let $\B$ be its family of bases, or maximal independent sets, so that  each that $B\in \B$ satisfies $B\in \I$ and there is no $s\in S\setminus B$ such $B\cup s\in \I$. The \emph{rank} of set $S$, denoted $r(S)$, is the cardinality of the largest independent subset of $S$. The rank of matroid $M$ is the cardinality of a base, so $r(M) = |B|.$

In the minimum cost base problem, each element $s\in \cS$ is associated a cost and the objective is to find a base of minimum total cost. 
In the online stochastic bandit setting, the learner acts over a sequence of $T$ time periods, selecting base $B^t$ in period $t=1, \ldots, T$. 
In each period $t$, stochastic costs $Z^t =   \{Z_s^t\}_{s\in\cS}\sim \cD$ realize for each element $s \in \cS$, where $\cD$ is a joint distribution over edge costs. 
In time step $t$, given underlying realizations $z^t$, the learner observes only the  cost of base $B^t$, which  is $\cost(B^t, z^t) = \sum_{s\in B^t} z_s^t$. 
Let $\mu_e = \E_{Z \sim \cD}[Z_e]$ denote the expected cost of edge $e$, and let $B^* = \arg\min_{B'\in \B} \E_{Z\sim \cD}[\cost(B', Z)]$ be the base with lowest expected cost. 
 
We now define the local search neighborhood which will allow us achieve low regret online. A \emph{circuit} $C$ is a minimally dependent set: $C\not\in \cI$ but every subset of $C$ is independent.
\begin{definition} \label{def:mst_fundamental_cycle}
Given a base B of matroid $M$ and $s\in \cS\setminus B$, there is a unique circuit $C(s, B)$ such that $x\in C(s, B) \subseteq B\cup s$. Furthermore, for each $t\in C(s, B) $, $(B\cup s) \setminus t$ is also a base.  
\end{definition}

\begin{definition}
Given matroid $M$ with bases $\B$, let $\cN: \B \to 2^\B$ be the circuit swap neighborhood map. More formally, for each $B\in \B$, let 
\[
\cN(B) = \{B'\in \B \mid B' = (B\cup s)\setminus t \text{ for some } s\in \cS\setminus B, t \in C(s,B)   \},
\]
so $\cN(B)$ is the set of bases reachable from $B$ by adding an element $s\not\in B$ to B,  followed by removing an element $t$ from the resulting circuit $C(s, B).$
\end{definition}

We will make use of the following well-known property of matroids. 
\begin{lemma}[\citet{frank2011connections}]  \label{lem:matroid-bijection} If $B_1, B_2 \in \B$, then there exists a bijection $f:(B_1 \setminus B_2)\rightarrow (B_2 \setminus B_1)$ such that $B_1 \setminus \{x\} \cup f(x) \in \B$ for every $x\in B_1 \setminus B_2$. 
\end{lemma}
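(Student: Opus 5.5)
The plan is to obtain the bijection as a perfect matching in a bipartite ``exchange graph'' and to use Hall's theorem, with the Hall condition coming from a rank argument. Write $r$ for the rank of the matroid. All bases have cardinality $r$, so $|B_1 \setminus B_2| = r - |B_1\cap B_2| = |B_2 \setminus B_1|$. Hence any matching that saturates $B_1\setminus B_2$ is automatically a bijection onto $B_2\setminus B_1$.

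\textbf{The exchange graph.} Build a bipartite graph $H$ with left side $B_1\setminus B_2$ and right side $B_2\setminus B_1$. Put an edge $\{x,y\}$ exactly when $(B_1\setminus\{x\})\cup\{y\}\in\B$. By \cref{def:mst_fundamental_cycle}, for $y\in B_2\setminus B_1$ there is a unique circuit $C(y,B_1)\subseteq B_1\cup y$ containing $y$.

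I would first record the standard fact that $(B_1\setminus\{x\})\cup\{y\}$ is a base if and only if $x\in C(y,B_1)$. The ``if'' direction is stated in \cref{def:mst_fundamental_cycle}. For ``only if'': if $x\notin C(y,B_1)$, then $C(y,B_1)\subseteq (B_1\setminus\{x\})\cup\{y\}$, so this set is dependent. Consequently, for $X\subseteq B_1\setminus B_2$ the neighbourhood of $X$ in $H$ is
\[
N(X)=\{y\in B_2\setminus B_1 : C(y,B_1)\cap X\neq\emptyset\}.
\]

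\textbf{Hall's condition (the main step).} I need $|N(X)|\ge |X|$ for every $X\subseteq B_1\setminus B_2$, and I expect this rank argument to be the only nontrivial step. Take any $y\in B_2\setminus B_1$ with $y\notin N(X)$. Its circuit satisfies $C(y,B_1)\subseteq (B_1\setminus X)\cup\{y\}$, so $y\in\operatorname{span}(B_1\setminus X)$. Every element of $B_1\cap B_2$ also lies in $B_1\setminus X$. Together these give
\[
B_2\setminus N(X)\subseteq \operatorname{span}(B_1\setminus X).
\]
The set $B_2\setminus N(X)$ is independent, being a subset of a base, and has size $r-|N(X)|$. The span of $B_1\setminus X$ has rank $|B_1\setminus X| = r-|X|$. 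Therefore $r-|N(X)|\le r-|X|$, that is, $|N(X)|\ge |X|$.

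\textbf{Conclusion.} Hall's theorem now gives a matching in $H$ saturating $B_1\setminus B_2$. By the cardinality remark above, this matching defines a bijection $f:(B_1\setminus B_2)\to(B_2\setminus B_1)$. Each matched pair $\{x,f(x)\}$ is an edge of $H$, so $(B_1\setminus\{x\})\cup\{f(x)\}\in\B$ for every $x\in B_1\setminus B_2$, which is the claim.
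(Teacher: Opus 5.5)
Your proof is correct. The paper does not prove this lemma; it only cites Frank's book, so there is no in-paper argument to compare against. What you give is the standard Hall's-theorem proof of this exchange property (essentially Brualdi's), and every step checks out:
\begin{itemize}
\item the characterization $(B_1\setminus\{x\})\cup\{y\}\in\B \iff x\in C(y,B_1)$ for $y\notin B_1$;
\item the inclusion $B_2\setminus N(X)\subseteq \operatorname{span}(B_1\setminus X)$, which correctly uses that $X$ avoids $B_1\cap B_2$;
\item the resulting rank inequality $r-|N(X)|\le r-|X|$;
\item the identity $|B_1\setminus B_2|=|B_2\setminus B_1|$, which turns a saturating matching into a bijection.
\end{itemize}

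Compared with the bare citation, your route makes the matroid application self-contained. It shows the lemma follows from exactly the fundamental-circuit facts recorded in \cref{def:mst_fundamental_cycle} plus Hall's theorem. The printed definition says $x\in C(s,B)$ where it should say $s\in C(s,B)$; you correctly read it as the unique circuit containing the added element.
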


 We can now show that the circuit swap neighborhood admits $(\beta, \gamma)$-improving moves.

\begin{lemma} \label{lem:mst_improving_swaps}
    Given matroid $M$, the circuit swap neighborhood admits $(\beta, \gamma)$-improving moves with $\gamma = 1 + \epsilon$ for any $\epsilon > 0$ and $\beta = 1-\epsilon/(2r(M))$.
\end{lemma}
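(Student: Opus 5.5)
Fix a base $B$ with $\cost(B) > (1+\epsilon)\opt$, where $\cost(B) = \sum_{s\in B}\mu_s$ and $\opt = \cost(B^*)$. The plan mirrors the proof of \cref{lem:swap_lemma}: lower bound the gap $\cost(B)-\cost(B^*)$ by a constant fraction of $\cost(B)$, split the gap into at most $r(M)$ single-exchange terms using \cref{lem:matroid-bijection}, and show the best term is realized by a move in the circuit swap neighborhood $\cN(B)$.

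\textbf{Step 1 (gap).} As in the scheduling proof,
\[
\cost(B)-\opt \ \ge\ \cost(B) - \frac{\cost(B)}{1+\epsilon} \ =\ \frac{\epsilon\,\cost(B)}{1+\epsilon}.
\]
This is at least $\epsilon\,\cost(B)/2$ when $\epsilon\le 1$. For $\epsilon>1$, one can note that $\beta$ would only need to be smaller and adjust constants, or simply restrict to $\epsilon\in(0,1)$ as in \cref{lem:swap_lemma}.

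\textbf{Step 2 (decomposition).} Apply \cref{lem:matroid-bijection} with $B_1 = B$ and $B_2 = B^*$. This gives a bijection $f: B\setminus B^* \to B^*\setminus B$ such that $(B\setminus\{x\})\cup f(x)\in\B$ for every $x \in B\setminus B^*$. Since the common elements cancel,
\[
\cost(B)-\cost(B^*) \ =\ \sum_{x\in B\setminus B^*} \bigl(\mu_x - \mu_{f(x)}\bigr).
\]
The sum has at most $|B| = r(M)$ terms, so some $x$ satisfies
\[
\mu_x - \mu_{f(x)} \ \ge\ \frac{\epsilon\,\cost(B)}{2r(M)}.
\]

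\textbf{Step 3 (membership in $\cN(B)$).} Let $s = f(x)\notin B$ and $B' = (B\cup s)\setminus x$. We must show $x \in C(s,B)$, so that $B'\in\cN(B)$. Since $B'$ is a base, it is independent. If $x\notin C(s,B)$, then $C(s,B)\subseteq (B\cup s)\setminus x = B'$, contradicting independence of $B'$. Hence $x\in C(s,B)$ and $B'\in\cN(B)$. Then
\[
\cost(B') \ =\ \cost(B) - (\mu_x-\mu_s) \ \le\ \Bigl(1-\frac{\epsilon}{2r(M)}\Bigr)\cost(B),
\]
which is the required $(\beta,\gamma)$-improving move.

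The main obstacle is Step 3: matching the exchange direction in \cref{lem:matroid-bijection} (remove from $B$, add from $B^*$) with the neighborhood's add-then-remove-from-circuit description. The circuit containment argument above handles it. The other delicate points are the $\epsilon\le 1$ restriction in Step 1 and the nonnegativity of costs, which are bounded and nonnegative in this setting.
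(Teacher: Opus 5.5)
Your proposal is correct and follows the paper's proof essentially step for step: the gap bound $\cost(B)-\opt\ge \epsilon\cost(B)/2$ for $\epsilon\le 1$, the exchange bijection of \cref{lem:matroid-bijection}, and averaging over at most $r(M)$ single exchanges. Your Step 3, which shows $x\in C(f(x),B)$ and hence $(B\cup f(x))\setminus x\in\cN(B)$, supplies a membership claim the paper only asserts.

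One correction to your side remark. For $\epsilon>1$ the stated $\beta$ genuinely fails; for example, take a rank-one matroid on $\{a,b\}$ with $\mu_a=1$, $\mu_b=3/10$, $\epsilon=3/2$. So the restriction to $\epsilon\le 1$ is necessary, and a valid $\beta$ in that range would have to be larger, e.g.\ $1-\epsilon/((1+\epsilon)r(M))$, not smaller.
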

\begin{proof}
Consider $B\in \B$ satisfying $\cost(B) > (1 + \epsilon) \cost(B^*)$  for $\epsilon \in (0,1)$. 
It follows that ,
    \[
    \Delta = \cost(B) - \cost(B^*) \geq \cost(B) - \frac{\cost(B)}{1+\epsilon}  = \frac{\epsilon \cdot\cost(B)}{1+\epsilon} \geq \frac{\epsilon \cdot \cost(B)}{2}.
    \]

    We want to show there exists $B' \in \cN(B)$ satisfying $\cost(B') \leq \beta \cost(B)$.
    It is easy to see that
    \[
    \Delta = \cost(B) - \cost(B^*) = \sum_{e \in B \setminus B^*} \mu_e - \sum_{e \in B^* \setminus B} \mu_e. 
    \]
    From Lemma~\ref{lem:matroid-bijection},  construct a bijection $f:(B \setminus B^*)\rightarrow (B^* \setminus B)$ such that $B \setminus \{s\} \cup f(s)$ is a base for every $s\in B \setminus B^*$. If we start from $B$, then delete $s$ and add $f(s)$ from $B$  for all $s \in B \setminus B^*$, we transform from base $B$ to $B^*$. Since there are at most $|B| = r(M)$ items in $B \setminus B^*$, there must exist  $s'\in B \setminus B^*$, for which $B \setminus s'\cup f(s')$ improves the cost by at least $\Delta/\rho(M)$. Let $T' =  B \setminus s' \cup f(s') \in \cN(B)$. Then 
     \[
     \cost(B) - \cost(B') \ge \frac{\Delta}{\rho(M)}  \geq \frac{\epsilon \cdot \cost(B)}{2 r(M) }.
     \]
  
\end{proof}

\begin{corollary}
    Consider the problem of finding a minimum cost base in a matroid $M = (\cS, \I)$ with $|\cS| = n$ and rank $r(M)=r$ when element costs are  stochastically drawn from $\cD$. For any $\epsilon > 0$, there is an algorithm $\cA$ (\cref{alg:eps_regret}) with
    \[
    \regret_{1 + \epsilon}(\cA, \cD, T) = O\left( \frac{ n r^6 \log^2 T}{\epsilon^4} \left(\log T + \log n \right) \right)
    = O\left( \frac{ n r^6 \log^3 T}{\epsilon^4}  \right).
    \]    
\end{corollary}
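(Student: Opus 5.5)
The plan is to apply \cref{thm:gamma_regret_main_result} directly, with the circuit swap neighborhood, and then plug in the problem-specific parameters. By \cref{lem:mst_improving_swaps}, the circuit swap neighborhood admits $(\beta,\gamma)$-improving moves with $\gamma = 1+\epsilon$ and $\beta = 1-\epsilon/(2r)$. Therefore \cref{alg:eps_regret,alg:tester}, run on $\cX = \B$ with this neighborhood, have $(1+\epsilon)$-regret at most
\[
O\left( \frac{M C_{\max} \log^2 T}{\delta^2 \alpha^2 \log^2 \frac{1}{\alpha}} \left( \log T + \log M \right)\right).
\]
It then remains to bound $M$, $C_{\max}$, $\alpha^2 = \beta$, $\delta$ and $\log(1/\alpha)$ in terms of $n$, $r$ and $\epsilon$. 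As in the scheduling application, we assume element costs are normalized to lie in $[0,1]$; we may also take $\epsilon \in (0,1)$, as in \cref{lem:mst_improving_swaps}.

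\textbf{Parameter bounds.} A neighbor is determined by choosing $s \in \cS\setminus B$ (at most $n$ choices) and $t \in C(s,B)$ (at most $r+1$ choices), so $M \le n(r+1) = O(nr)$. In particular $\log M = O(\log n)$. Every base has $r$ elements with costs in $[0,1]$, so $C_{\max} \le r$. Next, $\log(1/\alpha) = -\tfrac12 \log(1-\epsilon/(2r)) \ge \epsilon/(4r)$, which gives $\log^2(1/\alpha) = \Omega(\epsilon^2/r^2)$. For $\delta$, write $x = \epsilon/(2r)$. Then
\[
1-\alpha = 1-\sqrt{1-x} \ge x/2 \quad\text{and}\quad 1+\alpha \le 2,
\]
so $\delta \ge x/4 = \epsilon/(8r)$. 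Finally $\alpha^2 = \beta \ge 1/2$. Together these give $\delta^2\alpha^2 = \Omega(\epsilon^2/r^2)$.

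\textbf{Substitution.} Plugging in, the denominator is $\Omega(\epsilon^4/r^4)$ and the numerator is $O(nr \cdot r)\log^2 T\,(\log T + \log n)$. The nominal outcome is therefore $O(n r^6 \log^2 T (\log T+\log n)/\epsilon^4)$. The crude bound $M \le nr$ is what produces this stated form; a tighter count would only improve the power of $r$ beyond the stated $r^6$. The second form of the bound follows by assuming $n \le T$, i.e.\ $T$ sufficiently large as in \cref{thm:gamma_regret_main_result}, so that $\log n \le \log T$.

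\textbf{Main obstacle.} The only delicate step is the lower bound on $\delta = (1-\alpha)/(1+\alpha)$. This requires the elementary inequality $1-\sqrt{1-x} \ge x/2$ for $x\in[0,1]$, which holds since $\sqrt{1-x}\le 1-x/2$. I must also check the constant-factor bookkeeping so that the powers of $r$ and $\epsilon$ come out as claimed (or better).
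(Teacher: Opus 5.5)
Your proposal is correct and follows essentially the same route as the paper: invoke the main theorem with the circuit-swap lemma, then substitute $M \le nr$, $C_{\max} \le r$, $\alpha^2 \ge 1/2$, $\delta \ge \epsilon/(8r)$ and $\log(1/\alpha) \ge \epsilon/(4r)$ to obtain the $n r^6/\epsilon^4$ factor. The only cosmetic difference is in the bound on $\delta$: you use $1-\sqrt{1-x} \ge x/2$ together with $1+\alpha \le 2$, whereas the paper rewrites $\delta = (1-\alpha^2)/(1+\alpha)^2$ and bounds the denominator by $4$; both give the same $\epsilon/(8r)$.
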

\begin{proof}
    We apply \cref{lem:mst_improving_swaps} and \cref{thm:gamma_regret_main_result} with $(\beta, \gamma) = (1-\epsilon/2r, 1+\epsilon)$ for $\epsilon > 0$.
    Thus we just need to bound $M$, $C_{\max}$, $\frac{1}{\alpha^2}$, $\frac{1}{\delta^2}$, and $ \log \frac{1}{\alpha}$.
    It is easy to see that $M \leq nr$   and that $C_{\max} \leq r$.
    Next, we can see that $\frac{1}{\alpha^2} = O(1)$ for $\epsilon \leq r$ since $\alpha^2 = 1- \epsilon/ 2r \geq 1/2$.
    For $\frac{1}{\delta^2}$, we have:
    \[
    \delta = \frac{1-\alpha}{1+\alpha} = \frac{1-\alpha^2}{(1+\alpha)^2} = \frac{\epsilon / 2r}{1+2 \alpha + \alpha^2} \geq \frac{\epsilon}{8r}.
    \]
    Finally, for $\log(1/\alpha)$ we have:
    \[
    \log \frac{1}{\alpha} = \log \left(\frac{1}{1-\epsilon/2r} \right)^{1/2} = -\frac{1}{2} \log \left( 1-\epsilon/2r \right) \geq -\frac{1}{2} \log \exp \left( - \epsilon / 2r\right) = \frac{\epsilon}{4r}.
    \]
    Combining in \cref{thm:gamma_regret_main_result} yields the bound.
\end{proof}

\subsection{Uncertain \texorpdfstring{$k$}{K}-Median Clustering}

Our final application concerns $k$-median clustering where the set of points to be clustered are drawn at random from an unknown distribution.
The following setup was studied by \citet{cormode2008approximation} and \citet{DBLP:conf/pods/GuhaM09} for a several of $k$-clustering objectives in the offline setting.
An instance consists of $n$ points arriving from a metric space $(\cM, d)$ of diameter 1.
The location of the $i$'th point is uncertain and the objective is to choose $k$ cluster centers which minimize the total expected distance between each arriving point and its closest cluster center.
We let $Z_i \in \cM$ denote the random realization of the $i$'th point.

For a fixed location $z\in \cM$ and a fixed set $C$ of cluster centers, let $d(z, C) = \min_{c \in C} d(z,c)$ denote the distance of $z$ to its nearest cluster center in $C$. 
Then the cost of a solution $C$ on points $Z = (Z_1, Z_2, \ldots, Z_n)$ is
\[
\cost(C, Z)= \sum_{i\in [n], z\in \cZ} \bfo\{Z_i = z\}\cdot d(z, C).
\]
Thus, the expected cost of solution $C$ is given by
\[
\E[\cost(C,Z)]= \sum_{i\in [n], z\in \cZ} \Pr[Z_i = z]\cdot d(z, C),
\]
and $\opt$ denotes the expected cost of a set of cluster centers which minimizes this expected cost. 

In the bandit setting, we do not observe the realizations of the point locations. In each round $t$ a set of cluster center locations $C_t$ is submitted, after which the learner only observes  $\cost(C_t,Z)$. 

Let $\cN(C) = \{C'\in \cX \mid  C \cup \{b\} \setminus \{a\}, a \in C, b \in \cX\setminus C\}$ be the swap-neighborhood of cluster centers $C$.  
Arya et. al.~\cite{arya2001local} propose a local search   algorithm for the $k$-medians problem which starts from an arbitrary set of $k$ cluster centers $C$ and repeatedly moves to neighboring $C'\in \cN(C)$ satisfying $\cost(C') \leq (1-\epsilon) \cost(C)$ until no local improvement is found.    \citet{cormode2008approximation} shows that this is equally effective for the \emph{uncertain} $k$-medians problem. 

\begin{lemma}[\citet{arya2001local} and \citet{cormode2008approximation}] 
\label{kmedian-localsearch}
    For the uncertain $k$-median clustering problem, local search with $\epsilon$-improvements terminates in $C\in \cX$ satisfying $\cost(C)\leq (5/(1 - n\epsilon)) \cost(C^*).$  
\end{lemma}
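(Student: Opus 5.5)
The plan is to reduce the uncertain problem to a deterministic weighted $k$-median instance, and then run the swap-based local search analysis of \citet{arya2001local} on that instance. This is essentially the route taken by \citet{cormode2008approximation}.

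\textbf{Step 1: reduction.} By linearity of expectation, $\E[\cost(C,Z)] = \sum_{z} w(z)\, d(z,C)$ with weights $w(z) = \sum_{i\in[n]} \Pr[Z_i = z]$. The expected cost is therefore exactly a weighted $k$-median objective on the support of the point distributions, with total weight $n$. Local search on $\cost(\cdot)$ is then literally local search on this weighted deterministic instance. The Arya et al.\ analysis carries over verbatim to weighted clients, since each client's contribution is simply scaled by its weight.

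\textbf{Step 2: locality gap at an exact local optimum.} Let $C$ be a solution with no improving swap, and let $C^*$ be optimal. For each client $z$, write $s_z = d(z,C)$, $o_z = d(z,C^*)$, and let $\sigma(z)$ and $\sigma^*(z)$ be its assigned centers in $C$ and $C^*$. I would build the standard set of $k$ swaps $(a,b)$ with $a\in C$, $b\in C^*$: each $b\in C^*$ is used exactly once, each $a\in C$ at most twice, and no $a$ that "captures" two or more optimal centers is ever swapped out. For the swap $(a,b)$, I reroute the clients of $b$ to $b$, and reroute the clients of $a$ that lose their center to $\sigma(\sigma^*(z)\text{'s mapped center})$. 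The triangle inequality bounds the rerouted cost by $o_z + o_z + s_z$ minus the old $s_z$, i.e.\ by $2o_z$. Summing the nonnegative cost changes $\cost(C - a + b) - \cost(C) \ge 0$ over all $k$ swaps gives
\[
0 \le \cost(C^*) - \cost(C) + 2\cdot 2\,\cost(C^*),
\]
which rearranges to $\cost(C)\le 5\,\cost(C^*)$.

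\textbf{Step 3: approximate local optimum.} When the search halts with $\epsilon$-improvements, each swap only satisfies $\cost(C-a+b) \ge (1-\epsilon)\cost(C)$. Summing over the $k$ swaps, each term carries an extra slack of $-\epsilon\,\cost(C)$. Accounting for the swap family gives
\[
\cost(C^*) - \cost(C) + 4\cost(C^*) \ge -k\epsilon\,\cost(C),
\]
hence $(1-k\epsilon)\cost(C) \le 5\,\cost(C^*)$. Since $k\le n$ (otherwise the cost is trivially zero), this yields $\cost(C)\le 5\,\cost(C^*)/(1-n\epsilon)$.

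Termination is immediate because costs are nonnegative and each accepted move shrinks the cost by a factor $1-\epsilon$. Starting from cost at most $n$ (diameter $1$), the search cannot run forever unless $\opt = 0$. In that degenerate case we can argue directly, or first handle the reachable zero-cost configuration.

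\textbf{Main obstacle.} The delicate part is the swap construction in Step 2: the capture argument and the "each $a$ used at most twice" bookkeeping are what produce the factor 4 and hence 5. I would import this combinatorial part directly from \citet{arya2001local} rather than reprove it, and only verify that it is insensitive to client weights.
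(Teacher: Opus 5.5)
Your route (reduce to weighted $k$-median by linearity of expectation, then run the Arya et al.\ single-swap analysis) is what the cited works do. Summing the $k$ swap inequalities correctly gives $(1-k\epsilon)\cost(C)\le 5\cost(C^*)$.

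The gap is the last step, from $k$ to $n$. The justification ``$k\le n$, otherwise the cost is trivially zero'' is false here. After your own Step~1 the clients are the support points of $Z_1,\dots,Z_n$, and there may be many more of them than $n$. Centers also come from a separate candidate set, so $k>n$ centers need not give zero cost. Restricting the swap family to optimal centers that serve positive weight does not rescue the step, because their number is bounded by $\min(k,|\bigcup_i \mathrm{supp}(Z_i)|)$, not by $n$.

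In fact the inequality with $n$ fails when $k>n$. Take $n=1$, $Z_1$ uniform on $\{c_1,c_2,c_3\}$, candidate centers $o_1,o_2,o_3,s_1,s_2,s_3$, $k=3$ and $\epsilon=1/2$. Set $d(c_g,o_g)=\eta$ and every other distance equal to $1$; this is a metric of diameter $1$. Then $C=\{s_1,s_2,s_3\}$ has cost $1$, and every swap yields cost $(2+\eta)/3>1/2$, so local search started at $C$ stops immediately. Meanwhile $\cost(C^*)=\eta$, so for $\eta<1/10$ the ratio $1/\eta$ exceeds $5/(1-n\epsilon)=10$.

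You must therefore either add the hypothesis $k\le n$ explicitly, or state the conclusion as $\cost(C)\le 5\cost(C^*)/(1-k\epsilon)$. The paper's later choice $\epsilon=1/n^2$, $\gamma=5/(1-1/n)$ silently needs $k\le n$ as well.

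Step~2 is also misdescribed, even though you intend to import it.
\begin{enumerate}
\item A client $z$ of the swapped-out center $a$ with $\sigma^*(z)\ne b$ is rerouted to $\sigma(\pi(z))$. Here $\pi$ is a permutation of the optimal cluster of $z$, chosen so that $\sigma(\pi(z))\ne\sigma(z)$ whenever $\sigma(z)$ does not capture $\sigma^*(z)$.
\item The resulting increase $o_z+o_{\pi(z)}+s_{\pi(z)}-s_z$ is only pointwise \emph{nonnegative}, not pointwise at most $2o_z$. Nonnegativity is what lets you multiply by the multiplicity $2$. The bound $4\cost(C^*)$ appears only after summing, because $\pi$ is a bijection on each cluster.
\item With weighted clients, capture has to be defined by weight and $\pi$ has to be weight-preserving. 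This forces you to split clients fractionally. That is harmless, since a fractional assignment only over-estimates the post-swap cost, but it is not ``verbatim''.
\end{enumerate}
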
 

We can leverage this to show that uncertain $k$-median clustering permits $(\beta, \gamma)$-improving moves.  

\begin{lemma}
The Minimum Cost $K$-Median problem with $n$ input points admits $(\beta, \gamma)$-improving moves where $\gamma = 5/(1-\frac{1}{n})$ and $\beta = (1 - {1}/{n^2})$. 
\end{lemma}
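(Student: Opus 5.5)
The plan is to reduce the statement to the Observation in \cref{sec:prelim}, which says that if \cref{alg:approx_local_search} with parameter $\beta$ is guaranteed to terminate in a $\gamma$-approximation, then the instance admits $(\beta,\gamma)$-improving moves. We would apply this Observation with the guarantee supplied by \cref{kmedian-localsearch}.

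The matching of parameters goes as follows. The Arya et al.\ local search accepts a swap whenever $\cost(C') \leq (1-\epsilon)\cost(C)$, which is exactly \cref{alg:approx_local_search} run on the swap neighborhood $\cN$ with $\beta = 1-\epsilon$. Choosing $\epsilon = 1/n^2$ gives $\beta = 1 - 1/n^2$. By \cref{kmedian-localsearch}, every terminal solution $C$ then satisfies
\[
\cost(C) \leq \frac{5}{1-n\epsilon}\cost(C^*) = \frac{5}{1-1/n}\,\opt .
\]
So \cref{alg:approx_local_search} with this $\beta$ always returns a $\gamma$-approximation, with $\gamma = 5/(1-1/n)$.

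Before invoking the Observation we must check that its hypothesis holds from every starting point, since its proof starts the algorithm at the hypothetical bad solution $x$. \cref{kmedian-localsearch} is stated for an arbitrary initial set of $k$ centers, so this is fine. We also need termination: costs are nonnegative, each accepted move shrinks the cost by a factor $1-1/n^2$, and $\cX$ is finite, so no solution repeats and the algorithm halts. (If $\opt = 0$, any solution with $\cost(C) > \gamma\opt$ has positive cost, so the argument is unaffected.)

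With these checks done, the Observation applies directly. For completeness we would also spell out its contradiction argument. Suppose some $C$ has $\cost(C) > \gamma\opt$ and no neighbor $C'\in\cN(C)$ has $\cost(C') \leq \beta\cost(C)$. Starting local search at $C$, it stops immediately and returns $C$, contradicting \cref{kmedian-localsearch}.

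The main obstacle is making sure the ``$\epsilon$-improvement'' in \cref{kmedian-localsearch} means exactly the multiplicative $(1-\epsilon)$ acceptance rule of \cref{alg:approx_local_search}, over the same swap neighborhood and with the guarantee holding for every initialization. If the cited bound uses the variant with $\epsilon$ replaced by something like $\epsilon/k$, we would adjust $\beta$ to match; the reduction itself would stay the same.
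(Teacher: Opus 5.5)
Your proposal is correct and follows the paper's route. The paper argues by contradiction: a $C$ with $\cost(C) > \gamma\opt$ and no neighbor satisfying $\cost(C') \le (1-1/n^2)\cost(C)$ would be a terminal point of the $\epsilon = 1/n^2$ local search, contradicting the $5/(1-n\epsilon) = 5/(1-1/n)$ guarantee of \cref{kmedian-localsearch}. Your separate termination check is unnecessary, since the argument only needs the search started at $C$ to halt immediately, which it does because $\cost(C) > 0$; the check is also slightly off when costs can be $0$, because $0 \le \beta \cdot 0$ lets the search keep moving.
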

\begin{proof}

We prove this lemma by contradiction. Suppose that the local search algorithm does not admit $(\beta, \gamma)$-improving moves where $\gamma = 5/(1-\frac{1}{n})$ and $\beta = (1 - {1}/{n^2})$. This implies that there exists $C\in \cX$ with $\cost(C) > \gamma \cost(C^*)$ and that $\cost(C') > \beta \cost(C)$ for all $C'\in\cN(C)$. 
It follows that the local search algorithm of \citep{cormode2008approximation} may terminate with  $C$ as a local optimum. 
 However, setting $\epsilon = 1/n^2$ in  \cref{kmedian-localsearch} gives   a $5/(1 - 1/n)$-factor approximation algorithm. 
 This implies that $\cost(C) \leq \frac{5}{1-\frac{1}{n}} \cost(C^*)$, a contradiction. 

\end{proof}

\begin{corollary} \label{corr_clustering}
    Consider the uncertain $k$-median clustering with candidate set of $m$ potential cluster centers where the locations of $n$ points in a  metric space of diameter 1 are drawn from distribution $\cD$. There exists algorithm $\cA$ (\cref{alg:eps_regret}) achieving
    \[
    \regret_{5n/(n-1)}(\cA_, \cD, T) 
    = O\left({n^{9}m^2\log^3 T}\right),
    \]
    when $m^2 \leq T$.
\end{corollary}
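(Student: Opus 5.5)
The plan is to plug the preceding lemma into \cref{thm:gamma_regret_main_result}, mirroring \cref{corr_scheduling}. That lemma gives $(\beta,\gamma)$-improving moves with $\gamma = 5/(1-1/n) = 5n/(n-1)$ and $\beta = 1-1/n^2$. The theorem then gives
\[
\regret_\gamma = O\left( \frac{M C_{\max} \log^2 T}{\delta^2 \alpha^2 \log^2 \frac{1}{\alpha}} \left( \log T + \log M \right)\right).
\]
What remains is to bound each problem-dependent parameter polynomially in $n$ and $m$.

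The parameters are bounded as follows.
\begin{itemize}
\item \emph{Neighborhood size.} A swap removes one of the $k$ current centers and adds one of the at most $m$ candidates, so $M \le km \le m^2$ (using $k\le m$).
\item \emph{Maximum cost.} Since the diameter is $1$, each point contributes at most $1$, so $C_{\max} \le n$.
\item \emph{$\alpha$.} We have $\alpha^2 = \beta = 1-1/n^2 \ge 3/4$ for $n\ge 2$, so $1/\alpha^2 = O(1)$.
\item \emph{$\delta$.} As in the matroid corollary, $\delta = (1-\alpha^2)/(1+\alpha)^2 \ge 1/(4n^2)$, so $1/\delta^2 = O(n^4)$.
\item \emph{$\log(1/\alpha)$.} We have $\log(1/\alpha) = -\tfrac12\log(1-1/n^2) \ge 1/(2n^2)$, so $1/\log^2(1/\alpha) = O(n^4)$.
\end{itemize}

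Finally, the hypothesis $m^2 \le T$ gives $\log M \le \log m^2 \le \log T$, so $\log T + \log M = O(\log T)$. Substituting all the bounds gives
\[
O\!\left(m^2 \cdot n \cdot n^4 \cdot n^4 \cdot \log^3 T\right) = O\!\left(n^9 m^2 \log^3 T\right).
\]

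No step is a real obstacle; this is bookkeeping. The only points needing care are the bound on $M$ and confirming that the degrees in $n$ sum to exactly $9$ ($1+4+4$). If $k$ were treated separately, $M\le km$ is still at most $m^2$, which matches the stated bound.
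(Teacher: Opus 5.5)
Your proposal is correct and is the same argument as the paper: substitute the $(1-1/n^2,\,5n/(n-1))$-improving-moves lemma into \cref{thm:gamma_regret_main_result} with $M\le m^2$, $C_{\max}\le n$, $1/(\delta^2\alpha^2)=O(n^4)$ and $\log(1/\alpha)\ge 1/(2n^2)$, which gives $n^{1+4+4}m^2\log^3 T$. Your use of $m^2\le T$ to absorb $\log M$ into $\log T$ is exactly what that hypothesis is for. Your direct bound $\delta\ge 1/(4n^2)$ is also cleaner than the paper's joint bound on $\delta^2\beta$, whose constant is off, though this does not affect the $O(\cdot)$.
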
 
\begin{proof}
    We apply \cref{thm:gamma_regret_main_result} and simplify with problem-specific parameters. For the swap neighborhood, the maximum neighborhood size is $M\leq |\cX|^2 = m^2$. The metric space has diameter 1, so  $C_{max} \leq n.$
    As in the proof of \Cref{corr_scheduling}, We can bound $   \log(1/\alpha)  \geq 1/{2n^2}$ and 
    $\delta^2 \beta   \geq 8/n^4$. The result follows after substitution and observing $\log n \leq \log T$.
\end{proof}

\section{Discussion} \label{sec:conclusion}
We conclude by discussing a couple of relevant issues not raised elsewhere. 

For ease of exposition we assumed the algorithm knows $T$, and that there is an upper bound $C_{\max}$ on costs. 
It is straightforward but tedious to modify the algorithms and analysis to not require explicit knowledge of $T$.  
Similarly, we can replace the assumption of bounded costs with suitable distributional assumptions about the cost.
For example, we could require a sub-gaussian like condition on the tails of the distribution or even just finite variance.
In the latter case, we could make use of tools for handling heavy-tailed distributions in the bandit setting~\cite{DBLP:journals/tit/BubeckCL13}. 

It remains to computationally demonstrate the benefits of our algorithm.  
All three  applications are instances of linear bandits\citep{DaniKH2007,DBLP:conf/www/LiCLS10,BubeckCK12}, so a natural starting point may be to compare the regret of our algorithm with that of a linear bandit style algorithm which, in each step, uses the bandit feedback to estimate problem parameters (eg.\ mean job sizes for completion time scheduling), then computes the solution to an offline instance of the problem with those parameters to submit in the next time step. For the scheduling and matroid applications the offline instances can be solved efficiently, for clustering we   have to resort to an approximate solution due to NP-hardness.

An exciting direction for future work is exploring further applications for our framework where local search is effective offline. 
Some applications may require generalizing our current approach, for example, the local search guarantees for set cover~\citep{DBLP:conf/sosa/GuptaLeeLi23} rely on finding neighboring solution which improves a specialized potential function, and not the objective function directly. 
It is an open question whether a version of $(\beta, \gamma)$-improving moves defined on such potential functions can  provide guarantees in the bandit online setting.  


\bibliographystyle{plainnat}

\bibliography{references.bib}

@inproceedings{DBLP:conf/www/LiCLS10,
  author       = {Lihong Li and
                  Wei Chu and
                  John Langford and
                  Robert E. Schapire},
  editor       = {Michael Rappa and
                  Paul Jones and
                  Juliana Freire and
                  Soumen Chakrabarti},
  title        = {A contextual-bandit approach to personalized news article recommendation},
  booktitle    = {Proceedings of the 19th International Conference on World Wide Web,
                  {WWW} 2010, Raleigh, North Carolina, USA, April 26-30, 2010},
  pages        = {661--670},
  publisher    = {{ACM}},
  year         = {2010},
  url          = {https://doi.org/10.1145/1772690.1772758},
  doi          = {10.1145/1772690.1772758},
  bibsource    = {dblp computer science bibliography, https://dblp.org}
}

@inproceedings{DBLP:conf/focs/AgarwalGN24,
  author       = {Arpit Agarwal and
                  Rohan Ghuge and
                  Viswanath Nagarajan},
  title        = {Semi-Bandit Learning for Monotone Stochastic Optimization},
  booktitle    = {65th {IEEE} Annual Symposium on Foundations of Computer Science, {FOCS}
                  2024, Chicago, IL, USA, October 27-30, 2024},
  pages        = {1260--1274},
  publisher    = {{IEEE}},
  year         = {2024},
  url          = {https://doi.org/10.1109/FOCS61266.2024.00083},
  doi          = {10.1109/FOCS61266.2024.00083},
  bibsource    = {dblp computer science bibliography, https://dblp.org}
}

@article{DBLP:journals/ior/VeraBG21,
  author       = {Alberto Vera and
                  Siddhartha Banerjee and
                  Itai Gurvich},
  title        = {Online Allocation and Pricing: Constant Regret via Bellman Inequalities},
  journal      = {Oper. Res.},
  volume       = {69},
  number       = {3},
  pages        = {821--840},
  year         = {2021},
  url          = {https://doi.org/10.1287/opre.2020.2061},
  doi          = {10.1287/OPRE.2020.2061},
  bibsource    = {dblp computer science bibliography, https://dblp.org}
}

@article{DBLP:journals/pomacs/CuvelierCG21,
  author       = {Thibaut Cuvelier and
                  Richard Combes and
                  Eric Gourdin},
  title        = {Statistically Efficient, Polynomial-Time Algorithms for Combinatorial
                  Semi-Bandits},
  journal      = {Proc. {ACM} Meas. Anal. Comput. Syst.},
  volume       = {5},
  number       = {1},
  pages        = {09:1--09:31},
  year         = {2021},
  url          = {https://doi.org/10.1145/3447387},
  doi          = {10.1145/3447387},
  bibsource    = {dblp computer science bibliography, https://dblp.org}
}

@inproceedings{DBLP:conf/pods/GuhaM09,
  author       = {Sudipto Guha and
                  Kamesh Munagala},
  editor       = {Jan Paredaens and
                  Jianwen Su},
  title        = {Exceeding expectations and clustering uncertain data},
  booktitle    = {Proceedings of the Twenty-Eigth {ACM} {SIGMOD-SIGACT-SIGART} Symposium
                  on Principles of Database Systems, {PODS} 2009, June 19 - July 1,
                  2009, Providence, Rhode Island, {USA}},
  pages        = {269--278},
  publisher    = {{ACM}},
  year         = {2009},
  url          = {https://doi.org/10.1145/1559795.1559836},
  doi          = {10.1145/1559795.1559836},
  bibsource    = {dblp computer science bibliography, https://dblp.org}
}

@inproceedings{DBLP:conf/sosa/GuptaLeeLi23, 
  author       = {Anupam Gupta and
                  Euiwoong Lee and
                  Jason Li},
  editor       = {Telikepalli Kavitha and
                  Kurt Mehlhorn},
  title        = {A Local Search-Based Approach for Set Covering},
  booktitle    = {2023 Symposium on Simplicity in Algorithms, {SOSA} 2023, Florence,
                  Italy, January 23-25, 2023},
  pages        = {1--11},
  publisher    = {{SIAM}},
  year         = {2023},
  url          = {https://doi.org/10.1137/1.9781611977585.ch1},
  doi          = {10.1137/1.9781611977585.CH1},
  bibsource    = {dblp computer science bibliography, https://dblp.org}
}

@article{DBLP:journals/tit/BubeckCL13,
  author       = {S{\'{e}}bastien Bubeck and
                  Nicol{\`{o}} Cesa{-}Bianchi and
                  G{\'{a}}bor Lugosi},
  title        = {Bandits With Heavy Tail},
  journal      = {{IEEE} Trans. Inf. Theory},
  volume       = {59},
  number       = {11},
  pages        = {7711--7717},
  year         = {2013},
  url          = {https://doi.org/10.1109/TIT.2013.2277869},
  doi          = {10.1109/TIT.2013.2277869},
  bibsource    = {dblp computer science bibliography, https://dblp.org}
}

@article{DBLP:journals/siamcomp/AuerCFS02,
  author       = {Peter Auer and
                  Nicol{\`{o}} Cesa{-}Bianchi and
                  Yoav Freund and
                  Robert E. Schapire},
  title        = {The Nonstochastic Multiarmed Bandit Problem},
  journal      = {{SIAM} J. Comput.},
  volume       = {32},
  number       = {1},
  pages        = {48--77},
  year         = {2002},
  url          = {https://doi.org/10.1137/S0097539701398375},
  doi          = {10.1137/S0097539701398375},
  bibsource    = {dblp computer science bibliography, https://dblp.org}
}

@article{LaiRobbins85,
author = {Lai, T.L and Robbins, Herbert},
title = {Asymptotically efficient adaptive allocation rules},
year = {1985},
issue_date = {March, 1985},
publisher = {Academic Press, Inc.},
address = {USA},
volume = {6},
number = {1},
issn = {0196-8858},
url = {https://doi.org/10.1016/0196-8858(85)90002-8},
doi = {10.1016/0196-8858(85)90002-8},
journal = {Adv. Appl. Math.},
month = mar,
pages = {4–22},
numpages = {19}
}

@inproceedings{DBLP:conf/nips/CombesSPL15,
  author       = {Richard Combes and
                  Mohammad Sadegh Talebi and
                  Alexandre Prouti{\`{e}}re and
                  Marc Lelarge},
  editor       = {Corinna Cortes and
                  Neil D. Lawrence and
                  Daniel D. Lee and
                  Masashi Sugiyama and
                  Roman Garnett},
  title        = {Combinatorial Bandits Revisited},
  booktitle    = {Advances in Neural Information Processing Systems 28: Annual Conference
                  on Neural Information Processing Systems 2015, December 7-12, 2015,
                  Montreal, Quebec, Canada},
  pages        = {2116--2124},
  year         = {2015},
  url          = {https://proceedings.neurips.cc/paper/2015/hash/0ce2ffd21fc958d9ef0ee9ba5336e357-Abstract.html},
  bibsource    = {dblp computer science bibliography, https://dblp.org}
}

@inproceedings{DBLP:conf/icml/ChenWY13,
  author       = {Wei Chen and
                  Yajun Wang and
                  Yang Yuan},
  title        = {Combinatorial Multi-Armed Bandit: General Framework and Applications},
  booktitle    = {Proceedings of the 30th International Conference on Machine Learning,
                  {ICML} 2013, Atlanta, GA, USA, 16-21 June 2013},
  series       = {{JMLR} Workshop and Conference Proceedings},
  volume       = {28},
  pages        = {151--159},
  publisher    = {JMLR.org},
  year         = {2013},
  url          = {http://proceedings.mlr.press/v28/chen13a.html},
  bibsource    = {dblp computer science bibliography, https://dblp.org}
}

@article{DBLP:journals/mor/AudibertBL14,
  author       = {Jean{-}Yves Audibert and
                  S{\'{e}}bastien Bubeck and
                  G{\'{a}}bor Lugosi},
  title        = {Regret in Online Combinatorial Optimization},
  journal      = {Math. Oper. Res.},
  volume       = {39},
  number       = {1},
  pages        = {31--45},
  year         = {2014},
  url          = {https://doi.org/10.1287/moor.2013.0598},
  doi          = {10.1287/MOOR.2013.0598},
  bibsource    = {dblp computer science bibliography, https://dblp.org}
}

@article{DBLP:journals/jacm/DuditeHLSSV20,
  author       = {Miroslav Dud{\'{\i}}k and
                  Nika Haghtalab and
                  Haipeng Luo and
                  Robert E. Schapire and
                  Vasilis Syrgkanis and
                  Jennifer Wortman Vaughan},
  title        = {Oracle-efficient Online Learning and Auction Design},
  journal      = {J. {ACM}},
  volume       = {67},
  number       = {5},
  pages        = {26:1--26:57},
  year         = {2020},
  url          = {https://doi.org/10.1145/3402203},
  doi          = {10.1145/3402203},
  bibsource    = {dblp computer science bibliography, https://dblp.org}
}

@article{benoist2011randomized,
  title={Randomized local search for real-life inventory routing},
  author={Benoist, Thierry and Gardi, Fr{\'e}d{\'e}ric and Jeanjean, Antoine and Estellon, Bertrand},
  journal={Transportation Science},
  volume={45},
  number={3},
  pages={381--398},
  year={2011},
  publisher={INFORMS}
}

@article{musliu2004local,
  title={Local search for shift design},
  author={Musliu, Nysret and Schaerf, Andrea and Slany, Wolfgang},
  journal={European journal of operational research},
  volume={153},
  number={1},
  pages={51--64},
  year={2004},
  publisher={Elsevier}
}

@article{ceschia2013local,
  title={Local search for a multi-drop multi-container loading problem},
  author={Ceschia, Sara and Schaerf, Andrea},
  journal={Journal of Heuristics},
  volume={19},
  number={2},
  pages={275--294},
  year={2013},
  publisher={Springer}
}

@inproceedings{DBLP:conf/ijcai/Zheng0Z0LM22,
  author       = {Jiongzhi Zheng and
                  Kun He and
                  Jianrong Zhou and
                  Yan Jin and
                  Chu{-}Min Li and
                  Felip Many{\`{a}}},
  editor       = {Luc De Raedt},
  title        = {BandMaxSAT: {A} Local Search MaxSAT Solver with Multi-armed Bandit},
  booktitle    = {Proceedings of the Thirty-First International Joint Conference on
                  Artificial Intelligence, {IJCAI} 2022, Vienna, Austria, 23-29 July
                  2022},
  pages        = {1901--1907},
  publisher    = {ijcai.org},
  year         = {2022},
  url          = {https://doi.org/10.24963/ijcai.2022/264},
  doi          = {10.24963/IJCAI.2022/264},
  bibsource    = {dblp computer science bibliography, https://dblp.org}
}

@inproceedings{DBLP:conf/ecai/Mengshoel0Z20,
  author       = {Ole Jakob Mengshoel and
                  Tong Yu and
                  Ming Zeng},
  editor       = {Giuseppe De Giacomo and
                  Alejandro Catal{\'{a}} and
                  Bistra Dilkina and
                  Michela Milano and
                  Sen{\'{e}}n Barro and
                  Alberto Bugar{\'{\i}}n and
                  J{\'{e}}r{\^{o}}me Lang},
  title        = {Stochastic Local Search and Machine Learning: From Theory to Applications
                  and Vice Versa},
  booktitle    = {{ECAI} 2020 - 24th European Conference on Artificial Intelligence,
                  29 August-8 September 2020, Santiago de Compostela, Spain, August
                  29 - September 8, 2020 - Including 10th Conference on Prestigious
                  Applications of Artificial Intelligence {(PAIS} 2020)},
  series       = {Frontiers in Artificial Intelligence and Applications},
  volume       = {325},
  pages        = {2919--2920},
  publisher    = {{IOS} Press},
  year         = {2020},
  url          = {https://doi.org/10.3233/FAIA200453},
  doi          = {10.3233/FAIA200453},
  bibsource    = {dblp computer science bibliography, https://dblp.org}
}

@article{DBLP:journals/eor/LagosP24,
  author       = {Felipe Lagos and
                  Jordi Pereira},
  title        = {Multi-armed bandit-based hyper-heuristics for combinatorial optimization
                  problems},
  journal      = {Eur. J. Oper. Res.},
  volume       = {312},
  number       = {1},
  pages        = {70--91},
  year         = {2024},
  url          = {https://doi.org/10.1016/j.ejor.2023.06.016},
  doi          = {10.1016/J.EJOR.2023.06.016},
  bibsource    = {dblp computer science bibliography, https://dblp.org}
}

@inproceedings{DBLP:conf/pkdd/YuKM17,
  author       = {Tong Yu and
                  Branislav Kveton and
                  Ole J. Mengshoel},
  editor       = {Michelangelo Ceci and
                  Jaakko Hollm{\'{e}}n and
                  Ljupco Todorovski and
                  Celine Vens and
                  Saso Dzeroski},
  title        = {Thompson Sampling for Optimizing Stochastic Local Search},
  booktitle    = {Machine Learning and Knowledge Discovery in Databases - European Conference,
                  {ECML} {PKDD} 2017, Skopje, Macedonia, September 18-22, 2017, Proceedings,
                  Part {I}},
  series       = {Lecture Notes in Computer Science},
  volume       = {10534},
  pages        = {493--510},
  publisher    = {Springer},
  year         = {2017},
  url          = {https://doi.org/10.1007/978-3-319-71249-9\_30},
  doi          = {10.1007/978-3-319-71249-9\_30},
  bibsource    = {dblp computer science bibliography, https://dblp.org}
}

@inproceedings{DBLP:conf/nips/XuW21,
  author       = {Jianyu Xu and
                  Yu{-}Xiang Wang},
  editor       = {Marc'Aurelio Ranzato and
                  Alina Beygelzimer and
                  Yann N. Dauphin and
                  Percy Liang and
                  Jennifer Wortman Vaughan},
  title        = {Logarithmic Regret in Feature-based Dynamic Pricing},
  booktitle    = {Advances in Neural Information Processing Systems 34: Annual Conference
                  on Neural Information Processing Systems 2021, NeurIPS 2021, December
                  6-14, 2021, virtual},
  pages        = {13898--13910},
  year         = {2021},
  url          = {https://proceedings.neurips.cc/paper/2021/hash/742141ceda6b8f6786609d31c8ef129f-Abstract.html},
  bibsource    = {dblp computer science bibliography, https://dblp.org}
}

@article{DBLP:journals/ml/HazanAK07,
  author       = {Elad Hazan and
                  Amit Agarwal and
                  Satyen Kale},
  title        = {Logarithmic regret algorithms for online convex optimization},
  journal      = {Mach. Learn.},
  volume       = {69},
  number       = {2-3},
  pages        = {169--192},
  year         = {2007},
  url          = {https://doi.org/10.1007/s10994-007-5016-8},
  doi          = {10.1007/S10994-007-5016-8},
  bibsource    = {dblp computer science bibliography, https://dblp.org}
}

@inproceedings{DBLP:conf/nips/Shalev-ShwartzK08,
  author       = {Shai Shalev{-}Shwartz and
                  Sham M. Kakade},
  editor       = {Daphne Koller and
                  Dale Schuurmans and
                  Yoshua Bengio and
                  L{\'{e}}on Bottou},
  title        = {Mind the Duality Gap: Logarithmic regret algorithms for online optimization},
  booktitle    = {Advances in Neural Information Processing Systems 21, Proceedings
                  of the Twenty-Second Annual Conference on Neural Information Processing
                  Systems, Vancouver, British Columbia, Canada, December 8-11, 2008},
  pages        = {1457--1464},
  publisher    = {Curran Associates, Inc.},
  year         = {2008},
  url          = {https://proceedings.neurips.cc/paper/2008/hash/bd686fd640be98efaae0091fa301e613-Abstract.html},
  bibsource    = {dblp computer science bibliography, https://dblp.org}
}

@article{DBLP:journals/siamcomp/KakadeKL09,
  author       = {Sham M. Kakade and
                  Adam Tauman Kalai and
                  Katrina Ligett},
  title        = {Playing Games with Approximation Algorithms},
  journal      = {{SIAM} J. Comput.},
  volume       = {39},
  number       = {3},
  pages        = {1088--1106},
  year         = {2009},
  url          = {https://doi.org/10.1137/070701704},
  doi          = {10.1137/070701704},
  bibsource    = {dblp computer science bibliography, https://dblp.org}
}

@article{DBLP:journals/jcss/KalaiV05,
  author       = {Adam Tauman Kalai and
                  Santosh S. Vempala},
  title        = {Efficient algorithms for online decision problems},
  journal      = {J. Comput. Syst. Sci.},
  volume       = {71},
  number       = {3},
  pages        = {291--307},
  year         = {2005},
  url          = {https://doi.org/10.1016/j.jcss.2004.10.016},
  doi          = {10.1016/J.JCSS.2004.10.016},
  bibsource    = {dblp computer science bibliography, https://dblp.org}
}

@inproceedings{Cohen-AddadGHOS22,
  author       = {Vincent Cohen{-}Addad and
                  Anupam Gupta and
                  Lunjia Hu and
                  Hoon Oh and
                  David Saulpic},
  editor       = {Joseph (Seffi) Naor and
                  Niv Buchbinder},
  title        = {An Improved Local Search Algorithm for k-Median},
  booktitle    = {Proceedings of the 2022 {ACM-SIAM} Symposium on Discrete Algorithms,
                  {SODA} 2022, Virtual Conference / Alexandria, VA, USA, January 9 -
                  12, 2022},
  pages        = {1556--1612},
  publisher    = {{SIAM}},
  year         = {2022},
  url          = {https://doi.org/10.1137/1.9781611977073.65},
  doi          = {10.1137/1.9781611977073.65},
  bibsource    = {dblp computer science bibliography, https://dblp.org}
}

@inproceedings{AhmadianFS13,
  author       = {Sara Ahmadian and
                  Zachary Friggstad and
                  Chaitanya Swamy},
  editor       = {Sanjeev Khanna},
  title        = {Local-Search based Approximation Algorithms for Mobile Facility Location
                  Problems},
  booktitle    = {Proceedings of the Twenty-Fourth Annual {ACM-SIAM} Symposium on Discrete
                  Algorithms, {SODA} 2013, New Orleans, Louisiana, USA, January 6-8,
                  2013},
  pages        = {1607--1621},
  publisher    = {{SIAM}},
  year         = {2013},
  url          = {https://doi.org/10.1137/1.9781611973105.115},
  doi          = {10.1137/1.9781611973105.115},
  bibsource    = {dblp computer science bibliography, https://dblp.org}
}

@inproceedings{Chen08,
  author       = {Ke Chen},
  editor       = {Shang{-}Hua Teng},
  title        = {A constant factor approximation algorithm for \emph{k}-median clustering
                  with outliers},
  booktitle    = {Proceedings of the Nineteenth Annual {ACM-SIAM} Symposium on Discrete
                  Algorithms, {SODA} 2008, San Francisco, California, USA, January 20-22,
                  2008},
  pages        = {826--835},
  publisher    = {{SIAM}},
  year         = {2008},
  url          = {http://dl.acm.org/citation.cfm?id=1347082.1347173},
  bibsource    = {dblp computer science bibliography, https://dblp.org}
}

@inproceedings{BogdanSS013,
  author       = {Paul Bogdan and
                  Thomas Sauerwald and
                  Alexandre Stauffer and
                  He Sun},
  editor       = {Sanjeev Khanna},
  title        = {Balls into Bins via Local Search},
  booktitle    = {Proceedings of the Twenty-Fourth Annual {ACM-SIAM} Symposium on Discrete
                  Algorithms, {SODA} 2013, New Orleans, Louisiana, USA, January 6-8,
                  2013},
  pages        = {16--34},
  publisher    = {{SIAM}},
  year         = {2013},
  url          = {https://doi.org/10.1137/1.9781611973105.2},
  doi          = {10.1137/1.9781611973105.2},
  bibsource    = {dblp computer science bibliography, https://dblp.org}
}

@inproceedings{IwamaMY07,
  author       = {Kazuo Iwama and
                  Shuichi Miyazaki and
                  Naoya Yamauchi},
  editor       = {Nikhil Bansal and
                  Kirk Pruhs and
                  Clifford Stein},
  title        = {A 1.875: approximation algorithm for the stable marriage problem},
  booktitle    = {Proceedings of the Eighteenth Annual {ACM-SIAM} Symposium on Discrete
                  Algorithms, {SODA} 2007, New Orleans, Louisiana, USA, January 7-9,
                  2007},
  pages        = {288--297},
  publisher    = {{SIAM}},
  year         = {2007},
  url          = {http://dl.acm.org/citation.cfm?id=1283383.1283414},
  bibsource    = {dblp computer science bibliography, https://dblp.org}
}

@inproceedings{GharanT14,
  author       = {Shayan Oveis Gharan and
                  Luca Trevisan},
  editor       = {Chandra Chekuri},
  title        = {Partitioning into Expanders},
  booktitle    = {Proceedings of the Twenty-Fifth Annual {ACM-SIAM} Symposium on Discrete
                  Algorithms, {SODA} 2014, Portland, Oregon, USA, January 5-7, 2014},
  pages        = {1256--1266},
  publisher    = {{SIAM}},
  year         = {2014},
  url          = {https://doi.org/10.1137/1.9781611973402.93},
  doi          = {10.1137/1.9781611973402.93},
  bibsource    = {dblp computer science bibliography, https://dblp.org}
}

@inproceedings{WenKA15,
  author       = {Zheng Wen and
                  Branislav Kveton and
                  Azin Ashkan},
  editor       = {Francis R. Bach and
                  David M. Blei},
  title        = {Efficient Learning in Large-Scale Combinatorial Semi-Bandits},
  booktitle    = {Proceedings of the 32nd International Conference on Machine Learning,
                  {ICML} 2015, Lille, France, 6-11 July 2015},
  series       = {{JMLR} Workshop and Conference Proceedings},
  volume       = {37},
  pages        = {1113--1122},
  publisher    = {JMLR.org},
  year         = {2015},
  url          = {http://proceedings.mlr.press/v37/wen15.html},
  bibsource    = {dblp computer science bibliography, https://dblp.org}
}

@article{NeuB16,
  author       = {Gergely Neu and
                  G{\'{a}}bor Bart{\'{o}}k},
  title        = {Importance Weighting Without Importance Weights: An Efficient Algorithm
                  for Combinatorial Semi-Bandits},
  journal      = {J. Mach. Learn. Res.},
  volume       = {17},
  pages        = {154:1--154:21},
  year         = {2016},
  url          = {https://jmlr.org/papers/v17/15-091.html},
  bibsource    = {dblp computer science bibliography, https://dblp.org}
}

@inproceedings{PerraultPV19,
  author       = {Pierre Perrault and
                  Vianney Perchet and
                  Michal Valko},
  editor       = {Kamalika Chaudhuri and
                  Ruslan Salakhutdinov},
  title        = {Exploiting structure of uncertainty for efficient matroid semi-bandits},
  booktitle    = {Proceedings of the 36th International Conference on Machine Learning,
                  {ICML} 2019, 9-15 June 2019, Long Beach, California, {USA}},
  series       = {Proceedings of Machine Learning Research},
  volume       = {97},
  pages        = {5123--5132},
  publisher    = {{PMLR}},
  year         = {2019},
  url          = {http://proceedings.mlr.press/v97/perrault19a.html},
  bibsource    = {dblp computer science bibliography, https://dblp.org}
}

@article{NiazadehGWSB_MS23,
  author       = {Rad Niazadeh and
                  Negin Golrezaei and
                  Joshua R. Wang and
                  Fransisca Susan and
                  Ashwinkumar Badanidiyuru},
  title        = {Online Learning via Offline Greedy Algorithms: Applications in Market
                  Design and Optimization},
  journal      = {Manag. Sci.},
  volume       = {69},
  number       = {7},
  pages        = {3797--3817},
  year         = {2023},
  url          = {https://doi.org/10.1287/mnsc.2022.4558},
  doi          = {10.1287/MNSC.2022.4558},
  bibsource    = {dblp computer science bibliography, https://dblp.org}
}

@inproceedings{FouratiQAA24,
  author       = {Fares Fourati and
                  Christopher John Quinn and
                  Mohamed{-}Slim Alouini and
                  Vaneet Aggarwal},
  editor       = {Michael J. Wooldridge and
                  Jennifer G. Dy and
                  Sriraam Natarajan},
  title        = {Combinatorial Stochastic-Greedy Bandit},
  booktitle    = {Thirty-Eighth {AAAI} Conference on Artificial Intelligence, {AAAI}
                  2024, Thirty-Sixth Conference on Innovative Applications of Artificial
                  Intelligence, {IAAI} 2024, Fourteenth Symposium on Educational Advances
                  in Artificial Intelligence, {EAAI} 2014, February 20-27, 2024, Vancouver,
                  Canada},
  pages        = {12052--12060},
  publisher    = {{AAAI} Press},
  year         = {2024},
  url          = {https://doi.org/10.1609/aaai.v38i11.29093},
  doi          = {10.1609/AAAI.V38I11.29093},
  bibsource    = {dblp computer science bibliography, https://dblp.org}
}

@book{Dubhashi_Panconesi_2009, 
    location={Cambridge},
    title={Concentration of Measure for the Analysis of Randomized Algorithms},
    publisher={Cambridge University Press}, 
    author={Dubhashi, Devdatt P. and Panconesi, Alessandro}, 
    year={2009},
}

@article{DBLP:journals/jmlr/Even-DarMM06,
  author       = {Eyal Even{-}Dar and
                  Shie Mannor and
                  Yishay Mansour},
  title        = {Action Elimination and Stopping Conditions for the Multi-Armed Bandit
                  and Reinforcement Learning Problems},
  journal      = {J. Mach. Learn. Res.},
  volume       = {7},
  pages        = {1079--1105},
  year         = {2006},
  url          = {https://jmlr.org/papers/v7/evendar06a.html},
  bibsource    = {dblp computer science bibliography, https://dblp.org}
}

@inproceedings{DBLP:conf/colt/Even-DarMM02,
  author       = {Eyal Even{-}Dar and
                  Shie Mannor and
                  Yishay Mansour},
  editor       = {Jyrki Kivinen and
                  Robert H. Sloan},
  title        = {{PAC} Bounds for Multi-armed Bandit and Markov Decision Processes},
  booktitle    = {Computational Learning Theory, 15th Annual Conference on Computational
                  Learning Theory, {COLT} 2002, Sydney, Australia, July 8-10, 2002,
                  Proceedings},
  series       = {Lecture Notes in Computer Science},
  volume       = {2375},
  pages        = {255--270},
  publisher    = {Springer},
  year         = {2002},
  url          = {https://doi.org/10.1007/3-540-45435-7\_18},
  doi          = {10.1007/3-540-45435-7\_18},
  bibsource    = {dblp computer science bibliography, https://dblp.org}
}

@inproceedings{BubeckCK12,
  author       = {S{\'{e}}bastien Bubeck and
                  Nicol{\`{o}} Cesa{-}Bianchi and
                  Sham M. Kakade},
  editor       = {Shie Mannor and
                  Nathan Srebro and
                  Robert C. Williamson},
  title        = {Towards Minimax Policies for Online Linear Optimization with Bandit
                  Feedback},
  booktitle    = {{COLT} 2012 - The 25th Annual Conference on Learning Theory, June
                  25-27, 2012, Edinburgh, Scotland},
  series       = {{JMLR} Proceedings},
  volume       = {23},
  pages        = {41.1--41.14},
  publisher    = {JMLR.org},
  year         = {2012},
  url          = {http://proceedings.mlr.press/v23/bubeck12a/bubeck12a.pdf},
  bibsource    = {dblp computer science bibliography, https://dblp.org}
}

@inproceedings{DaniKH2007,
 author = {Dani, Varsha and Kakade, Sham M and Hayes, Thomas},
 booktitle = {Advances in Neural Information Processing Systems},
 pages = {},
 publisher = {Curran Associates, Inc.},
 title = {The Price of Bandit Information for Online Optimization},
 url = {https://proceedings.neurips.cc/paper_files/paper/2007/file/bf62768ca46b6c3b5bea9515d1a1fc45-Paper.pdf},
 volume = {20},
 year = {2007}
}

@article{Smith1956,
author = {Smith, Wayne E.},
title = {Various optimizers for single-stage production},
journal = {Naval Research Logistics Quarterly},
volume = {3},
number = {1-2},
pages = {59-66},
doi = {https://doi.org/10.1002/nav.3800030106},
url = {https://onlinelibrary.wiley.com/doi/abs/10.1002/nav.3800030106},
eprint = {https://onlinelibrary.wiley.com/doi/pdf/10.1002/nav.3800030106},
year = {1956}
}

@article{Rothkopf1966,
 ISSN = {00251909, 15265501},
 URL = {http://www.jstor.org/stable/2627947},
 author = {Michael H. Rothkopf},
 journal = {Management Science},
 number = {9},
 pages = {707--713},
 publisher = {INFORMS},
 title = {Scheduling with Random Service Times},
 urldate = {2024-08-13},
 volume = {12},
 year = {1966}
}

@misc{glasgow2023tight,
      title={Tight Bounds for $\gamma$-Regret via the Decision-Estimation Coefficient}, 
      author={Margalit Glasgow and Alexander Rakhlin},
      year={2023},
      eprint={2303.03327},
      archivePrefix={arXiv},
      primaryClass={cs.LG}
}

@book{lattimore2020bandit, 
    place={Cambridge}, 
    title={Bandit Algorithms}, 
    publisher={Cambridge University Press}, 
    author={Lattimore, Tor and Szepesvári, Csaba}, 
    year={2020}
}

@inproceedings{DBLP:conf/spaa/LindermayrM22,
  author       = {Alexander Lindermayr and
                  Nicole Megow},
  editor       = {Kunal Agrawal and
                  I{-}Ting Angelina Lee},
  title        = {Permutation Predictions for Non-Clairvoyant Scheduling},
  booktitle    = {{SPAA} '22: 34th {ACM} Symposium on Parallelism in Algorithms and
                  Architectures, Philadelphia, PA, USA, July 11 - 14, 2022},
  pages        = {357--368},
  publisher    = {{ACM}},
  year         = {2022},
  url          = {https://doi.org/10.1145/3490148.3538579},
  doi          = {10.1145/3490148.3538579},
  bibsource    = {dblp computer science bibliography, https://dblp.org}
}

@inproceedings{DBLP:conf/icml/YangRS0DS22,
  author       = {Shuo Yang and
                  Tongzheng Ren and
                  Sanjay Shakkottai and
                  Eric Price and
                  Inderjit S. Dhillon and
                  Sujay Sanghavi}, 
  title        = {Linear Bandit Algorithms with Sublinear Time Complexity},
  booktitle    = {International Conference on Machine Learning, {ICML} 2022, 17-23 July
                  2022, Baltimore, Maryland, {USA}},
  series       = {Proceedings of Machine Learning Research},
  volume       = {162},
  pages        = {25241--25260},
  publisher    = {{PMLR}},
  year         = {2022},
  url          = {https://proceedings.mlr.press/v162/yang22m.html},
  bibsource    = {dblp computer science bibliography, https://dblp.org}
}

@article{DBLP:journals/ml/AuerCF02,
  author       = {Peter Auer and
                  Nicol{\`{o}} Cesa{-}Bianchi and
                  Paul Fischer},
  title        = {Finite-time Analysis of the Multiarmed Bandit Problem},
  journal      = {Mach. Learn.},
  volume       = {47},
  number       = {2-3},
  pages        = {235--256},
  year         = {2002},
  url          = {https://doi.org/10.1023/A:1013689704352},
  doi          = {10.1023/A:1013689704352},
  bibsource    = {dblp computer science bibliography, https://dblp.org}
}

@book{frank2011connections,
  title={Connections in combinatorial optimization},
  author={Frank, Andr{\'a}s},
  volume={38},
  year={2011},
  publisher={Oxford University Press Oxford}
}

@inproceedings{arya2001local,
  title={Local search heuristic for k-median and facility location problems},
  author={Arya, Vijay and Garg, Naveen and Khandekar, Rohit and Meyerson, Adam and Munagala, Kamesh and Pandit, Vinayaka},
  booktitle={Proceedings of the thirty-third annual ACM symposium on Theory of computing},
  pages={21--29},
  year={2001}
}

@inproceedings{cormode2008approximation,
  title={Approximation algorithms for clustering uncertain data},
  author={Cormode, Graham and McGregor, Andrew},
  booktitle={Proceedings of the twenty-seventh ACM SIGMOD-SIGACT-SIGART symposium on Principles of database systems},
  pages={191--200},
  year={2008}
}

@inproceedings{DBLP:conf/spaa/BenadeDL25,
  author       = {Gerdus Benad{\`{e}} and
                  Rathish Das and
                  Thomas Lavastida},
  title        = {Brief Announcement: Stochastic Parallel Scheduling with Bandit Feedback},
  booktitle    = {Proceedings of the 37th {ACM} Symposium on Parallelism in Algorithms
                  and Architectures, {SPAA} 2025, Portland, OR, USA, 28 July 2025 -
                  1 August 2025},
  pages        = {618--622},
  publisher    = {{ACM}},
  year         = {2025},
  url          = {https://doi.org/10.1145/3694906.3743344},
  doi          = {10.1145/3694906.3743344},
  bibsource    = {dblp computer science bibliography, https://dblp.org}
}

\end{document}